%% file: main_paper.tex
\documentclass{article}

\usepackage[final, eandd, nonatbib]{neurips_2026}

\usepackage[utf8]{inputenc} % allow utf-8 input
\usepackage[T1]{fontenc}    % use 8-bit T1 fonts
\usepackage{hyperref}       % hyperlinks
\usepackage{url}            % simple URL typesetting
\usepackage{booktabs}       % professional-quality tables
\usepackage{amsfonts}       % blackboard math symbols
\usepackage{nicefrac}       % compact symbols for 1/2, etc.
\usepackage{microtype}      % microtypography
\usepackage{xcolor}         % colors
\usepackage[maxbibnames=99]{biblatex}
\usepackage{adjustbox}
\usepackage[ruled,vlined]{algorithm2e}
\usepackage{amsmath}
\usepackage{amsthm}
\usepackage{subcaption} % Required for subfigures

\usepackage{makecell}
\usepackage{wrapfig}
\usepackage{cleveref}
\usepackage{enumitem}
\graphicspath{{./}{Formatting_Instructions_For_NeurIPS_2026/}}

\theoremstyle{plain}
\newtheorem{theorem}{Theorem}[section]
\newtheorem{proposition}[theorem]{Proposition}

\theoremstyle{definition}
\newtheorem{definition}[theorem]{Definition}
\newtheorem{example}[theorem]{Example}

\theoremstyle{remark}

\Crefname{obs}{Observation}{Observations}
\Crefname{obs}{Observation}{Observations}

\newcommand{\R}{\mathbb{R}}
\newcommand{\N}{\mathbb{N}}
\newcommand{\TRIP}{\textsc{TRIP}}
\newcommand{\GRIP}{\textsc{GRIP}}
\newcommand{\Prob}{\mathbb{P}}
\newcommand{\Hent}{\mathrm{H}}
\newcommand{\Nh}{\mathcal{N}}

\title{Get a GRIP, this will be a long TRIP: A Quantifiable Long-Range Framework for Verifying Over-squashing}

\author{%
  \textbf{Ferran Hernandez Caralt}\textsuperscript{1}\quad
  \textbf{Simon Heilig}\textsuperscript{2}\quad
  \textbf{Adrián Bazaga}\textsuperscript{3}\\
  \textbf{Asja Fischer}\textsuperscript{2}\quad
  \textbf{Moshe Eliasof}\textsuperscript{4}\quad
  \textbf{Pietro Li\`{o}}\textsuperscript{1}\\[1.2ex]
  \textsuperscript{1}Department of Computer Science and Technology, University of Cambridge, UK\\
  \textsuperscript{2}Faculty of Computer Science, Ruhr University Bochum, Germany\\
  \textsuperscript{3}Microsoft, London, UK\\
  \textsuperscript{4}Department of Applied Mathematics, University of Cambridge, UK\\[1.2ex]
  \texttt{fh455@cam.ac.uk}, \texttt{simon.heilig@rub.de}, \texttt{adrianbazaga@microsoft.com},\\
  \texttt{asja.fischer@rub.de}, \texttt{me532@cam.ac.uk}, \texttt{pl219@cam.ac.uk}
}

\begin{document}

\maketitle

\begin{abstract}
Empirical claims about the connection between over-squashing and long-range interactions in GNNs, can only be trusted if the benchmarks used to validate them genuinely require long-range interactions. The de-facto standard, the Long Range Graph Benchmark, has been repeatedly shown to be saturated by tuned short-range models, with existing synthetic alternatives being tied to specific topologies. As such, there is a lack of principled certificate of long-rangedness on \emph{arbitrary} graphs. This state reflects the absence of a precise characterization of long-ranged benchmarks. We address this fundamental gap by introducing four verifiable \emph{axioms}: \emph{Predictability}, \emph{Tightness}, \emph{Strictly $k$-Range}, and \emph{Topology-Invariance}, that any task claiming to test $k$-hop interactions must satisfy. We formally prove that violating any one of them admits failure modes that undermine conclusions drawn from the task. Based on these axioms, we introduce \TRIP{} (\emph{Truly Ranged Interactions Problem}) and its generalisation \GRIP{} (\emph{Generally Ranged Interactions Problem}), constructive procedures that turn \emph{any} graph into a provably long-ranged task by drawing features from \emph{stable} distributions. Moreover, by construction, \GRIP{} admits a closed-form, per-range Maximum-Likelihood oracle that yields the first \emph{a priori per-range} lower bound on test error available on any benchmark. Using our framework, we: (i) audit 4 common long-range benchmarks and identify their failures modes with respect to our axioms; (ii) on \TRIP-instantiated topologies, we find a popular notion of curvature is uncorrelated with GNN performance, supporting topological-vs-computational bottleneck distinction; and (iii) we show that a novel benchmark's over-squashing measures factors beyond pure long-rangedness. Code to use the framework and reproduce experiments is released \href{https://github.com/ferranhernandezc/graph-grip}{https://github.com/ferranhernandezc/graph-grip}.
\end{abstract}

\section{Introduction}
\label{sec:intro}

The dominant paradigm in graph representation learning, message-passing GNNs, propagates information by iteratively aggregating features from one-hop neighbors \cite{kipf2017semi,xupowerful}. This local inductive bias is highly effective in domains ranging from molecular property prediction to network optimisation \cite{shen2022graph,corso2024graph,wettewa2024graph,du2024densegnn}. However, it is also known to be limited when learning depends on the communication of  \emph{distant} nodes -- a failure mode linked to \emph{over-squashing} \cite{alon2020bottleneck,topping2022understanding,di2023over,black2023understanding}. A sizeable line of work has proposed architectures designed to mitigate over-squashing: graph rewiring \cite{topping2022understanding,gutteridge2023drew}, dynamical-systems-inspired layers \cite{gravinaanti,gravina2025oversquashing,heiligport,eliasof2025tango,bamberger2024bundle}, and graph transformers \cite{shirzad2023exphormer,toenshoff2021graph}. While these methodologies are different in their approach, the common thread is that the alleviation of over-squashing is validated by empirical gains on long-range benchmarks.

\paragraph{The Benchmarking Gap.} While many graph benchmarks exist, as discussed later, the core shortcoming is that there persists a lack of a principled way to verify that a real-world benchmark genuinely require long-range modeling. The Long Range Graph Benchmark (LRGB) \cite{dwivedi2022long}, one of the most common benchmarks, has now been thoroughly studied multiple times. In particular, it was shown that careful hyperparameter tuning and stronger final readout layers allow classical GNNs to match or exceed the performance of long-range tailored methods \cite{tonshoff2023did,luo2024classic,luo2025can}. Moreover, \cite{bambergermeasuring} introduced a \emph{post-hoc} Jacobian-based range measure, showing that the highest-performing models on LRGB use an effective range of at most $\approx 2$, i.e., not long-range interactions. Another avenue is synthetic alternatives, that rely on rings \cite{di2023over,black2023understanding}, periodic grids \cite{mathyslrim} and DAGs \cite{zhou2025glora}, and graph motifs \cite{miglior2025can}. These benchmarks improved the connection between over-squashing and long-range due to their construction, however, they also remain limited to specific, pre-defined and synthetic topologies. Therefore, the empirical evidence underpinning a substantial fraction of the over-squashing and long-ranged literature rests on benchmarks whose long-rangedness is unverified or limited to specific cases.

\paragraph{Our Approach.} To address the fundamental limitations discussed above, we seek to address the following question: given a graph task, can we \emph{a priori} certify that it requires range $k$, on \emph{any} graph? To fully answer this question, in this paper, we introduce the following:
\begin{itemize}[leftmargin=2em]
    \item \textbf{An axiomatic characterization of long-range tasks.} We state four verifiable axioms: \emph{Predictability}, \emph{Tightness}, \emph{Strictly $k$-Range}, \emph{Topology-Invariance}, and prove that any violation admits a benchmark in which sub-$k$-range models can solve a $k$-range task (\Cref{sec:axioms}).
    \item \textbf{A constructive framework that provably satisfies all four axioms.} \TRIP{} produces a $k$-range task on any graph by drawing features from a stable distribution; \GRIP{} composes \TRIP{} sub-problems for richer multi-scale tasks (\Cref{sec:tripgrip}).
    \item \textbf{A per-range MLE oracle.} For any \GRIP-generated task we derive a closed-form Maximum-Likelihood predictor conditioned on the $k$-hop neighborhood. To our knowledge this is the first \emph{a priori} lower bound on test error available for any long-range graph benchmark (\Cref{thm:mle}), thus enabling direct quantification of how much information GNNs can extract from certain ranges relative to the known optimal.
    \item \textbf{An audit of existing long-range benchmarks.} We identify the specific axiom that LRGB \cite{dwivedi2022long}, LRIM \cite{mathyslrim}, GLoRa \cite{zhou2025glora} and ECHO \cite{miglior2025can} violate (\Cref{sec:audit}).
    \item \textbf{Two empirical findings enabled by the framework.} On \TRIP-instantiated topologies, the curvature described in \cite{topping2022understanding} is uncorrelated with GNN error, supporting the topological-vs-computational bottleneck distinction of \cite{arnaiz2025oversmoothing} (\Cref{sb-sct:curvature}). On \TRIP-instantiated problems and the topologies introduced in \cite{miglior2025can}, the model developed in \cite{heiligport} dominates all Message Passing baselines -- despite not being dominant on the original tasks -- suggesting that the original tasks measure over-squashing influenced by factors other than long-rangedness (\Cref{sec:echo-trip}).
\end{itemize}

\section{Related Work}
\label{sec:related}

\paragraph{Empirical Long-Range Benchmarks.} LRGB \cite{dwivedi2022long} popularized the use of average graph diameter and domain knowledge as proxies for long-rangedness. Using it, more complex methods were proposed and validated on it \cite{eliasof2025tango, gravina2025oversquashing, arroyo2025vanishing, bamberger2024bundle, heiligport}. But these were followed by stronger baselines \cite{tonshoff2023did,luo2024classic,luo2025can}. Differently, the recent LRIM \cite{mathyslrim} replaces real data with the Ising model on periodic grids and provides a lower bound for short-range predictors; GLoRa \cite{zhou2025glora} introduces synthetic source-to-sink reachability tasks on DAGs; ECHO \cite{miglior2025can} considers synthetic shortest-path-based tasks, and DFT-based molecular tasks. Overall, all four benchmark suites address the over-squashing problem with different design choices, but none satisfy a uniform, consistent definition of long-rangedness across topologies.

\paragraph{Theoretical Measures of Over-squashing.} Sensitivity-based analyses \cite{topping2022understanding,di2023over,black2023understanding} bound the Jacobian $\partial h^{(L)}_v/\partial h^{(0)}_u$ (the hidden representation of node $v$ at layer $L$, $h^{(L)}_v$ w.r.t the one of node $u$ at layer $0$, $h^{(0)}_u$) in terms of model parameters and graph topology, identifying low Cheeger constant or negative balanced-Forman curvature as topological bottlenecks \cite{topping2022understanding}. \cite{arnaiz2025oversmoothing} draw a sharper distinction between \emph{topological bottlenecks} and \emph{computational bottlenecks} (vanishing gradients, ill-conditioned dynamics), and \cite{arroyo2025vanishing} argue the latter is the dominant driver in practice. \cite{bambergermeasuring, liang2025towards} introduce the Jacobian-based range measure of a function $\mathbf{F}$ on the features $\mathbf{X}$ of a graph $G = (V,E)$.
\begin{equation}
\hat{\rho}_u(\mathbf{F}) := \frac{\sum_{v \in V} \left| \tfrac{\partial (\mathbf{F}(\mathbf{X}))_u}{\partial \mathbf{x}_v} \right|\, d_G(u,v)}{\sum_{v \in V} \left| \tfrac{\partial (\mathbf{F}(\mathbf{X}))_u}{\partial \mathbf{x}_v} \right|},
\label{eq:bamberger-range}
\end{equation}
and use it to certify that high-performing LRGB models operate at range $\approx 2$. \emph{Our framework is complementary}: we provide \emph{a priori} task-design conditions guaranteeing minimum range, then use \Cref{eq:bamberger-range} to verify that GNNs trained on \TRIP{}/\GRIP{} indeed learn at the predicted range, and to expose a previously unnoticed averaging artifact of the measure, as discussed in \Cref{sec:tripgrip}. 

\paragraph{Architectural Mitigations of Over-squashing.} Approaches to alleviate over-squashing include curvature-aware rewiring \cite{topping2022understanding}, multi-hop and dynamic rewiring \cite{gutteridge2023drew}, dynamical-systems-inspired layers \cite{gravinaanti,gravina2025oversquashing,heiligport,eliasof2025tango,bamberger2024bundle}, and graph transformers \cite{shirzad2023exphormer,toenshoff2021graph}. A common and persistent limitation of this vast array of approaches is the available benchmarks, they lack a certification of whether a reported improvement reflects long-range modeling, normalization, read-out capacity \cite{tonshoff2023did}, or trainability \cite{arroyo2025vanishing,scholkemper2024residual}. The framework developed in our work, discussed in \Cref{sec:axioms,sec:tripgrip},  is designed to address this limitation by construction.

\section{Axiomatic Specification of Long-range Tasks}
\label{sec:axioms-section}
In this Section we present four axioms that characterize long-range tasks in graph learning. We begin by providing notation used throughout the paper, followed by presenting the axioms in \Cref{sec:axioms}, and we discuss their  properties and necessity in \Cref{sec:axioms_discussion}.

\paragraph{Notation.} Let $G = (V,E)$ denote a finite undirected graph. We use $x_u \in \R^{d_x}$ and $y_u \in \R^{d_y}$ for the (random) features and targets of node $u \in V$, with capitals $X_u, Y_u$ for the random variables. $\Hent$ denotes (conditional) entropy and $\mathrm{I}(\cdot;\cdot)$ mutual information. Given a node $u$, we write $G_u^k \subset G$ with nodes $V_u^k = \{v \in V : d_G(u,v) \le k\}$ for the $k$-hop closed neighborhood, and overload $G_u^k$ to denote the random variables including $\{X_v : v \in G_u^k\} \bigcup E_u^k$. % in particular, $Y_u \mid G_u^k$ should be read as $Y_u \mid \{X_v : v \in G_u^k\} \bigcup E_u^k$.
We write $\Nh_k(u) = \{v : d_G(u,v) = k\}$ for the $k$-hop \emph{boundary}. Finally, $\mathcal{P}_{<\infty}(\R)$ denotes finite multisets of $\R$.

\subsection{Characterizing Long-range Tasks}
\label{sec:axioms}

A task on graphs is a family of joint distributions $\{(X_u, Y_u)_{u \in V}\}$ indexed by graphs $G$. This statistical formulation is used to reflect that GNNs are statistical approximators, learning from data distributions. So, we argue that to benchmark the $k$-ranged capabilities of GNNs, tasks should require $k$-ranged information to be approximated, not solved \footnote{More details in Appendix \ref{app:axiom-theory}}. So, we say it is a \emph{$k$-range task} if it satisfies all of:

\begin{enumerate}[leftmargin=2em]
    \item \textbf{Predictability.} $Y_u = g(G_u^k)$ for some measurable $g$, for every $u$ and every $G$.
    \item \textbf{Tightness.} The family $\{Y_u : G,\, u \in V(G)\}$ is uniformly tight: for every $\delta > 0$ there exists $M < \infty$ such that $\Prob(\|Y_u\| > M) < \delta$ for all $G$ and all $u$.
    \item \textbf{Strictly $k$-Range.} $\mathrm{I}(Y_u\,;\,G_u^{k'}) = 0$ for every $k' < k$, and $\mathrm{I}(Y_u\,;\,G_u^k) > 0$. Equivalently, $\Hent(Y_u \mid G_u^{k'}) = \Hent(Y_u)$ for $k' < k$ and $\Hent(Y_u \mid G_u^k) < \Hent(Y_u)$.
    \item \textbf{Topology-Invariance.} For every $k' < k$ and every pair of graphs $G, \tilde G$ and nodes $u \in V(G), \tilde u \in V(\tilde G)$, $\Hent(Y_u \mid G_u^{k'}) = \Hent(Y_{\tilde u} \mid \tilde G_{\tilde u}^{k'})$.
\end{enumerate}

\paragraph{What each axiom rules out.} Axiom~1 is a sanity check; the proofs below extend straightforwardly to probabilistic $g$. Axiom~2 (\emph{Tightness}), the standard notion of distributional boundedness, rules out tasks whose targets blow up as the graph grows; we deliberately avoid the stronger ``almost-sure boundedness'' since that would exclude useful unbounded distributions like the Gaussian and the Cauchy. Axiom~3 (\emph{Strictly $k$-Range}) is the substantive long-range condition: \emph{no} sub-$k$-hop neighborhood carries any information about $Y_u$, so any model whose effective range is $< k$ is reduced to predicting $Y_u$ as a single random variable, so it cannot beat the constant-mean baseline. Axiom~4 (\emph{Topology-Invariance}) ensures that the conditional information content available below the target range is identical across nodes and graphs; without it, the same nominal task can be trivial on dense graphs and intractable on sparse ones, voiding cross-topology comparison.

\paragraph{What each axiom enables.} Together, Axioms~3 and 4 imply that $Y_u$ does not depend on $G_u^{k'}$ for $k' < k$, so the entropy $\Hent(Y_u)$ and the per-range MLE error are graph-independent constants determined only by the task-design parameters. This is what makes test errors on different graphs directly comparable, a property no existing long-range benchmark provides.

% \begin{remark}[Topological vs.\ effective range]
% \label{rem:eff-range}
% Strict $k$-rangedness is defined relative to the graph metric $d_G$ (here, shortest-path distance). It does \emph{not} preclude solving the task with fewer than $k$ \emph{layers} of a transformer with full attention, since a single attention head can reach the boundary of $\Nh_k(u)$ in one step. The axioms specify the topological range of the data; a model's \emph{effective} range relative to the bound \Cref{eq:bamberger-range} is a separate, model-dependent quantity \cite{bambergermeasuring, liang2025towards}.
% \end{remark}

\subsection{Theoretical Properties}
\label{sec:axioms_discussion}

In this Section we discuss the theoretical properties of the 4 proposed axioms in our framework. Notably, we show that not satisfying any one of Axioms~1--4 results in a benchmark whose performance gains can not be attributed to long-range modeling. We provide complete proofs to the theoretical results presented in this Section, in Appendix \ref{app:proofs} and more detailed explanations of the results in Appendix \ref{app:prop-theory}. 

\begin{proposition}[Tightness is necessary]
\label{prop:axiom2}
Fix $k > 1$ and let $X \sim \mathcal{N}(0,1)$. Let $\{G_n\}$ be a sequence of graphs in which every node $u$ satisfies $|\Nh_k(u)| \ge n$, and define $Y_u = \max_{v \in \Nh_k(u)} X_v$. The construction satisfies Axioms~1, 3, and 4, but the family $\{Y_u\}$ is not tight: for every $M \in \R$, $\Prob(Y_u > M) \to 1$ as $n \to \infty$.\footnote{If the underlying distribution is instead bounded, Axiom~4 fails; see Appendix \ref{app:proofs}.}
\end{proposition}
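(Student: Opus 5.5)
The plan is to verify Axioms~1, 3 and 4 directly and then prove non-tightness by an order-statistics estimate. Throughout, the node features $X_v$ are taken i.i.d.\ $\mathcal{N}(0,1)$, independent across nodes, and the graph $G_n$ is fixed (deterministic), so conditioning on edges carries no information.

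\textbf{Axiom~1.} The nodes of $\Nh_k(u)$ lie in $G_u^k$. Hence $Y_u = \max_{v\in\Nh_k(u)} X_v$ is a measurable function of the features in $G_u^k$ (a finite maximum of coordinates). We may take $g$ to be the max over the multiset of features sitting at distance exactly $k$, which is read off from $E_u^k$.

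\textbf{Axiom~3.} For $k'<k$, $G_u^{k'}$ involves only $\{X_v : d_G(u,v)\le k'\}$, and these nodes are disjoint from $\Nh_k(u)$. By independence of the $X_v$, $Y_u$ is independent of $G_u^{k'}$, so $\mathrm{I}(Y_u;G_u^{k'})=0$. For $k'=k$, $Y_u$ is a deterministic function of $G_u^k$ and has a continuous nondegenerate law, so the mutual information is positive. Formally it is $+\infty$ in the differential sense, or positive under any reasonable discretisation. I will phrase this as $\Hent(Y_u\mid G_u^k)<\Hent(Y_u)$, using the paper's convention.

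\textbf{Axiom~4.} For $k'<k$, independence gives $\Hent(Y_u\mid G_u^{k'})=\Hent(Y_u)$. This is the entropy of the maximum of $m_u=|\Nh_k(u)|$ i.i.d.\ Gaussians, whose density is $m_u\phi\Phi^{m_u-1}$. It depends on $m_u$, so it is not automatically equal across nodes and graphs. This is the step I expect to be the main obstacle. I would first note the intended reading: the conditional entropy reduces to an unconditional one with no dependence on the sub-$k$ structure. If literal equality across all graphs is required, I would strengthen the construction by requiring every node of every $G_n$ to have $|\Nh_k(u)|$ equal to a common value $m_n\ge n$. Within each $G_n$ the entropy is then identical for all $u$. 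For cross-graph comparison, the dependence is only through $m_n$, which is the growth parameter driving the counterexample.

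\textbf{Non-tightness.} Write $m=|\Nh_k(u)|\ge n$. Then
\[
\Prob(Y_u\le M)=\Phi(M)^{m}\le \Phi(M)^n .
\]
For any fixed $M\in\R$ we have $\Phi(M)<1$, so this bound tends to $0$ as $n\to\infty$. Therefore $\Prob(Y_u>M)\to1$. Consequently, for any $\delta<1$, no single $M$ satisfies $\Prob(\|Y_u\|>M)<\delta$ uniformly over the family, and uniform tightness fails.
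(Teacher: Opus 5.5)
Your arguments for Axioms~1 and~3 and for non-tightness are correct and essentially match the paper's: its bound $1-(1-\Prob(X>M))^n\to1$ is your $\Phi(M)^{m}\le\Phi(M)^n\to0$.

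The genuine gap is Axiom~4, and it cannot be closed. You correctly observe that $\Hent(Y_u\mid G_u^{k'})=\Hent(Y_u)$ is the entropy of the maximum of $m_u=|\Nh_k(u)|$ standard Gaussians, and that it depends on $m_u$. Neither of your workarounds establishes the axiom, however:
\begin{itemize}
\item The ``intended reading'' (conditional entropy equals unconditional entropy) is the content of Axiom~3, not Axiom~4.
\item Forcing $|\Nh_k(u)|=m_n$ within each $G_n$ gives equality only inside a single graph. Axiom~4 quantifies over all pairs $G,\tilde G$, and the hypothesis $|\Nh_k(u)|\ge n$ forces $m_n\to\infty$ along the family.
\end{itemize}
In fact Axiom~4 is false for every family satisfying the hypothesis. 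We have $\mathrm{Var}(\max_{i\le m}X_i)\sim\pi^2/(12\ln m)\to0$. By the Gaussian maximum-entropy bound, $\Hent(Y_u)\le\tfrac12\ln\bigl(2\pi e\,\mathrm{Var}(Y_u)\bigr)\to-\infty$ along the family. Each individual $\Hent(Y_u)$ is finite, so no common value can exist. The values differ already for $m=1$ and $m=2$, since $\mathrm{Var}(\max(X_1,X_2))=1-1/\pi<1$.

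The paper's own proof does not close this step either. It asserts that, by Proposition~\ref{prop:gumbel1} applied to the Gumbel limit, the law of $Y_u$ changes only by a shift. That identity is exact only for Gumbel inputs. For Gaussian inputs the extreme-value limit needs centering \emph{and} rescaling by $b_m=(2\ln m)^{-1/2}\to0$. This is exactly the variance collapse the paper itself uses for bounded supports in Proposition~\ref{prop:max-finite}.

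The statement becomes true if you replace $\mathcal N(0,1)$ by $X\sim\mathrm{Gumbel}(\mu,\beta)$:
\begin{itemize}
\item Then $Y_u\sim\mathrm{Gumbel}(\mu+\beta\ln m_u,\beta)$ exactly. Its entropy $1+\ln\beta+\gamma_{\mathrm{EM}}$, with $\gamma_{\mathrm{EM}}$ the Euler--Mascheroni constant, does not depend on $m_u$, so Axiom~4 holds across all graphs.
\item Non-tightness follows from $\Prob(Y_u\le M)=F(M)^{m_u}\le F(M)^n\to0$.
\item Your arguments for Axioms~1 and~3 carry over unchanged.
\end{itemize}
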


This Proposition implies that, if Axiom~3 is not satisfied graphs of increasing size may be built such that the norm of their predictions will become arbitrarily large. This forbids the study of GNN generalization over varying graph sizes.

\begin{proposition}[Strict $k$-Range is necessary]
\label{prop:axiom3}
Suppose a task satisfies Axioms~1, 2, 4 with target range $k$, but there exists $i < k$ with $\Hent(Y_u \mid G_u^i) < \Hent(Y_u \mid G_u^{i-1})$. Let $f^\star_i$ be the optimal range-$i$ predictor under any continuous error measure $\mathrm{err}$. Considering a finite set of known range-$k$ functions, and $f$ the one where the error measure is minimum, whenever $\mathrm{err}(f) > \mathrm{err}(f^\star_i)$, an improvement attained at range $i$ from $f^\star_{i-1}$ to $f^\star_i$ is reportable as an improvement at range $k$.
\end{proposition}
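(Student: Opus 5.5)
The plan is to make the informal conclusion precise and then prove it by a short chain of inequalities. Fix a finite candidate set $\mathcal{F}_k$ of known range-$k$ predictors, and let $f \in \arg\min_{g \in \mathcal{F}_k} \mathrm{err}(g)$. For each $j \le k$, let $f^\star_j$ be a minimiser of $\mathrm{err}$ over all measurable functions of $G_u^j$. Existence of this minimiser is where continuity of $\mathrm{err}$ is used. By Axioms~1 and 2 the targets are tight, so the infimum is finite. A standard compactness argument on conditional laws, or simply taking the Bayes predictor for the given loss, yields a minimiser.

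\textbf{Step 1: the range-$i$ gain is strict.} The hypothesis $\Hent(Y_u \mid G_u^i) < \Hent(Y_u \mid G_u^{i-1})$ says $\mathrm{I}(Y_u ; G_u^i \mid G_u^{i-1}) > 0$. So the conditional law of $Y_u$ given $G_u^i$ is not a.s.\ equal to its law given $G_u^{i-1}$. I would then argue that the Bayes predictor based on $G_u^i$ does strictly better than the one based on $G_u^{i-1}$, that is, $\mathrm{err}(f^\star_i) < \mathrm{err}(f^\star_{i-1})$. This follows from nestedness: every range-$(i-1)$ function is a range-$i$ function, which gives $\le$. Strictness follows because a range-$i$ predictor can exploit the differing conditional law on a set of positive probability. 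I expect this to be the main obstacle, since strictness needs $\mathrm{err}$ to be \emph{strictly} proper or sensitive to the conditional law. My first attempt is to read ``continuous error measure'' as one whose Bayes act is a continuous, non-constant functional of the conditional law. If that reading is not available, I would add this non-degeneracy as the implicit assumption of the statement.

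\textbf{Step 2: the comparison with range $k$.} By assumption $\mathrm{err}(f) > \mathrm{err}(f^\star_i)$. Combining this with Step 1 gives
\[
\mathrm{err}(f) > \mathrm{err}(f^\star_i), \qquad \mathrm{err}(f^\star_i) < \mathrm{err}(f^\star_{i-1}).
\]
So $f^\star_i$ outperforms every known range-$k$ method in $\mathcal{F}_k$. Since $i < k$, $f^\star_i$ is itself formally a range-$k$ function; nestedness $V_u^i \subseteq V_u^k$ makes it measurable with respect to $G_u^k$. A model realising $f^\star_i$ therefore counts as a valid range-$k$ submission that beats the state of the art $f$. Its improvement over the previous best is attained entirely through information in $G_u^i$: the step from $f^\star_{i-1}$ to $f^\star_i$ uses only hops $\le i$. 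I would make this explicit by writing $f^\star_i = \phi \circ \pi_i$, where $\pi_i : G_u^k \mapsto G_u^i$ is the restriction map.

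\textbf{Step 3: Axioms~1, 2 and 4 do not prevent this.} I would close by noting that none of these axioms constrains intermediate conditional entropies beyond range $i$ strictly. A concrete instance shows the hypotheses are consistent. Take a target $Y_u = X_u' + Z_u$, where $X_u'$ is a stable feature aggregated at distance $i$ and $Z_u$ is aggregated at distance $k$, as in the stable-feature constructions of \Cref{prop:axiom2}. This satisfies Axioms~1, 2 and 4 but violates Axiom~3. The reported range-$k$ gain is then precisely the range-$i$ gain, which shows Axiom~3 is necessary.
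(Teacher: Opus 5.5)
Your proposal follows the paper's own argument: the entropy gap is turned into a strict drop in optimal error from range $i-1$ to range $i$, and because $V_u^i \subseteq V_u^k$ the predictor $f^\star_i$ is formally range-$k$. Combined with $\mathrm{err}(f) > \mathrm{err}(f^\star_i)$, a hypothesis the paper's proof never explicitly invokes, this makes $f^\star_i$ reportable as beating the best known range-$k$ method.

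Your caution in Step 1 is where you are more careful than the paper, which simply asserts $\varepsilon_i < \varepsilon_{i-1}$ from $\Hent(Y_u \mid G_u^i) < \Hent(Y_u \mid G_u^{i-1})$. That step fails for MSE when hops $\le i$ change only the conditional spread of $Y_u$, e.g.\ $Y_u = |S_u|\,Z_u$ with $S_u$ a range-$i$ and $Z_u$ a symmetric range-$k$ stable aggregate; there both conditional means are $0$. So your ``non-constant Bayes functional'' reading is not enough, and you should state the fallback assumption explicitly: the Bayes act must separate the two conditional laws. Also drop Step~0's claim that tightness makes the infimum finite, which Cauchy targets under MSE refute; the statement already presupposes $f^\star_i$.
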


Notably, Proposition \ref{prop:axiom3} is reflected in real-world experiments --- it mirrors the documented shortcomings of LRGB: \cite{tonshoff2023did} and \cite{bambergermeasuring} show that range-2 models match or exceed putative long-range architectures on it.

\begin{proposition}[Topology-Invariance is necessary]
\label{prop:axiom4}
Fix $k > 1$. Sample $x_u \sim X$ i.i.d.\ from any distribution with finite variance, and let $y_u$ be the average of features at distance exactly $k$ from $u$. The task satisfies Axioms~1, 2, 3, but for every $\varepsilon > 0$ there exist graphs $G$ on which $\Hent(Y_u) < \varepsilon$.
\end{proposition}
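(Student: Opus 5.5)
We verify Axioms~1--3 directly from the construction and then exhibit a family of graphs on which the entropy of the target collapses. The task is $Y_u = \frac{1}{|\Nh_k(u)|}\sum_{v\in\Nh_k(u)} X_v$, defined on graphs where $\Nh_k(u)\neq\emptyset$. Write $\mu = \mathbb{E}[X]$ and $\sigma^2 = \mathrm{Var}(X) < \infty$.

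\emph{Axiom~1.} $Y_u$ is a measurable function (an average) of the features in $G_u^k$ together with its edge set, since the edges determine $\Nh_k(u)$. So Predictability holds.

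\emph{Axiom~2.} We have $\mathbb{E}[Y_u] = \mu$ and $\mathrm{Var}(Y_u) = \sigma^2/|\Nh_k(u)| \le \sigma^2$. Chebyshev's inequality then gives $\Prob(|Y_u - \mu| > M) \le \sigma^2/M^2$ uniformly over all $G$ and $u$. Choosing $M$ large, with $|\mu|$ absorbed, yields uniform tightness.

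\emph{Axiom~3.} For $k' < k$, the variables $\{X_v : v \in V_u^{k'}\}$ are disjoint from $\{X_v : v \in \Nh_k(u)\}$. Since the graph is fixed, the edges carry no randomness. Hence $Y_u$ is independent of $G_u^{k'}$ and $\mathrm{I}(Y_u; G_u^{k'}) = 0$. Conversely, $Y_u$ is a deterministic function of $G_u^k$, so $\mathrm{I}(Y_u;G_u^k) = \Hent(Y_u) > 0$ whenever $X$ is non-degenerate. We assume non-degeneracy; a degenerate $X$ makes $\Hent(Y_u)=0$ trivially.

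\emph{Failure of Axiom~4.} Take graphs with $|\Nh_k(u)| = n$ for a fixed node $u$, for instance a star-of-paths: $n$ disjoint paths of length $k$ glued at $u$. Then $Y_u$ is the mean of $n$ i.i.d.\ copies of $X$, with variance $\sigma^2/n \to 0$. The main obstacle is turning vanishing variance into vanishing entropy, since the paper does not fix which entropy notion applies.
\begin{itemize}
\item For differential entropy, the Gaussian maximum-entropy bound gives $\Hent(Y_u) \le \tfrac12\log(2\pi e\,\sigma^2/n) \to -\infty$. This is eventually $< \varepsilon$.
\item For discrete $X$, $Y_u$ concentrates at $\mu$ by the weak law of large numbers. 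Shannon entropy need not then go to zero, because $Y_u$ spreads over many atoms near $\mu$. In this case we interpret entropy through a fixed quantisation/binning of the target space, or simply compare the $k'$-conditional entropies across graphs.
\end{itemize}

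In either setting, $\Hent(Y_u \mid G_u^{k'}) = \Hent(Y_u)$ (by the Axiom~3 step) varies with $n$. This already violates Axiom~4, which demands equality across graphs, and $\Hent(Y_u) < \varepsilon$ for $n$ large.
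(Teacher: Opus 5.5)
Your main line matches the paper's proof: the variance of the boundary average vanishes as $|\Nh_k(u)|$ grows, and the Gaussian maximum-entropy bound then drives the differential entropy to $-\infty$. You do this more cleanly than the paper, using the exact variance $\sigma^2/n$ instead of appealing to the CLT. You also check Axioms~1--3 and give an explicit graph, both of which the paper's proof omits. There is, however, one wrong step.

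In the Axiom~3 verification you write $\mathrm{I}(Y_u;G_u^k)=\Hent(Y_u)>0$ for non-degenerate $X$. This holds for Shannon entropy of a discrete $Y_u$. Your $\varepsilon$-argument, though, lives in the setting where $X$ has a density and $\Hent$ is differential entropy, and there both parts of the claim fail.
\begin{itemize}
\item Because $Y_u$ is a deterministic function of $G_u^k$, one has $\Hent(Y_u\mid G_u^k)=-\infty$ and $\mathrm{I}(Y_u;G_u^k)=+\infty\neq\Hent(Y_u)$.
\item Differential entropy need not be positive. Your own final paragraph proves $\Hent(Y_u)\to-\infty$, which directly contradicts ``$\Hent(Y_u)>0$''.
\end{itemize}
The repair is one line. $Y_u$ is a measurable function of $G_u^k$ with $\mathrm{Var}(Y_u)=\sigma^2/n>0$. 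If it were independent of $G_u^k$, it would be independent of itself and hence a.s.\ constant, so $\mathrm{I}(Y_u;G_u^k)>0$.

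Your discrete-case caveat is justified. It catches an assumption the paper leaves implicit: invoking the Gaussian bound presupposes that $X$ has a density, despite the phrase ``any distribution with finite variance''. But the caveat understates the problem. For any non-degenerate discrete $X$ with finite variance, the CLT gives $\max_y\Prob(Y_u=y)\to 0$, hence
\[
\Hent(Y_u)\ge-\log\max_y\Prob(Y_u=y)\to+\infty .
\]
So with Shannon entropy the clause $\Hent(Y_u)<\varepsilon$ is false, not merely unproven. Your closing ``in either setting'' must therefore be restricted to the continuous case. A fixed binning rescues the claim only when $\mu$ lies in the interior of a cell. What survives for discrete $X$ is the Axiom~4 violation. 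The divergence above supplies exactly the non-constancy in $n$ that your ``compare the conditional entropies'' fallback asserted without argument.
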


The construction in Proposition \ref{prop:axiom4} works by taking $G$ with $|\Nh_k(u)|$ large, so that the average concentrates by the central limit theorem; the marginal entropy of $Y_u$ collapses, even though $Y_u$ formally depends on range-$k$ features. In practice, it implies that if Axiom~4 is not satisfied, the benchmark might be trivial for certain graphs, thus skewing research on long-range interactions models to not account for these graphs.

\section{Auditing Existing Long-range Benchmarks}
\label{sec:audit}

We now use Axioms~1--4 to audit the four most prominent long-range benchmarks, discussed below, and summarized in  \Cref{tab:audit}.
\paragraph{LRGB \cite{dwivedi2022long}.} Three independent re-evaluations \cite{tonshoff2023did,luo2024classic,luo2025can} show that classical short-range GNNs match or exceed self-described long-range methods after careful hyperparameter tuning, and \cite{bambergermeasuring} confirms via \Cref{eq:bamberger-range} that the strongest models operate at effective range $\approx 2$. This is a quantitative violation of Axiom~3 in the sense of Proposition \ref{prop:axiom3}: the optimal range-2 predictor is competitive with SOTA, so reported gains cannot be attributed to long-range modeling.

\paragraph{LRIM \cite{mathyslrim}.} The Ising-model construction on a periodic grid satisfies Axiom~3 by construction (a non-trivial lower bound for short-range predictors is known).  However, since the benchmark is restricted to a single topology family, Axiom~4 cannot be tested -- improvements may not transfer to general graphs.

% \vspace{-0.5cm}
\begin{wrapfigure}[13]{r}[0pt]{0.4\textwidth}
    \vspace{-0.455cm}
    \centering
    \includegraphics[width=0.4\textwidth]{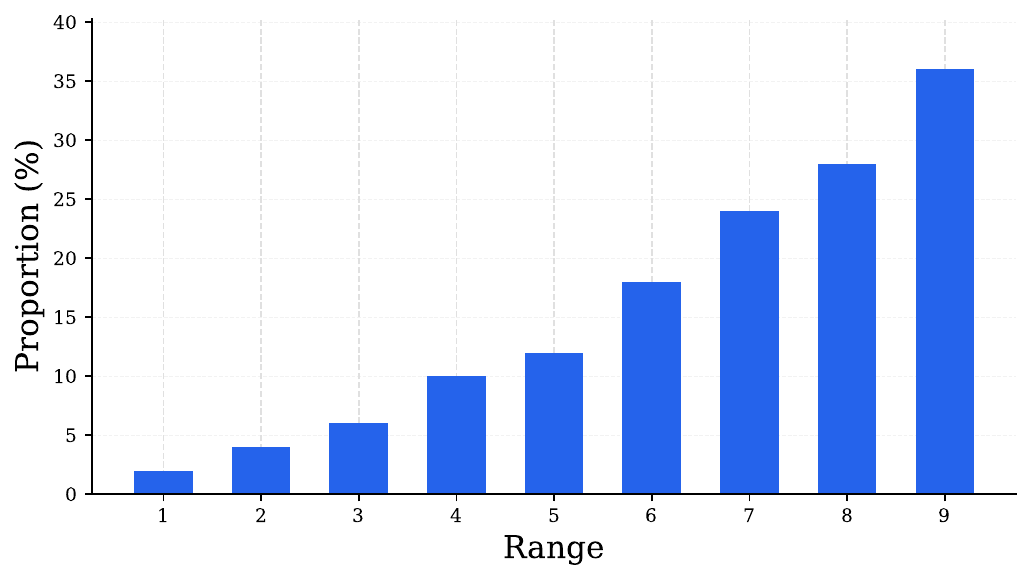}
    \vspace{-0.455cm}
    \caption{Percentage of GLoRa graphs (alleged range $\in [10,15]$, parameters as in \cite{zhou2025glora}) classifiable using only information within range $\le k$ for $k = 1, \ldots, 9$.}
    \label{fig:glora_counter}
\end{wrapfigure}
\paragraph{GLoRa \cite{zhou2025glora}.} A purely synthetic benchmark on directed acyclic graphs with a designated source and sink. Node features come in three classes (source, filler, hole); the task is to detect whether a path exists from source to sink avoiding hole nodes. The work claims that solving this requires range $k$ equal to the source-hole distance. We point out two issues. First, as a synthetic DAG benchmark restricted to one topology family, the LRIM caveat applies: Axiom~4 cannot be tested. Second, GLoRa fails Axiom~3, because its construction does not enforce that every instance \emph{requires} long-range information: if for example all neighbors of the source are hole nodes, a range-1 model classifies the instance correctly. Figure \ref{fig:glora_counter} quantifies this. On GLoRa graphs generated at requested ranges $10$--$15$, $\approx 35\%$ of instances are correctly classifiable using only information at hops $\le 9$. A range-$9$ classifier therefore achieves at least $\approx 70\%$ accuracy on GLoRa-15 (correct on the $35\%$ short-resolvable instances; majority-class on the rest), without any range-$10$--$15$ information.

% \vspace{-0.51cm}
\begin{wraptable}{l}{0.6\linewidth}
\vspace{-0.4cm}
\centering
\tiny
\caption{Axioms satisfied by existing long-range benchmarks. ``\textbf{?}'' marks an axiom that cannot be tested because the benchmark is restricted to a single topology family. ``$Q$-violation'' under Axiom~3 indicates a quantitative violation supported by post-hoc measurement.}
% \vspace{0.1cm}
\label{tab:audit}
\begin{tabular}{lccccl}
\toprule
Benchmark & Ax.~1 & Ax.~2 & Ax.~3 & Ax.~4 & Evidence \\
\midrule
LRGB \cite{dwivedi2022long} & \checkmark & \checkmark & $Q$v & \checkmark & \cite{tonshoff2023did,luo2024classic,luo2025can,bambergermeasuring} \\
LRIM \cite{mathyslrim} & \checkmark & \checkmark & \checkmark & \textbf{?} & Periodic grid only \\
GLoRa \cite{zhou2025glora} & \checkmark & \checkmark & $\times$ & \textbf{?} & DAG only; \Cref{fig:glora_counter} \\
ECHO-\texttt{sssp} \cite{miglior2025can} & \checkmark & \checkmark & $Q$v & \textbf{?} & \Cref{fig:echo_range} \\
ECHO-\texttt{ecc.} \cite{miglior2025can} & \checkmark & \checkmark & $Q$v & \textbf{?} & \Cref{fig:echo_range} \\
ECHO-\texttt{charge} \cite{miglior2025can} & \checkmark & \checkmark & $Q$v & \textbf{?} & \Cref{fig:echo_range} \\
\midrule
\TRIP/\GRIP{} (this work) & \checkmark & \checkmark & \checkmark & \checkmark & \Cref{thm:trip} \\
\bottomrule
\end{tabular}
\vspace{-0.5cm}
\end{wraptable}
\paragraph{ECHO \cite{miglior2025can}.} Three synthetic shortest-path-style sub-tasks (\texttt{sssp}, \texttt{eccentricity}, \texttt{diameter}) on randomized instances of base topology families, plus two real-world DFT-based molecular tasks (\texttt{charge}, \texttt{energy}). We note that, eccentricity, for instance, is the longest shortest path from a node, which is plausibly correlated with the degrees of the immediate neighbors of that node -- but the achievable accuracy from such correlations alone is not characterized in \cite{miglior2025can}. To probe this empirically, we trained ECHO baselines with the recommended hyperparameters in \cite{miglior2025can}, and computed their post-hoc range \Cref{eq:bamberger-range}. \Cref{fig:echo_range} reports the relationship between measured range and test error. Our findings show that only on \texttt{sssp}, a higher measured range correspond to lower error. However, on \texttt{charge} and \texttt{eccentricity} we observe no correlation. Moreover, the trained models reach a measured range of only $\le 4$. Hence, range-$> 4$ behavior cannot be certified by evaluating with ECHO benchmarks. Moreover, we note that each ECHO sub-task considers a distinct topology family, so Axiom~4 cannot be tested as well.

\begin{figure}
    \centering
    \hspace{-0.2cm}
    \includegraphics[width=0.95\linewidth, height=4cm]{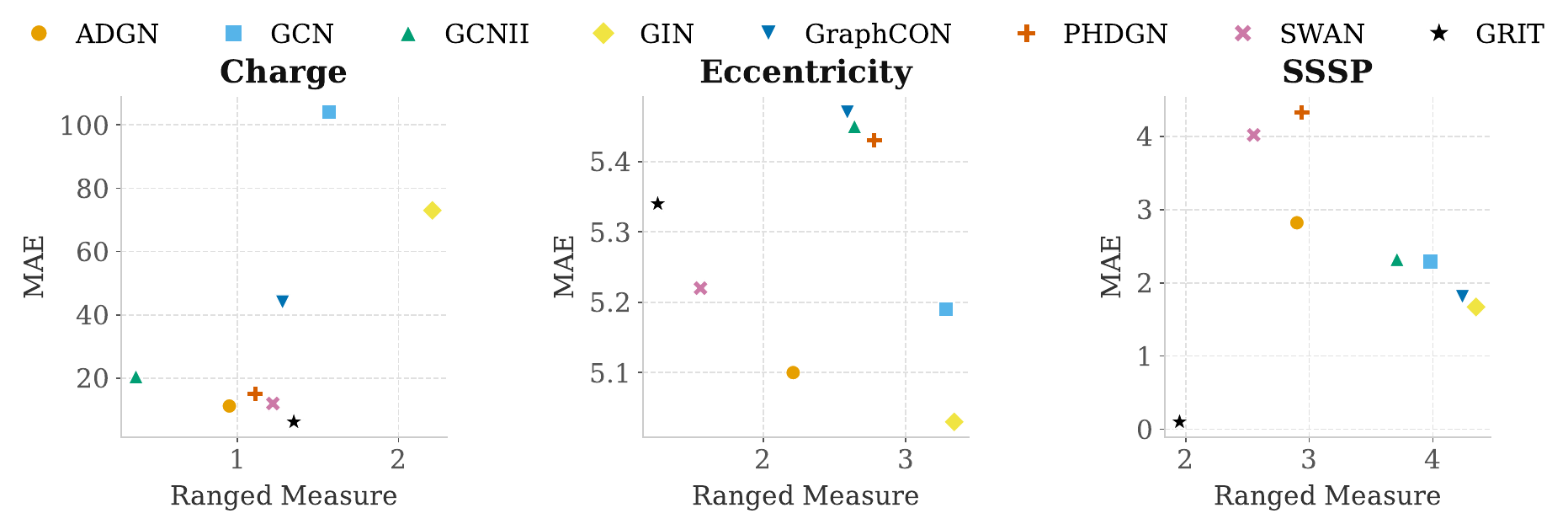} 
    \caption{Test error vs.\ post-hoc range \Cref{eq:bamberger-range} for ECHO baselines trained with the hyperparameters of \cite{miglior2025can}. On none of these node regression tasks does higher measured range correspond to lower error. 
    }
    \label{fig:echo_range}
S
    \vspace{-0.5cm}
\end{figure}

In conclusion, the long-ranged nature of benchmarks LRGB \cite{dwivedi2022long}, GLoRa \cite{zhou2025glora} and ECHO \cite{miglior2025can} is unknown. LRIM \cite{mathyslrim} provides solid theoretical guarantees of its long-rangedness while using a real-world physics problem, making it a strong benchmark to assess GNNs that focus on long-range interactions modelling. It's only limitation is that the graph topology is fixed. Consequently, developments tested exclusively on it might not generalize to other topologies. This analysis motivates the need of a general framework to stress test GNNs under arbitrary ranges and topologies.

\section{Constructing Certifiable Long-range Benchmarks}
\label{sec:tripgrip}

The four axioms in \Cref{sec:axioms-section} can be satisfied on any graph or collection of graphs, provided we choose the appropiate $g$ (from Axiom~1) and sample the features from the distributions with the right properties to satisfy the other axioms. More specifically, we will use a concept, called \emph{strict stability of distributions under aggregation}. In what follows, we formalize this concept as it is a necessary to give the definition of the task that may satisfy axioms 1 through 4. %-- is enough to satisfy all of them on \emph{any} graph or collection of graphs. We make this precise.

\begin{definition}[$f$-stable distribution]\footnote{Definition \ref{def:fstable} mirrors the classical notion of \emph{strict stability under} $f$, which generalizes the well-known stability of Gaussian, Cauchy, and L\'evy distributions under summation \cite{satoLevyProcesses}}
\label{def:fstable}
For $f \colon \mathcal{P}_{<\infty}(\R) \to \R$, a real random variable $X$ is \emph{$f$-stable} if there exists a random variable $Y$ such that, for every $n \ge 1$ and every i.i.d.\ sample $X_1, \ldots, X_n \sim X$, $f(\{X_1, \ldots, X_n\}) \overset{d}{=} Y$.
\end{definition}

\begin{example}
    We give three examples of such distribution-function pairs covering the normal, heavy-tailed, and extreme-value regimes: (i) $X \sim \mathcal{N}(0, \sigma^2)$ with $f(\{x_1, \ldots, x_n\}) = (x_1 + \cdots + x_n)/\sqrt{n}$, (ii) $X \sim \mathrm{Cauchy}(x_0, \gamma)$ with $f(\{x_1, \ldots, x_n\}) = (x_1 + \cdots + x_n)/n$, and (iii) $X \sim \mathrm{Gumbel}(\mu, \beta)$ with $f(\{x_1, \ldots, x_n\}) = \max_i x_i - \beta \ln n$.
\end{example}

\begin{proposition}[Closure properties]
\label{prop:closure}
Let $X$ be $f$-stable and $\tilde X$ be $\tilde f$-stable. Then: (i) for any measurable $g \colon \R \to \R$, $X$ is $(g \circ f)$-stable, and (ii) The pair $(X, \tilde X)$ is $F$-stable.
\end{proposition}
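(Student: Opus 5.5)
Part (i) follows directly from Definition~\ref{def:fstable}. Let $Y$ be the random variable witnessing $f$-stability of $X$. For every $n \ge 1$ and every i.i.d.\ sample $X_1,\dots,X_n \sim X$ we have $f(\{X_1,\dots,X_n\}) \overset{d}{=} Y$. Equality in distribution is preserved under pushforward by a measurable map. Indeed, for any Borel set $B$ we have $g^{-1}(B)$ Borel, so
\[
\Prob\bigl(g(f(\{X_i\})) \in B\bigr) = \Prob\bigl(f(\{X_i\}) \in g^{-1}(B)\bigr) = \Prob\bigl(Y \in g^{-1}(B)\bigr) = \Prob\bigl(g(Y) \in B\bigr).
\]
Hence $(g\circ f)(\{X_1,\dots,X_n\}) \overset{d}{=} g(Y)$ for all $n$, and $g(Y)$ is the required witness. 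I would also note that $g\circ f$ must be a measurable map on $\mathcal{P}_{<\infty}(\R)$. This holds implicitly for $f$ in the definition, since $f(\{X_i\})$ is assumed to be a random variable, and it transfers to $g \circ f$.

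For part (ii), the statement does not spell out $F$, so the first step is to fix the natural definition. Take multisets of pairs $\{(x_i,\tilde x_i)\}_{i=1}^n \subset \R^2$ and set
\[
F\bigl(\{(x_i,\tilde x_i)\}_{i=1}^n\bigr) := \bigl(f(\{x_i\}_i),\ \tilde f(\{\tilde x_i\}_i)\bigr) \in \R^2.
\]
This is well defined because the projections of a multiset of pairs are multisets. Definition~\ref{def:fstable} should then be read with $\R^2$ in place of $\R$. I would take the pair $(X,\tilde X)$ to have \emph{independent} components; the paper's usage, with separate TRIP features composed in GRIP, suggests this is intended.

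Under independence, the i.i.d.\ sample $(X_i,\tilde X_i)_{i\le n}$ has $\{X_i\}$ and $\{\tilde X_i\}$ independent. Each is i.i.d.\ with marginals $X$ and $\tilde X$ respectively. Then $f(\{X_i\})$ and $\tilde f(\{\tilde X_i\})$ are independent, being measurable functions of independent families. Their marginals are $Y$ and $\tilde Y$ by the stability hypotheses. A joint law with independent components is determined by its marginals, so $F(\cdot) \overset{d}{=} Y \otimes \tilde Y$, the product law, for every $n$. This is independent of $n$, as required.

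The main obstacle is this modelling step in (ii): choosing $F$ and the dependence assumption. Without independence the claim can fail, because the joint law of $(f(\{X_i\}),\tilde f(\{\tilde X_i\}))$ may depend on $n$. For example, take $\tilde X = X$ Gaussian with $f$ the normalized sum and $\tilde f$ a normalized max-type statistic; the correlation between the two outputs typically drifts with $n$. I would therefore state independence explicitly as the hypothesis under which (ii) holds. If a dependent version is wanted, the natural extra hypothesis is that $(X,\tilde X)$ is jointly stable under $F$. The remaining steps are routine measure theory.
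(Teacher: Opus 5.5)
Your argument is essentially the paper's: (i) is the pushforward of $f(\{X_1,\dots,X_n\}) \overset{d}{=} Y$ under the measurable $g$, where you write out the Borel-set computation the paper leaves implicit, and (ii) uses the same componentwise $F$, which the paper calls immediate by definition. Your explicit independence hypothesis for (ii) is a correct and needed refinement that the paper relies on without stating (it does hold in \GRIP{}, where coordinates are sampled independently), and your drift example can be made exact: take a standard Gaussian $X = \tilde X$ with CDF $\Phi$, let $f$ be the normalized sum and $\tilde f(\{x_1,\dots,x_n\}) = \Phi^{-1}\bigl(\Phi(\max_i x_i)^n\bigr)$; then both outputs are standard Gaussian for every $n$, they coincide at $n=1$, and they do not coincide at $n=2$.
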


\subsection{Truly Ranged Interactions Problem (TRIP)}

To make sure Axiom~1 is satisfied, it is sufficient to deterministically calculate $Y_u$. To define a task that satisfies Axiom~3, we sample features i.i.d.\ at each node, and define the target as a function of the features at distance \emph{exactly} $k$. To satisfy Axioms~3,4, we use an aggregator $f$ for which the feature distribution is $f$-stable: the aggregate then has the \emph{same} distribution regardless of how many neighbors reside at the boundary.

\begin{definition}[\TRIP]
\label{def:trip}
A \emph{Truly Ranged Interactions Problem} (\TRIP) at range $k$ is parameterized by an $f$-stable pair $(X, f)$ and consists, for any graph $G = (V,E)$, of: (i) sampling features $x_u \sim X$ i.i.d.\ for every $u \in V$, and (ii) setting targets $y_u = f(\{x_v : v \in \Nh_k(u)\})$. The task is to predict $y_u$ for every $u \in V$ given $G$ and $\{x_u\}_{u \in V}$.
\end{definition}

\begin{theorem}[\TRIP{} satisfies all four axioms]
\label{thm:trip}
For any graph $G$ and any $f$-stable pair $(X, f)$, the \TRIP{} task at range $k$ satisfies Axioms~1--4. In particular, the marginal $Y_u$ and the entropy $\Hent(Y_u)$ are graph-independent constants determined only by $(X, f)$.
\end{theorem}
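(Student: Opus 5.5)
We verify the four axioms in turn, keeping the graph $G$ deterministic (so conditioning on $G_u^{k'}$ amounts to conditioning on the features $\{X_v : v \in V_u^{k'}\}$). The central fact is that the features are i.i.d. and the target only uses features on the boundary $\Nh_k(u)$. This boundary is disjoint from $V_u^{k'}$ for every $k' < k$.

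\textbf{Axiom 1 and the marginal.} Axiom~1 holds by construction. Take $g(G_u^k) = f(\{x_v : v \in \Nh_k(u)\})$. This is measurable because $\Nh_k(u) \subseteq V_u^k$ is determined by the edge set $E_u^k$, and $f$ is measurable, which we assume implicitly in Definition~\ref{def:fstable}. For the second claim of the theorem, set $n = |\Nh_k(u)|$. The variables $\{X_v\}_{v \in \Nh_k(u)}$ are i.i.d.\ copies of $X$, so $f$-stability gives $Y_u \overset{d}{=} Y$, where $Y$ is the fixed variable from Definition~\ref{def:fstable}. This is independent of $n$, $u$ and $G$, so $\Hent(Y_u) = \Hent(Y)$ is a constant determined only by $(X,f)$. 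One caveat concerns the case $\Nh_k(u) = \emptyset$. I would either restrict to nodes whose eccentricity is at least $k$, or set $f(\emptyset)$ to be an independent draw from $Y$; I would state this convention explicitly.

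\textbf{Axiom 2.} Tightness then follows immediately. Every $Y_u$ has the law of the single real random variable $Y$. Any single real random variable is tight, since $\Prob(\|Y\| > M) \to 0$ as $M \to \infty$. So one $M$ works uniformly over all $G$ and $u$.

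\textbf{Axioms 3 and 4.} Fix $k' < k$. The variable $Y_u$ is a measurable function of $\{X_v : v \in \Nh_k(u)\}$, and $G_u^{k'}$ consists of $\{X_v : v \in V_u^{k'}\}$ together with deterministic edges. These two index sets are disjoint and the features are independent, so $Y_u \perp G_u^{k'}$. Hence $\mathrm{I}(Y_u; G_u^{k'}) = 0$ and $\Hent(Y_u \mid G_u^{k'}) = \Hent(Y_u) = \Hent(Y)$. The last quantity is graph-independent, which is exactly Axiom~4.

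For the positive part of Axiom~3, $Y_u$ is a deterministic function of $G_u^k$, so $\Hent(Y_u \mid G_u^k) = 0$ in the discrete sense. In the continuous setting, $\mathrm{I}(Y_u; G_u^k) = \mathrm{I}(Y_u;Y_u)$. This is strictly positive, and in fact infinite, provided $Y$ is non-degenerate.

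\textbf{Main obstacle.} I expect the main difficulty to be entropy semantics rather than the independence argument. Differential entropy of a deterministic function is $-\infty$. I would phrase the strict inequality through mutual information, which is well defined and positive. I would also add the mild hypothesis that $Y$ is not almost surely constant, which holds in all three examples. Finally, the $\Nh_k(u)=\emptyset$ convention must be fixed so that the argument covers every node of every graph.
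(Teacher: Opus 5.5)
Your proof is correct and follows the paper's argument essentially step for step. As in the paper, $f$-stability gives $Y_u\overset{d}{=}Y$, which yields tightness and the graph-independent marginal and entropy, and the disjointness of $\Nh_k(u)$ from $V_u^{k'}$ together with independence of the features gives Axioms~3 and~4. Your non-degeneracy hypothesis on $Y$ is the right sharpening of the paper's ``whenever $f$ is non-constant.''

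One correction to your caveat concerns nodes with $\Nh_k(u)=\emptyset$, where $V_u^k=V_u^{k-1}$. There, an independent external draw from $Y$ would make $Y_u$ not a function of $G_u^k$ (breaking Axiom~1) and uninformative about it (breaking the positive half of Axiom~3). In fact no convention works at such nodes, since Axiom~1 would make $Y_u$ a function of $G_u^{k-1}$ that is independent of it, hence constant. So only your first option is viable: restrict to nodes with $\Nh_k(u)\neq\emptyset$. The paper's ``for any graph $G$'' silently needs the same assumption.
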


\Cref{thm:trip} implies that, we can apply TRIP to any kind of graph and the statistical properties of the features $X$ and the targets $Y$ will remain consistent. Thus enabling direct comparisons of over-squashing under long-rangedness accross different graphs. Furthermore, in \cite{abboud2020surprising} it was shown that a  message-passing neural network (MPNN) equipped with random node features are universal approximators of permutation-invariant functions on graphs of bounded size; Any \TRIP{}-generated benchmark can therefore be learned by MPNN equipped with random node features. The empirical question we address in \Cref{sec:cycle-val} is: \emph{at which range a given architecture fails to learn them?}

\subsection{Generalized Ranged Interactions Problem and the per-range MLE oracle}
\label{sec:grip}

To express tasks combining information from multiple ranges and feature dimensions, we generalize \TRIP{} to a multivariate target whose complexity is controlled by the set of involved ranges and the choice of $f$-stable distributions.

\begin{definition}[\GRIP]
\label{def:grip}
Let $\{(X_i, f_i)\}_{i=1}^{d_x}$ be $d_x$ pairs of $f_i$-stable distributions and let $K_i \subseteq \N_{\ge 1}$ be a finite set of ranges per feature. For each node $u$ sample $x_u \in \R^{d_x}$ with $(x_u)_i \sim X_i$ independently. For each $(i, k) \in \{1, \dots, d_x\} \times K_i$, define $\hat y_u^{(i, k)} = f_i\big(\{(x_v)_i : v \in \Nh_k(u)\}\big)$. Stack into $\hat y_u \in \R^{\sum_i |K_i|}$ and apply a measurable read-out $h \colon \R^{\sum_i |K_i|} \to \R^{d_y}$ to obtain $y_u = h(\hat y_u)$.
\end{definition}

Following the logic in \Cref{thm:trip}, \GRIP{} satisfies Axioms~1--4 with target range $k = \max\bigcup_i K_i$. Notably, \GRIP{} admits a closed-form per-range Maximum-Likelihood predictor, the only \emph{a priori} lower bound on test error available for any range on any graph.

\begin{theorem}[Per-range MLE oracle]
\label{thm:mle}
Fix a \GRIP{} task with parameters $\{(X_i, f_i, K_i)\}_i$ and read-out $h$, and let $k^\star \ge 0$ be a target observation range. Let $\hat X_i$ be the MLE realisation of $X_i$. The Maximum-Likelihood predictor of $Y_u$ given $G_u^{k^\star}$ is $h(\tilde y_u)$, where for each $(i,k)$,
\[
\tilde y_u^{(i,k)} =
\begin{cases}
f_i\big(\{(x_v)_i : v \in \Nh_k(u)\}\big) & \text{if } k \le k^\star,\\
f_i\big(\{\hat X_i\}\big) & \text{if } k > k^\star.
\end{cases}
\]
The expected error of this predictor is a graph-independent function of $(\{(X_i, f_i, K_i)\}_i, h, k^\star)$ alone, and provides a lower bound for any model with effective range $\le k^\star$.
\end{theorem}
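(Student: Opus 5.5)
The proof rests on one structural fact: the conditional law of the \GRIP{} target splits into an observed part and a free part. First I would fix $u$ and the observation range $k^\star$, and split the index set $\{(i,k): k\in K_i\}$ into an \emph{observed} block $k\le k^\star$ and an \emph{unobserved} block $k>k^\star$.

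For the observed block, $\Nh_k(u)\subseteq V_u^{k^\star}$, so each $\hat y_u^{(i,k)}$ is a measurable function of $G_u^{k^\star}$. Conditioning on $G_u^{k^\star}$ therefore fixes these coordinates to their realised values.

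For the unobserved block, the boundary $\Nh_k(u)$ is disjoint from $V_u^{k^\star}$. Since features are i.i.d.\ across nodes and independent across coordinates, the collection $\{\hat y_u^{(i,k)}\}_{k>k^\star}$ is independent of $G_u^{k^\star}$. Note that $\Nh_k(u)$ is determined by $G$, which is given, so the graph structure is not random here. By $f_i$-stability (Definition~\ref{def:fstable}), each such coordinate has the law $Y_i$ of $f_i$ applied to any i.i.d.\ sample, whatever the size of $|\Nh_k(u)|$. Different $k$ give disjoint boundaries, so these coordinates are mutually independent. Distinct $i$ use independent feature coordinates, which by Proposition~\ref{prop:closure}(ii) handles the joint stacking. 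Hence the conditional law of $\hat y_u$ given $G_u^{k^\star}$ is a point mass on the observed block times the product of the graph-independent laws on the unobserved block.

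Next I would derive the ML predictor from this factorisation. Maximising the conditional likelihood of $\hat y_u$ factorises over coordinates. The observed coordinates are forced, and each unobserved coordinate is replaced by its mode. I would identify this mode with $f_i(\{\hat X_i\})$, reading the statement's ``MLE realisation'' via stability at $n=1$: the law of $f_i(\{X_i\})$ equals the law of $Y_i$, and the mode of $f_i(X_i)$ is attained at $f_i(\hat X_i)$. I would then push through $h$, interpreting the MLE of $Y_u$ as $h$ applied to the MLE of $\hat y_u$, which is the plug-in convention.

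\textbf{Main obstacle.} This step is the delicate one. For a general measurable $f_i$, the mode of $f_i(X_i)$ need not be $f_i$ of the mode of $X_i$. Likewise, for non-injective $h$, the mode of $h(\hat y)$ need not be $h$ of the mode of $\hat y$. I would state these as the operative definitions, or assume $f_i$ is monotone or a density-preserving transformation and $h$ is injective. I would check the three examples explicitly; for Gaussian and Cauchy the mode is the centre, so the claim holds directly.

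\textbf{Error and lower bound.} The expected error is an integral over the product law of the unobserved block and over the observed block's law. Each observed coordinate has law $Y_i$ by stability, and all blocks are independent. The resulting integrand depends only on $(\{(X_i,f_i,K_i)\}_i,h,k^\star)$ and not on $G$ or $u$.

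For the lower bound, any model with effective range $\le k^\star$ is a measurable function of $G_u^{k^\star}$. Its prediction can use only the conditional law derived above. Optimality of the Bayes/ML rule within that conditional law then gives the bound, again holding exactly when the error measure is the one for which the mode is optimal, namely $0$--$1$ or likelihood-type loss. For other losses I would note that the conditional mean or median replaces the mode by the same argument.
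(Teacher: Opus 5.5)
Your proposal has the same skeleton as the paper's proof. You split the coordinates $(i,k)$ into the block $k\le k^\star$, which is known given $G_u^{k^\star}$, and the block $k>k^\star$, which is independent of it. You then plug in $\hat X_i$, push through $h$, and conclude graph-independence. The paper does both plug-in steps in one line by citing the invariance principle for MLEs, so your ``plug-in convention'' is exactly its operative definition.

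Two places where you are more careful than the paper are worth keeping:
\begin{itemize}
\item Routing the unobserved coordinates through stability at $n=1$ is essential, not cosmetic. Applying the invariance principle to the actual $n$-sample aggregate gives $f_i(\{\hat X_i,\dots,\hat X_i\})$. For the Gumbel example this equals $\mu-\beta\ln|\Nh_k(u)|$, which is graph-dependent.
\item Your product-law argument is the right justification for graph-independence of the error: disjoint boundaries, independent coordinates, and every coordinate distributed as $f_i(\{X_i\})$. The paper instead appeals to Axioms~2 and~4, but tightness and equal conditional entropies do not by themselves give equality of the joint law of predictor and target.
\end{itemize}
Both arguments also tacitly need $\Nh_k(u)\neq\emptyset$ for all $k\in K_i$, since $f_i$-stability only covers $n\ge 1$.

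Two of your caveats need correcting.

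First, drop the fallback. Monotone $f_i$ or injective $h$ does not make modes commute for continuous laws, because of the Jacobian. Take $X\sim\mathcal N(0,1)$ with the $(g\circ f)$-stable aggregator for $g=\exp$, which the closure property allows. The plug-in gives $e^{0}=1$, while the mode of the resulting lognormal is $e^{-1}$. The induced-likelihood convention, as in the paper, is the only general reading under which the first claim holds.

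Second, once you adopt that convention, your last step has a gap. In general $h(\tilde y_u)$ is neither the conditional mode, mean, nor median of $Y_u$ given $G_u^{k^\star}$. So ``optimality of the Bayes rule'' does not apply to it, and the lower-bound clause fails for natural losses.
\begin{itemize}
\item Take MSE with standard Gaussian features and the paper's read-out $h(z)=(\sum_k z_k)^{1/3}$. Let $S$ be the observed partial sum and $W\sim\mathcal N(0,m)$ the unobserved one, with $m\ge1$ unobserved coordinates. Then $\mathbb{E}[(S+W)^{1/3}\mid S]$ is smooth in $S$ and vanishes at $S=0$, while the oracle predicts $S^{1/3}$. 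The two differ on a set of positive probability whenever some but not all ranges are observed, so the conditional mean strictly beats the oracle.
\item For Gumbel \TRIP{} under MSE with $k^\star<k$, the constant $\mu+\beta\gamma_{\mathrm E}$ beats the oracle's $\mu$, where $\gamma_{\mathrm E}$ is the Euler--Mascheroni constant.
\end{itemize}
The bound holds only when the plug-in coincides with the Bayes rule for the loss in use. Examples are Gaussian or Cauchy features with affine $h$, by symmetry of the convolved law about $h(\tilde y_u)$, under squared error (Gaussian) or absolute error (either).

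The paper's proof gives no argument for this clause at all, so the restriction is a limitation of the statement itself. Impose it as an explicit hypothesis. Do not claim that the conditional mean or median ``replaces the mode by the same argument'', since that changes the predictor the theorem names.
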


\Cref{thm:mle} is a result of the invariance principle for MLEs \cite[\S 7.2]{casella2024statistical}. \Cref{thm:mle} allows addressing a question that existing long-range benchmark lack: \emph{at what range does a model begins to outperform the optimal range-$(k^\star{-}1)$ predictor?} For example: for $X \sim \mathcal{N}(0,1)$ and $f(\{x_1, \ldots, x_n\}) = (x_1 + \cdots + x_n)/\sqrt{n}$, the MLE of $X$ from a single observation is $\hat X = 0$, hence $f(\{\hat X\}) = 0$.

\subsection{Cycle-graph Validation}
\label{sec:cycle-val}

\setcounter{figure}{3}
\begin{wrapfigure}[19]{r}[0pt]{0.4\textwidth}
    \centering
    \vspace{-0.55cm}
    \includegraphics[width=\linewidth]{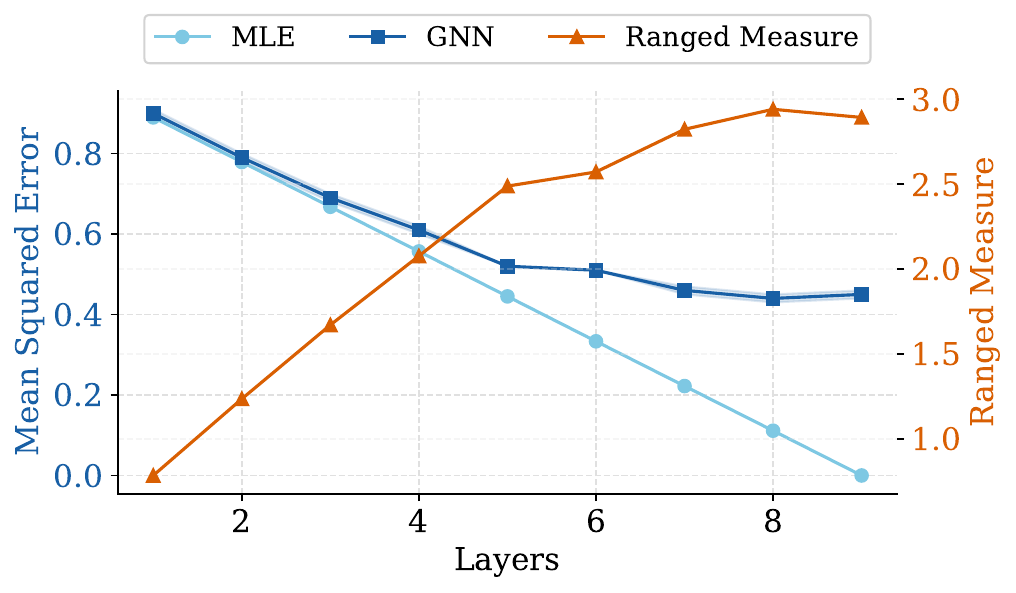}
    \caption{\GRIP{} on a 22-cycle reveals an averaging bias in \Cref{eq:bamberger-range}. Test MSE (solid; left $y$-axis) and measured range \Cref{eq:bamberger-range} (dashed; right $y$-axis) for GCNs trained on a $9$-dimensional \GRIP{} (range set $\{1, \ldots, 9\}$, Gaussian-sum aggregation, cubic-root read-out) as a function of model depth ($x$-axis). The MLE oracle from \Cref{thm:mle} is shown as a dotted line; the GCN matches it for ranges $\le 5$ and saturates afterwards.
    }
    \label{fig:grip_cycles}
\end{wrapfigure}
To verify that \TRIP{} behaves as predicted, we trained GCN with one residual connection per layer (with complete experimental details provided in Appendix \ref{app:hyper-param}) on standard \TRIP{} on cycles of length $22$ for ranges $k \in \{1, \ldots, 9\}$, with model depths $\{k{-}1, k, k{+}1\}$, repeated for the three $f$-stable distributions shown in \Cref{sec:tripgrip}. In \Cref{fig:cycle_plots}  we show that (i) the $(k{-}1)$-layer GCN consistently struggles learning the task (i.e., it yields high MSE; the learned range is $\approx 0$, as predicted by Axiom~3); (ii) the $k$-layer GCN learns the task at low $k$ and its measured range \Cref{eq:bamberger-range} tracks $k$, with performance degrading for $k \in \{7,8\}$ across all three distributions, consistent with prior over-squashing observations on cycles; and (iii) the $(k{+}1)$-layer GCN does \emph{not} help past $k=7$, ruling out under-reaching \cite{gabrielsson2023rewiring}  as the failure mode \cite{arroyo2025vanishing}.

\setcounter{figure}{2}
\begin{figure}
    \centering
    \includegraphics[width=0.95\linewidth]{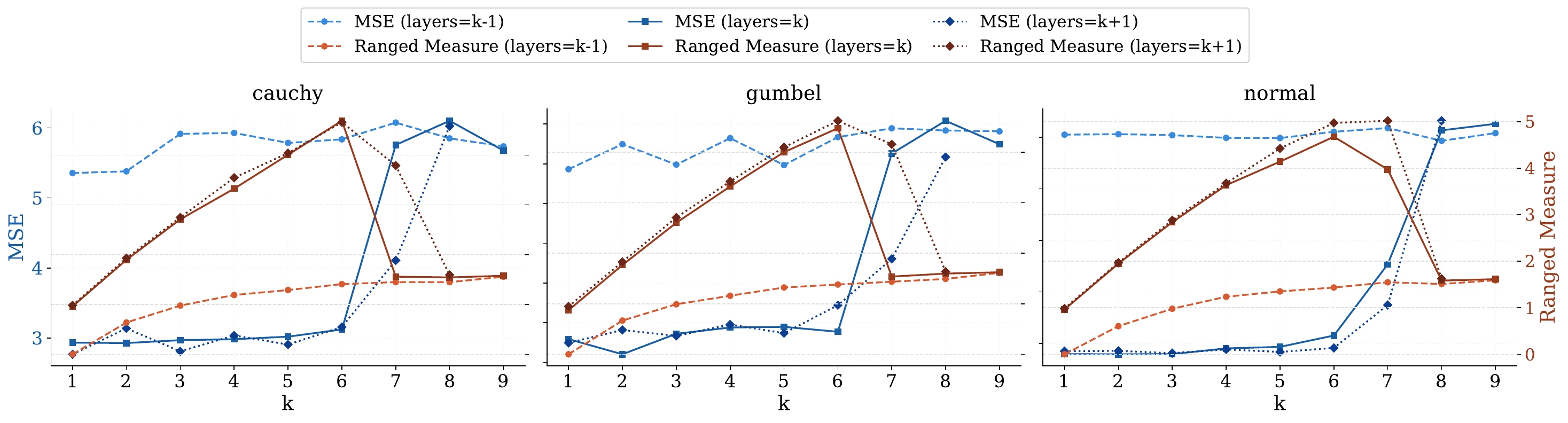}
    \caption{\TRIP{} on a 22-cycle isolates an over-squashing transition at $k \in \{7,8\}$ that is not under-reaching. Test MSE (left $y$-axes) and post-hoc measured range \Cref{eq:bamberger-range} (right $y$-axes) for residual GCNs of depth $\{k{-}1, k, k{+}1\}$, $k \in \{1, \ldots, 9\}$, on the three $f$-stable distributions of \Cref{def:fstable}. The $(k{-}1)$-layer GCN uniformly fails; the $k$- and $(k{+}1)$-layer GCNs both succeed at low $k$ and break down together at $k \in \{7,8\}$, ruling out under-reaching.}
    \label{fig:cycle_plots}
    \vspace{-0.2cm}
\end{figure}

\paragraph{The averaging artifact of \Cref{eq:bamberger-range}.} For \GRIP{}, we train a GCN \cite{kipf2017semi} on a 22-cycle with $9$-dimensional features, each aggregated at ranges $1, \ldots, 9$ via the Gaussian sum and combined via $h(z) = (\sum_{k=1}^9 z_k)^{1/3}$. \Cref{fig:grip_cycles} shows that (i) on ranges $1$--$5$ the GCN attains the MLE bound from \Cref{thm:mle}; (ii) the model cannot extract information from ranges $6$--$9$ even though messages reach those nodes; and (iii) the post-hoc measure \Cref{eq:bamberger-range} reports an effective range of $\approx 3$, despite the model demonstrably using information at ranges $4$ and $5$. Importantly, we note that this discrepancy is not measurement error: \Cref{eq:bamberger-range} is an \emph{influence-weighted} average of ranges, hence, it lacks information on the distribution of the weights. For example, a MPNN that needs information from ranges 1 and 9 equally, will have measured range 5, underestimating the range.

\section{Empirical Findings}
\label{sec:showcases}

We utilize the TRIP framework to address two research questions where existing benchmarks have so far given ambiguous answers: (i) does the balanced-Forman curvature in \cite{topping2022understanding} predict GNN error on truly long-range tasks? (ii) is performance on the ECHO \cite{miglior2025can} benchmark solely explained by long-rangedness? Throughout, ``standard \TRIP{}'' refers to \TRIP{} with $X \sim \mathcal{N}(0,1)$ and $f(\{x_1, \ldots, x_n\}) = (x_1+\cdots+x_n)/\sqrt{n}$.

\subsection{Curvature is uncorrelated with GNN error on \TRIP{}}
\label{sb-sct:curvature}

    The \emph{balanced-Forman curvature} was introduced in \cite{topping2022understanding} as an efficient combinatorial measure of bottlenecks and proposed Stochastic Discrete Ricci Flow (SDRF), a rewiring technique that surgically targets negatively-curved edges. In recent works, the distinction between \emph{topological} bottlenecks (low Cheeger constant, negative curvature) and \emph{computational} bottlenecks (model dynamics, vanishing gradients) has been introduced \cite{arnaiz2025oversmoothing}. Posing the conjecture that curvature-based rewiring addresses only the former. The hypothesis we test is as follows: \emph{if curvature-based rewiring resolves the long-range component of over-squashing, then on a provably $k$-range task, (a) the curvature of the input graph should predict GNN error, and (b) SDRF rewiring should reduce that error.} We test both predictions under standard \TRIP{}, which by Axiom~4 isolates the long-range component from confounders.

\paragraph{Setup and Findings.} For $k \in \{1, \ldots, 7\}$ we sample $100$ random graphs of $\approx 1000$ nodes (expected-degree generator); whenever a node has empty $\Nh_k(u)$ its target is set to $0$. We compare a $k$-layer residual GCN under (i) no rewiring (\textsc{Exp\_Deg}) and (ii) SDRF \cite{topping2022understanding}, with three seeds each.\footnote{Cross-graph error comparison is not fair in this case because $|\Nh_k(u)|$ is $0$ for $\sim 70\%$ ($k{=}6$) and $\sim 90\%$ ($k{=}7$) of the nodes of SDRF-rewired graphs. A More detailed discussion on these cases can be found in Appendix \ref{app:graph-comparison}} \Cref{fig:curv_errs} shows that SDRF and the original graph perform nearly identically for $k \le 5$, with the only divergence appearing at $k \in \{6,7\}$ -- precisely where SDRF has made struggles with long-range interaction modeling. \Cref{fig:curv_corrs} reports the Spearman correlation, \emph{within each range}, between minimum edge curvature and post-training MSE: $|\rho| < 0.4$ throughout, we see no significant trend. Therefore, we conclude that per-graph curvature is \emph{not predictive} of GNN error on \TRIP{}, shedding light on  how the hypothesis above actually manifest in practice. This provides, for the first time, direct empirical evidence of the topological-vs-computational bottleneck distinction \cite{arnaiz2025oversmoothing,arroyo2025vanishing}.

\setcounter{figure}{4}
\begin{figure}[htbp]
    \centering
    \begin{subfigure}[c]{0.4\textwidth}
        \centering
        \includegraphics[width=\linewidth]{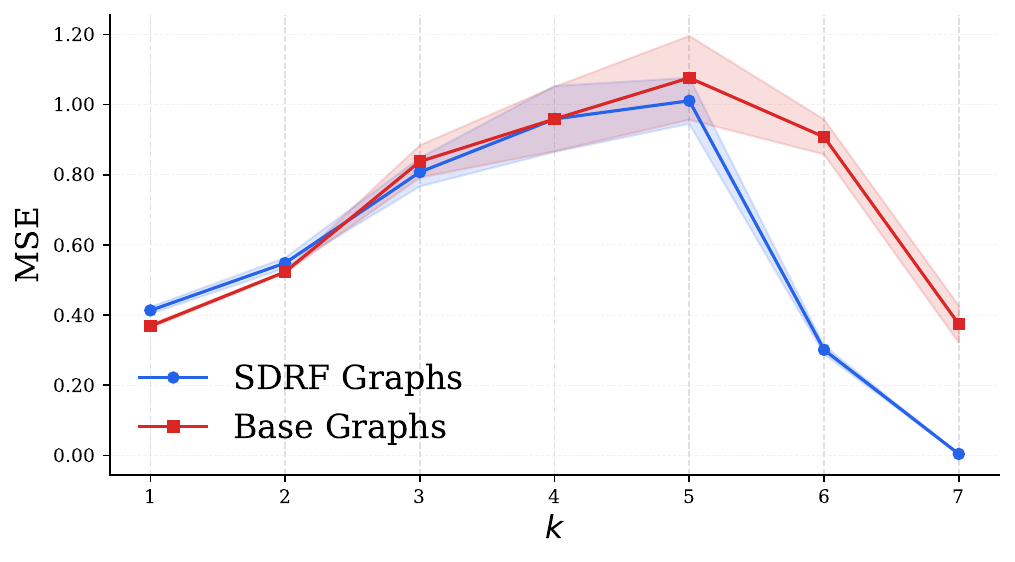}
        \caption{Test MSE per range, original vs.\ SDRF-rewired graphs.}
        \label{fig:curv_errs}
    \end{subfigure}
    \hfill
    \begin{subfigure}[c]{0.4\textwidth}
        \centering
        \vspace{0.2cm}
        \includegraphics[width=\linewidth]{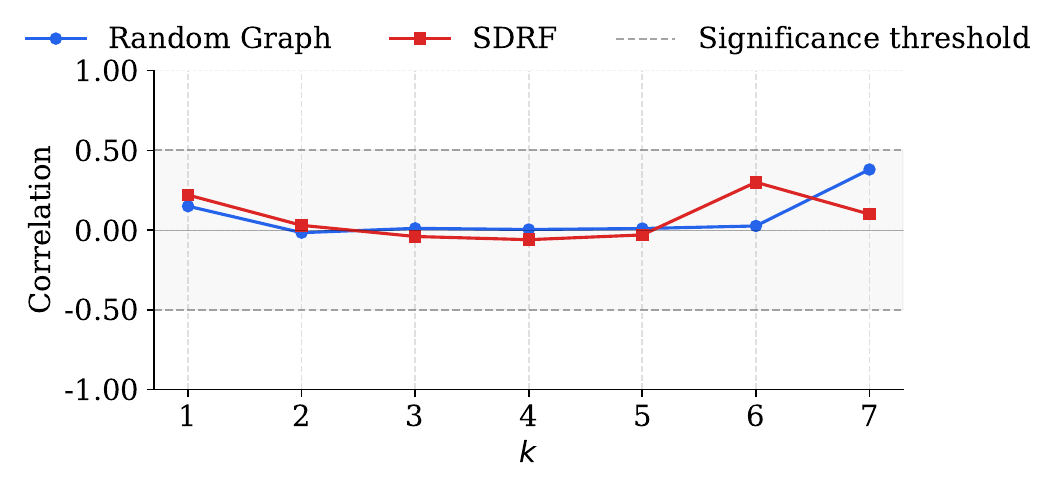}
        \caption{Spearman $\rho$ between minimum edge curvature and post-training MSE, per range.}
        \label{fig:curv_corrs}
    \end{subfigure}
    \caption{We show Balanced-Forman curvature is uncorrelated with GNN error on \TRIP{}. Left: SDRF and original graphs are indistinguishable in MSE for $k \le 5$; the divergence at $k \in \{6,7\}$ is an artifact of SDRF eliminating the boundary $\Nh_k(u)$ in most nodes (see footnote). Right: per-range Spearman correlation between minimum edge curvature and MSE stays in $|\rho| < 0.4$ throughout, opposing the prediction that curvature should track GNN error on a long-range task. Standard deviation is computed across $3$ seeds.}
    \label{fig:curvature}
\end{figure}
\vspace{-0.2cm}
\subsection{ECHO under \TRIP: over-squashing beyond long-rangedness}
\label{sec:echo-trip}

Our analysis of ECHO in \Cref{sec:audit}  shows that on \texttt{charge} and \texttt{eccentricity}, post-hoc measured range is uncorrelated with test error. Thus, it is unclear whether the tasks are guaranteed to require long-range interactions. We now utilize \TRIP{} to make this a controlled test. In particular, we ask the following question: \emph{on the same topologies, under a task that is provably $k$-range, do the same architectures rank the same way as on the original ECHO tasks?} A positive answer would imply that the bottleneck to improving performance is, indeed, the long-rangedness of the task.

\paragraph{Setup and Findings.} We instantiate standard \TRIP{} on the two ECHO topology classes: the molecular graphs of \texttt{charge} and the perturbed synthetic graphs of \texttt{eccentricity} (line, ladder, grid-like, tree, caterpillar, lobster). Because \TRIP{} satisfies Axiom~4, MSE values are directly comparable across these two classes. We train all message-passing baselines released with ECHO\footnote{DRew \cite{gutteridge2023drew} and GPS \cite{rampavsek2022recipe} are excluded since its rewiring operate on topologies in which all nodes reside at distance 1.} for ranges $k \in \{2, 4, 6, 8\}$ with the same hyperparameter sweep protocol as \cite{miglior2025can}; details are provided in Appendix \ref{app:hyper-param}, and results are reported in \Cref{fig:echo_trip}. Two observations stand out.\ First, the synthetic topologies are systematically harder than the molecular ones, validating their intended design as networks with bottlenecks. Second, PHDGN \cite{heiligport} outperforms all baselines on the \TRIP-instantiated synthetic topologies, a gap that is absent on the original ECHO tasks. By $k=4$, the classical baselines (GCN \cite{kipf2017semi}, GIN \cite{xupowerful}, GCNII \cite{chen2020simple}, GraphCON \cite{rusch2022graph}, ADGN \cite{gravinaanti}, GRIT \cite{ma2023graph}) saturate at $\mathrm{MSE} = 1.0$ (the constant-mean baseline), and PHDGN and SWAN \cite{gravina2025oversquashing} remain meaningful learners.

\paragraph{Implications.} If long-rangedness were the dominant bottleneck on ECHO's \texttt{eccentricity}, \texttt{sssp}, and \texttt{charge}, the PHDGN-vs-baseline gap we observe under \TRIP{} should also appear there. It does not. The gap on ECHO is therefore explained, at least in part, by factors other than pure long-rangedness -- candidate explanations include computational bottlenecks/vanishing gradients \cite{arnaiz2025oversmoothing,arroyo2025vanishing}, mixing capacity \cite{di2023over}, or task-specific inductive biases. Our framework does not adjudicate between these, but it does isolate the long-range component cleanly enough to rule out long-rangedness alone as the driver \footnote{One could argue that assuming i.i.d features also enforces low feature homophily, for a more detailed discussion on this idea see Appendix \ref{app:homo}}. More broadly, this experiment illustrates a workflow the framework enables: \emph{instantiate \TRIP{} on a benchmark's topologies; compare relative model rankings under \TRIP{} against the original task; the discrepancy quantifies how much of the original task gap is not long-range-driven.}

\section{Conclusion}
\label{sec:conclusion}

We demonstrated that the absence of a precise specification of what makes a long-range benchmark long-range has stalled progress on over-squashing. To address this gap,  we introduce a certifiable framework: four verifiable axioms (\Cref{sec:axioms}), a constructive procedure -- \TRIP{} and \GRIP{} -- that satisfies them on any graph topology (\Cref{sec:tripgrip}), and a closed-form per-range MLE oracle. To the best of our knowledge, this is the first \emph{a priori} lower bound on test error available for any long-range graph benchmark (\Cref{thm:mle}). Materializing our framework into real-world benchmarks has produced two non-trivial findings: (i) balanced-Forman curvature is uncorrelated with GNN error on truly long-range tasks (\Cref{sb-sct:curvature}); and (ii) the model-ranking gap on ECHO is not explained by long-rangedness alone (\Cref{sec:echo-trip}). We also explicitly surfaced how averaging artifact of the post-hoc range measure of \cite{bambergermeasuring, liang2025towards} can be misleading (\Cref{sec:tripgrip}). As future work, kernel-based extensions of $f$-stability for correlated features and relaxations of Axiom~1 to probabilistic $g$ to model noise could close the gap to real-world data. We envision our framework propelling the community toward long-range comparisons with precisely understood scope.

\begin{ack}
Ferran Hernandez Caralt acknowledges that the project that gave rise to these results received the support of a fellowship from “la Caixa” Foundation (ID 100010434). The fellowship code is LCF/BQ/PFA25/11000012. Simon Heilig, Asja Fischer acknowledge funding by Deutsche Forschungsgemeinschaft (DFG, German Research Foundation) via the TRR 391 Spatio-temporal Statistics for the Transition of Energy and Transport (project 520388526)
\end{ack}

\begin{figure}
    \centering
    \includegraphics[width=0.9\linewidth]{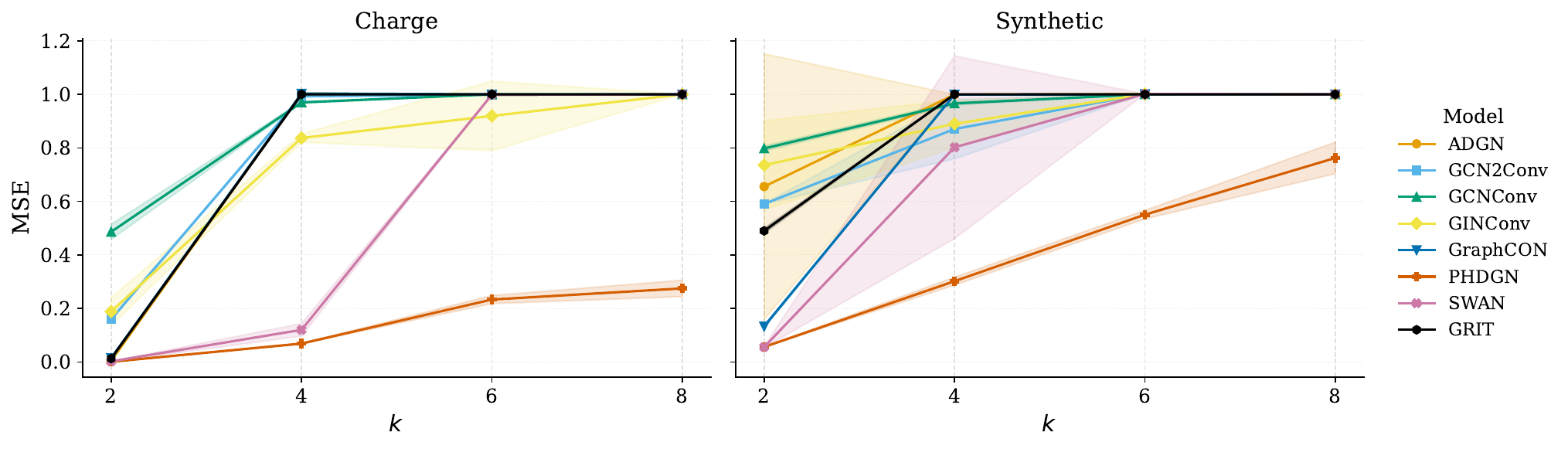}
    \caption{Test MSE on \TRIP-instantiated ECHO topologies (\texttt{charge}, \texttt{synth}) for $k \in \{2,4,6,8\}$. PHDGN \cite{heiligport} dominates all baselines, with a gap that is absent on the original ECHO tasks of \cite{miglior2025can}. By $k=4$ the classical baselines (GCN \cite{kipf2017semi}, GIN \cite{xupowerful}, GCNII \cite{chen2020simple}, GraphCON \cite{rusch2022graph}, ADGN \cite{gravinaanti}, GRIT \cite{ma2023graph}) saturate at $\mathrm{MSE}=1.0$ (constant-mean baseline); only PHDGN and SWAN \cite{gravina2025oversquashing} remain meaningful learners. Standard deviation is computed over $4$ seeds \cite{miglior2025can}.}
    \label{fig:echo_trip}
\end{figure}

\printbibliography

\appendix

\include{appendices}

%%%%%%%%%%%%%%%%%%%%%%%%%%%%%%%%%%%%%%%%%%%%%%%%%%%%%%%%%%%%

\newpage
\input{checklist.tex}

\end{document}

%% file: appendices.tex
\section{Deep Dive into Theoretical Results}
\label{app:gen-theory}
\subsection{Intuition behind the axioms}
\label{app:axiom-theory}
The core idea is to change the notion of "a k-range task needs k-ranged information to be solved" to "a k-range task needs k-ranged information to be approximated". This is fundamental for designing long-range benchmarks as GNNs are NOT solvers in the algorithmic sense, they are approximators in the statistical learning sense. For this reason we have chosen joint distributions to formalize the notion of long-range for GNNs, as it enables us to rule out the existence of unknown shorter-ranged correlations that may aid in approximating the task. This phenomenon enabled shorter-ranged models to be presented as long-ranged, as we show in Section \ref{sec:audit}.

One example of this phenomenon is eccentricity, which has been used in ECHO to present a long-ranged task. Even though to solve the task you need to find the furthest node for each node, when restricting the problem to certain dataset, there likely exist shorter-ranged correlations GNNs can exploit. For example: it is intuitive to think that nodes with higher degrees will tend to have lower eccentricities. Consequently, it is unclear what performance can be achieved with shorter-ranged models. The clearest specific example we show of this is in Figure 2, where models have ranges around 3 but eccentricity is on average between 20 and 25 \cite{miglior2025can}.

\subsection{On the theoretical results of the Axioms}
\label{app:prop-theory}
These results provide a solid theoretical link between the absence of each axiom with undesirable behaviour that may undermine past and future research. These are essential to provide solid grounds for preventing future research from falling into the same pitfalls that have been repeated in the past. These results are to be seen as cautionary tales that prevent future research from unknowingly designing biased test cases to validate their long-ranged GNN developments.

Proposition \ref{prop:axiom2} shows that removing axiom 2 opens the door to tasks that will have arbitrarily large targets to be predicted depending on the graph size. So, “long-range” tasks designed on larger graphs without verifying Axiom 2 might be benchmarking the capabilities of GNNs of adapting to larger normed predictions instead of their long-range capabilities.

Proposition \ref{prop:axiom3} proposition is a formalization using our framework of a documented occurrence, since \cite{bambergermeasuring, luo2025can} show range-$\approx 2$ models matching long-range architectures on LRGB. 

To highlight the importance of Proposition \ref{prop:axiom4} we must point out previous work contextualize it. In \cite{bambergermeasuring}, to claim that a GNN learned the true range of a task, they used the $k$-power task, which is analogous to the one described in Proposition \ref{prop:axiom4}. Our proposition highlights that, had that validation in \cite{bambergermeasuring} been done on large enough graphs, the opposite conclusion could’ve been reached. As one could unknowingly generate a problem that is essentially trivial, solve it with a GNN that predicts a constant, thus concluding that the range proposed is meaningless.

In current benchmarks \cite{dwivedi2022long, zhou2025glora, miglior2025can} there is no objective notion of what is “good” performance. You can only conclude that model A beats model B, but never how far either one is from optimal, or how much of the remaining error is simply irreducible at a given range. Theorem \ref{thm:mle} fixes this by giving, for each observation range $k$, a closed-form error $\varepsilon_{\mathrm{MLE}}(k)$ that no model with effective range $\le k$ can beat — computable before training, and independent of the graph itself. This enables absolute calibration. Instead of Figure \ref{fig:grip_cycles} saying "the GCN reached MSE 0.42," we can say it achieves the information-theoretic optimum for ranges up to 5 and extracts essentially nothing from ranges 6–9, even though messages do reach those nodes. This calibration directly enables researchers to attribute a stronger performance to better long-range modelling. If a model's error drops below $\varepsilon_{\mathrm{MLE}}(k^\star-1)$, that's provable evidence it used range-$k$ information. "Is this improvement actually long-range?" stops being a rhetorical question and becomes a decidable one. This is precisely the kind of certificate that was missing in LRGB , and its absence is what made the reassessments \cite{tonshoff2023did, luo2025can, bambergermeasuring} necessary in the first place.

\section{$f$-stability's Motivation}
\label{app:f-stable}

In the main paper we propose $f$-stability as a concept which can be used to construct provably long-range benchmarks that satisfy axioms 1 through 4. Theorem \ref{thm:trip} gives sufficiency, $f$-stability is enough to satisfy the axioms. Here we add the near-converse 
\begin{proposition}
\label{prop:converse}
Fix the design template where features are i.i.d.\ and $y_u = f(\{x_v : v \in \Nh_k(u)\})$. Then $\Hent(Y_u \mid G_u^{k'}) = \Hent(Y_u)$ for $k' < k$, and Axiom~4 holds across all graph families if and only if $\Hent\big(f(\{X_1,\dots,X_n\})\big)$ is constant in $n$. If this entropy depends on $n$, Axiom~4 fails on any family with non-constant boundary sizes.
\end{proposition}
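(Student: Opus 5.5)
The plan has three parts. First, show that the sub-$k$ conditional entropy equals the marginal entropy, whatever $f$ is. Then reduce Axiom~4 to the statement that $\Hent(Y_u)$ does not depend on the graph. Finally, note that $Y_u$ has the law of $f(\{X_1,\dots,X_n\})$ with $n=|\Nh_k(u)|$.

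\textbf{Step 1: independence from $G_u^{k'}$.} Fix $k'<k$. The nodes of $G_u^{k'}$ satisfy $d_G(u,v)\le k'<k$, so $V_u^{k'}\cap \Nh_k(u)=\emptyset$. The target $Y_u=f(\{X_v: v\in\Nh_k(u)\})$ is a measurable function of the features on $\Nh_k(u)$. The variables in $G_u^{k'}$ are the features on $V_u^{k'}$ together with the edge set $E_u^{k'}$. Here I treat the graph as fixed, or at least independent of the features, so that conditioning on edges only fixes $n=|\Nh_k(u)|$ and is otherwise deterministic.

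Because the features are i.i.d.\ across nodes, $\{X_v\}_{v\in\Nh_k(u)}$ is independent of $\{X_v\}_{v\in V_u^{k'}}$. Hence $Y_u\perp G_u^{k'}$, which gives $\mathrm{I}(Y_u;G_u^{k'})=0$, i.e.\ $\Hent(Y_u\mid G_u^{k'})=\Hent(Y_u)$. This part holds for every $f$.

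\textbf{Step 2: reduction of Axiom~4.} By Step 1, Axiom~4 says exactly that $\Hent(Y_u)=\Hent(Y_{\tilde u})$ for all $G,\tilde G,u,\tilde u$. Writing $n=|\Nh_k(u)|$, exchangeability of the i.i.d.\ sample gives
\[
Y_u\overset{d}{=}f(\{X_1,\dots,X_n\}),
\]
and entropy depends only on the law. Therefore $\Hent(Y_u)=\phi(n)$, where $\phi(n):=\Hent(f(\{X_1,\dots,X_n\}))$.

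\textbf{Step 3: the equivalence.} For the ``if'' direction, suppose $\phi$ is constant. Then every node of every graph has the same $\Hent(Y_u)$, and Axiom~4 holds.

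For the ``only if'' direction, and the final sentence of the statement, suppose $\phi(n)\neq\phi(m)$ for some $n\neq m$. I will exhibit a graph in the family with a node $u$ where $|\Nh_k(u)|=n$, and one with a node $\tilde u$ where $|\Nh_k(\tilde u)|=m$. For example, take a path of length $k$ from $u$ ending in a star with $n$ leaves, so that exactly $n$ nodes lie at distance $k$. Then $\Hent(Y_u)\ne\Hent(Y_{\tilde u})$, so Axiom~4 fails.

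``Across all graph families'' means every boundary size $n\ge1$ is realisable, so Axiom~4 on all families forces $\phi$ to be constant on all $n\ge1$. For a restricted family, the same argument shows failure as soon as that family realises two boundary sizes $n,m$ with $\phi(n)\neq\phi(m)$. This is how I read the clause ``non-constant boundary sizes''; I would state that reading explicitly.

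\textbf{Main obstacle: what entropy means here.} For continuous $X$ it is differential entropy, and the convention for $n=0$ (empty boundary) must be fixed, for instance by excluding it or setting the target to a constant. I also need that conditioning on the edge part $E_u^{k'}$ leaks no information about $Y_u$. Under a random graph model, $E_u^{k'}$ could be correlated with $n$. I would handle this by treating $G$ as deterministic, which the axiom quantifies over anyway ("for every $G$"), so that Step 1 is a clean independence argument.

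Finally, I would remark that $f$-stability in Definition~\ref{def:fstable} gives equality in law of $f(\{X_1,\dots,X_n\})$ across $n$, which is sufficient for $\phi$ to be constant but strictly stronger. This is the sense in which the result is only a near-converse to Theorem~\ref{thm:trip}.
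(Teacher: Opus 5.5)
Your proposal is correct and follows the paper's own argument. The paper also uses disjointness of $V_u^{k'}$ and $\Nh_k(u)$ plus i.i.d.\ features to get independence, notes that $Y_u$ has the law of $f(\{X_1,\dots,X_n\})$ with $n=|\Nh_k(u)|$, and reduces Axiom~4 to constancy of that entropy over the realised boundary sizes. Your extras are refinements the paper's short proof leaves implicit: explicitly realising every $n\ge 1$ (put the star's centre at distance $k-1$ from $u$ so the leaves sit at distance exactly $k$), flagging the $n=0$ convention and the deterministic-graph assumption, and reading the last clause as ``two realised sizes with different entropy'', which is exactly what the paper's proof shows.
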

\begin{proof}
Let $n = |\Nh_k(u)|$. Nodes within distance $k' < k$ of $u$ are disjoint from $\Nh_k(u)$, so by the i.i.d.\ assumption $Y_u$ is independent of $G_u^{k'}$, and hence $\Hent(Y_u \mid G_u^{k'}) = \Hent(Y_u)$. Since the features in $\Nh_k(u)$ are $n$ i.i.d.\ samples of $X$, we have $\Hent(Y_u) = \Hent\big(f(\{X_1,\dots,X_n\})\big)$. Axiom~4 therefore requires this quantity to be the same for all boundary sizes $n$ occurring across graphs. If it is constant in $n$, Axiom~4 holds. If it depends on $n$, then any family containing two boundary sizes with different entropy violates Axiom~4.
\end{proof}

So within this template, constancy of the aggregate's law in $n$ is forced by Axiom 4, and $f$-stability is the natural (and, up to entropy-equality, necessary) way to obtain it.

\section{Proofs and Lemmas}
\label{app:proofs}
This appendix contains all the proofs of the theoretical results. To do so, we first prove a series of properties that will be used throughout the proofs.
\subsection{Auxiliary results}

\begin{proposition}
    Given two independent Normal random variables $X_1 \sim N(\mu_1,\sigma_1^2)$ and $X_2 \sim N(\mu_2,\sigma_2^2)$. Then $\alpha X_1+\beta X_2 \sim N(\alpha\mu_1 + \beta \mu_2, \alpha^2\sigma_1^2 + \beta^2\sigma_2^2)$
\end{proposition}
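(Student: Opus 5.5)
The plan is to prove the statement with characteristic functions: they turn the law of a sum of independent variables into a product, and the product is easy to compute here. For $X_j \sim N(\mu_j,\sigma_j^2)$ the characteristic function is
\[
\varphi_{X_j}(t) = \mathbb{E}\bigl[e^{itX_j}\bigr] = \exp\Bigl(i\mu_j t - \tfrac{1}{2}\sigma_j^2 t^2\Bigr).
\]
I take this as a standard fact; it follows by completing the square in the Gaussian integral.

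\textbf{Step 1 (scaling).} For a scalar $\alpha \in \R$, the substitution gives $\varphi_{\alpha X_1}(t) = \varphi_{X_1}(\alpha t) = \exp\bigl(i\alpha\mu_1 t - \tfrac{1}{2}\alpha^2\sigma_1^2 t^2\bigr)$. The same holds for $\beta X_2$.

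\textbf{Step 2 (independence).} Since $X_1$ and $X_2$ are independent, so are $\alpha X_1$ and $\beta X_2$. Hence
\[
\mathbb{E}\bigl[e^{it(\alpha X_1+\beta X_2)}\bigr] = \varphi_{\alpha X_1}(t)\,\varphi_{\beta X_2}(t).
\]
Multiplying the two exponentials gives $\exp\bigl(i(\alpha\mu_1+\beta\mu_2)t - \tfrac{1}{2}(\alpha^2\sigma_1^2+\beta^2\sigma_2^2)t^2\bigr)$. This is exactly the characteristic function of $N(\alpha\mu_1+\beta\mu_2,\ \alpha^2\sigma_1^2+\beta^2\sigma_2^2)$.

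\textbf{Step 3 (uniqueness).} By the uniqueness theorem (L\'evy's inversion), a characteristic function determines the distribution. This yields the claim.

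The only point needing care is the degenerate case. If $\alpha^2\sigma_1^2+\beta^2\sigma_2^2 = 0$, for example when $\alpha=\beta=0$ or the variances vanish, the result should be read as a point mass at the mean, i.e. $N(m,0)$. The characteristic function computation covers this case verbatim, since $e^{imt}$ is the characteristic function of $\delta_m$. I would remark on this so the later use of the Gaussian example of $f$-stability is fully justified.

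Iterating the result gives the consequence used later. If $X_1,\dots,X_n \sim N(0,\sigma^2)$ are i.i.d., then
\[
\frac{X_1+\cdots+X_n}{\sqrt{n}} \sim N\Bigl(0,\ n\cdot\tfrac{1}{n}\sigma^2\Bigr) = N(0,\sigma^2).
\]
The induction on $n$ uses that $X_1+\cdots+X_{n-1}$ is independent of $X_n$. This is what makes the Gaussian example an $f$-stable pair.
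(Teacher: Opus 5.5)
Your proof is correct and follows essentially the paper's route. The paper runs the same transform argument with moment generating functions $M_X(t)=\exp(\mu t+\tfrac12\sigma^2 t^2)$ rather than characteristic functions, factoring by independence and concluding by uniqueness; characteristic functions always exist, whereas MGFs do not in general (though they do for Gaussians), and your degenerate-case remark is a harmless addition the paper omits.
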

\textbf{Proof.}
Recall that the MGF of a normally distributed random variable $X \sim N(\mu, \sigma^2)$ is:
    \[ M_X(t) = \exp\left(\mu t + \frac{1}{2}\sigma^2 t^2\right) \]
Let $Y = \alpha X_1 + \beta X_2$. We want to find the MGF of $Y$, which is $M_Y(t) = E[e^{tY}]$. 
Substituting our expression for $Y$:
    \[ M_Y(t) = E\left[e^{t(\alpha X_1 + \beta X_2)}\right] = E\left[e^{(\alpha t)X_1} e^{(\beta t)X_2}\right] \]

Because $X_1$ and $X_2$ are independent, the expectation of their product separates into the product of their expectations:
    \[ M_Y(t) = E\left[e^{(\alpha t)X_1}\right] E\left[e^{(\beta t)X_2}\right] = M_{X_1}(\alpha t) M_{X_2}(\beta t) \]

Now, substitute the specific normal MGF formulas for $X_1$ and $X_2$:
    \begin{align*}
        M_Y(t) &= \exp\left(\mu_1 (\alpha t) + \frac{1}{2}\sigma_1^2 (\alpha t)^2\right) \exp\left(\mu_2 (\beta t) + \frac{1}{2}\sigma_2^2 (\beta t)^2\right) \\
        &= \exp\left( (\alpha\mu_1)t + \frac{1}{2}(\alpha^2\sigma_1^2)t^2 + (\beta\mu_2)t + \frac{1}{2}(\beta^2\sigma_2^2)t^2 \right) \\
        &= \exp\left( (\alpha\mu_1 + \beta\mu_2)t + \frac{1}{2}(\alpha^2\sigma_1^2 + \beta^2\sigma_2^2)t^2 \right)
    \end{align*}

We recognize this final expression as the exact MGF of a normal distribution with mean $\mu^* = \alpha\mu_1 + \beta\mu_2$ and variance $(\sigma^*)^2 = \alpha^2\sigma_1^2 + \beta^2\sigma_2^2$.

Since MGFs are unique, it follows exactly that:
    \[ \alpha X_1 + \beta X_2 \sim N(\alpha\mu_1 + \beta \mu_2, \alpha^2\sigma_1^2 + \beta^2\sigma_2^2) \]
\begin{proposition}
    Given $n$ i.i.d Gumbel random variables $X_i \sim \text{Gumbel}(\mu, \beta)$ then $\max_{i=1,...,n}\{X_i\} \sim \text{Gumbel}(\mu+\beta \ln(n),\beta)$.
    \label{prop:gumbel1}
\end{proposition}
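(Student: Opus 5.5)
The plan is to compute the cumulative distribution function of the maximum directly and recognise it as a Gumbel CDF with shifted location. Recall that $X \sim \mathrm{Gumbel}(\mu,\beta)$ has CDF
\[
F_X(x) = \exp\!\left(-e^{-(x-\mu)/\beta}\right), \qquad x \in \R,
\]
and a distribution on $\R$ is determined by its CDF. So it suffices to show that $M_n := \max_{i=1,\dots,n} X_i$ has CDF $\exp\!\left(-e^{-(x-\mu-\beta\ln n)/\beta}\right)$.

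First, use the standard identity $\{M_n \le x\} = \bigcap_{i=1}^n \{X_i \le x\}$. Independence then gives
\[
\Prob(M_n \le x) = \prod_{i=1}^n \Prob(X_i \le x).
\]
Since the $X_i$ are identically distributed, this equals $F_X(x)^n$.

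Second, simplify the exponent:
\[
F_X(x)^n = \exp\!\left(-n e^{-(x-\mu)/\beta}\right) = \exp\!\left(-e^{\ln n - (x-\mu)/\beta}\right) = \exp\!\left(-e^{-(x-\mu-\beta\ln n)/\beta}\right).
\]
This is exactly the CDF of $\mathrm{Gumbel}(\mu+\beta\ln n,\beta)$, which concludes the argument.

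There is no genuine obstacle here. The only step needing care is the algebraic absorption of the factor $n$ into the inner exponential, via $n = e^{\ln n} = e^{\beta \ln n/\beta}$, together with consistently using the paper's parametrisation of the Gumbel law (location $\mu$, scale $\beta>0$).

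As a remark for later use, the same computation shows that $\max_i X_i - \beta\ln n \sim \mathrm{Gumbel}(\mu,\beta)$ for every $n$. This is precisely the $f$-stability in Example (iii) and will feed into the proof of \Cref{thm:trip}.
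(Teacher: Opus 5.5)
Your proof is correct and follows essentially the same route as the paper: you use independence to write the CDF of the maximum as $F(x)^n$, then absorb $n = e^{\ln n}$ into the inner exponential to read off the shifted location $\mu + \beta\ln n$. Your closing remark that $\max_i X_i - \beta\ln n \sim \mathrm{Gumbel}(\mu,\beta)$ for every $n$ is also correct, and it is exactly the $f$-stability claimed in Example (iii).
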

\textbf{Proof.}
Let $X_1, X_2, \dots, X_n$ be i.i.d. Gumbel random variables with the cumulative distribution function (CDF):
$$F(x; \mu, \beta) = \exp\left( -e^{-(x-\mu)/\beta} \right)$$

Let $Y = \max\{X_1, \dots, X_n\}$. Since the variables are independent and identically distributed, the CDF of the maximum is:
$$F_Y(y) = P(\max\{X_1, \dots, X_n\} \leq y) = \prod_{i=1}^n P(X_i \leq y) = [F(y; \mu, \beta)]^n$$

Substituting the Gumbel CDF:
$$F_Y(y) = \left[ \exp\left( -e^{-(y-\mu)/\beta} \right) \right]^n$$
$$F_Y(y) = \exp\left( -n \cdot e^{-(y-\mu)/\beta} \right)$$

We can rewrite $n$ as $e^{\ln(n)}$:
$$F_Y(y) = \exp\left( -e^{\ln(n)} \cdot e^{-(y-\mu)/\beta} \right)$$
$$F_Y(y) = \exp\left( -e^{-(y - \mu - \beta \ln(n))/\beta} \right)$$

Comparing this to the standard form $F(y; \mu', \beta)$, we identify the new location parameter:
$$\mu' = \mu + \beta \ln(n)$$

Thus, $Y \sim \text{Gumbel}(\mu + \beta \ln(n), \beta)$. $\qed$

\begin{proposition}
    Given $n$ i.i.d Cauchy random variables $X \sim \text{Cauchy}(x_0, \gamma)$, then $\frac{X_1+ \cdot \cdot \cdot X_n}{n} \sim \text{Cauchy}(x_0, \gamma)$.
\end{proposition}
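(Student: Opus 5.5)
We prove the claim with characteristic functions, mirroring the MGF argument used for the Gaussian case above; the Cauchy law has no MGF, so the characteristic function is the natural replacement. The first step is to record (or quickly derive) the characteristic function of $X \sim \mathrm{Cauchy}(x_0,\gamma)$:
\[
\varphi_X(t) = \mathbb{E}\left[e^{itX}\right] = \exp\left(i x_0 t - \gamma |t|\right).
\]
If a derivation is wanted, we reduce to the standard Cauchy via $X = x_0 + \gamma Z$. We then compute $\mathbb{E}[e^{itZ}] = \frac{1}{\pi}\int_{\R} \frac{e^{itz}}{1+z^2}\,dz = e^{-|t|}$ by contour integration, closing in the upper half-plane for $t>0$ and picking up the residue at $z=i$; the case $t<0$ follows by symmetry.

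Next let $S_n = X_1+\cdots+X_n$. By independence the characteristic function factorises:
\[
\varphi_{S_n}(t) = \prod_{j=1}^n \varphi_{X_j}(t) = \exp\left(i n x_0 t - n\gamma |t|\right).
\]
For the average $\bar X_n = S_n/n$ we use $\varphi_{S_n/n}(t) = \varphi_{S_n}(t/n)$, which gives
\[
\varphi_{\bar X_n}(t) = \exp\left(i n x_0 \tfrac{t}{n} - n\gamma \left|\tfrac{t}{n}\right|\right) = \exp\left(i x_0 t - \gamma|t|\right) = \varphi_X(t).
\]
A characteristic function determines its distribution uniquely (by the Lévy inversion/uniqueness theorem), so $\bar X_n \sim \mathrm{Cauchy}(x_0,\gamma)$ for every $n\ge 1$. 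Applying this with $f(\{x_1,\dots,x_n\}) = (x_1+\cdots+x_n)/n$, which is well defined on multisets because it is permutation-invariant, also shows that the Cauchy distribution is $f$-stable in the sense of Definition~\ref{def:fstable}.

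The only nonroutine step is the characteristic function of the standard Cauchy, which needs the residue computation and the observation that the integral over the semicircular arc vanishes; the bound $|e^{itz}|\le 1$ on the upper half-plane for $t>0$ gives this. Everything after that is algebra. The argument must avoid MGFs and moments, which do not exist for the Cauchy law.
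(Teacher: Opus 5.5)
Your proposal is correct and takes the same route as the paper: factorise the characteristic function of the average using independence, show it equals $e^{itx_0-\gamma|t|}$, and conclude by uniqueness of characteristic functions. The only differences are that you derive the Cauchy characteristic function by residues, where the paper simply recalls it, and that you pass through $S_n$ before rescaling; neither changes the argument.
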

\textbf{Proof.}
Let $X_1, X_2, \dots, X_n$ be i.i.d. random variables with $X_i \sim \text{Cauchy}(x_0, \gamma)$. 
The characteristic function of a Cauchy distribution is given by:
$$\varphi_X(t) = E[e^{itX}] = e^{itx_0 - \gamma|t|}$$

Let $\bar{X} = \frac{1}{n} \sum_{j=1}^{n} X_j$. We find the characteristic function of $\bar{X}$:
\begin{align*}
\varphi_{\bar{X}}(t) &= E\left[e^{it\left(\frac{1}{n}\sum X_j\right)}\right] \\
&= E\left[\prod_{j=1}^{n} e^{i(t/n)X_j}\right]
\end{align*}

Since the $X_j$ are independent, the expectation of the product is the product of the expectations:
\begin{align*}
\varphi_{\bar{X}}(t) &= \prod_{j=1}^{n} E\left[e^{i(t/n)X_j}\right] \\
&= \prod_{j=1}^{n} \varphi_{X_j}\left(\frac{t}{n}\right)
\end{align*}

Substituting the Cauchy characteristic function for each $X_j$:
\begin{align*}
\varphi_{\bar{X}}(t) &= \prod_{j=1}^{n} e^{i(t/n)x_0 - \gamma|t/n|} \\
&= \left(e^{i(t/n)x_0 - \gamma|t/n|}\right)^n \\
&= e^{n \left(i\frac{t}{n}x_0 - \gamma\frac{|t|}{n}\right)} \\
&= e^{itx_0 - \gamma|t|}
\end{align*}

The resulting characteristic function $\varphi_{\bar{X}}(t) = e^{itx_0 - \gamma|t|}$ is exactly the characteristic function of a $\text{Cauchy}(x_0, \gamma)$ distribution. By the uniqueness of characteristic functions, we conclude:
$$\frac{X_1 + \dots + X_n}{n} \sim \text{Cauchy}(x_0, \gamma)$$ 
$\qed$

Another resource that will be used throughout is the following:
\begin{proposition}
    Given an $n,k \in \N$, there exists $G_{n,k}$ where $\forall u \in G_{n,k}$, $|\{v \in G_{n,k} | d_{G_{n,k}}(u,v) = k\}| = n$.
    \label{prop:graphs}
\end{proposition}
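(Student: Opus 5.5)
The plan is to build $G_{n,k}$ explicitly as a vertex-transitive graph with exactly one "antipodal class" of size $n$ at distance $k$ from every node. We then check the distance count directly. We treat $k=1$ separately, since there the statement only asks for an $n$-regular graph. We also dispose of the degenerate case $n=0$ first: any graph of diameter $<k$ works, e.g.\ a single vertex.

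\textbf{Case $k=1$.} Take $G_{n,1}=K_{n+1}$, the complete graph on $n+1$ vertices. Every vertex has exactly $n$ neighbours, i.e.\ exactly $n$ vertices at distance $1$. So the claim holds.

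\textbf{Case $k\ge 2$.} Start from the cycle $C_{2k}$ with vertex set $\mathbb{Z}_{2k}$. In this cycle every vertex $i$ has a unique vertex at distance $k$, namely $i+k$, and every other vertex is at distance $<k$. To get $n$ such vertices, blow up each cycle vertex into a clique. Let $G_{n,k}$ have vertex set $\mathbb{Z}_{2k}\times\{1,\dots,n\}$. Join $(i,a)$ and $(j,b)$ by an edge iff $i=j$ and $a\neq b$, or $|i-j|\equiv 1 \pmod{2k}$. This is the lexicographic product $C_{2k}[K_n]$.

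The key step is the distance formula
\[
d_{G_{n,k}}\big((i,a),(j,b)\big)=
\begin{cases}
1 & \text{if } i=j,\ a\neq b,\\
d_{C_{2k}}(i,j) & \text{if } i\neq j .
\end{cases}
\]
For $i\neq j$ we prove the two inequalities separately.
\begin{itemize}[leftmargin=2em]
\item \emph{Upper bound.} A shortest path $i=i_0,i_1,\dots,i_m=j$ in the cycle lifts to the path $(i_0,a),(i_1,b),\dots,(i_m,b)$. Consecutive vertices lie in adjacent classes, so they are adjacent in $G_{n,k}$.
\item \emph{Lower bound.} The projection $(i,a)\mapsto i$ sends each edge either to an edge of $C_{2k}$ or to a loop (when it stays in one class). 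Hence a path of length $m$ in $G_{n,k}$ projects to a walk of length at most $m$ in $C_{2k}$, giving $m\ge d_{C_{2k}}(i,j)$.
\end{itemize}

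It follows that the vertices at distance exactly $k$ from $(i,a)$ are precisely the $n$ vertices $(i+k,b)$, $b=1,\dots,n$. Indeed, the only cycle vertex at distance $k$ from $i$ is $i+k$. Vertices in the same class are at distance $1<k$, where we use $k\ge2$ to rule them out. All other classes are at cycle distance strictly between $0$ and $k$.

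The only delicate point is exactly this use of $k\ge 2$: same-class vertices must not fall at distance $k$. This is why cliques rather than independent sets are used in the blow-up. Independent sets would put same-class vertices at distance $2$, which breaks the count when $k=2$. With cliques, the verification above is routine and gives $|\Nh_k(u)|=n$ for every $u$, as required.
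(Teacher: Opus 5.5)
Your proof is correct, and it is more careful than the paper's. The paper uses the same basic idea, a blow-up of an even cycle, but with different parameters. It takes $V=\{1,\dots,2n(k+1)\}$ with $u\sim v$ iff $u-v\equiv\pm1 \pmod{2(k+1)}$, i.e.\ $C_{2k+2}$ with each vertex replaced by an \emph{independent} set of size $n$, and declares the count ``immediate''.

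Read as an undirected graph (the paper's standing convention), that construction does not give exactly $n$. In $C_{2k+2}$ the antipode is at distance $k+1$, while the two distinct classes $i\pm k$ both sit at distance $k$. Hence:
\begin{itemize}
\item for $k\ge 3$, every vertex has $2n$ vertices at distance $k$ (already for $n=1$ the graph is just $C_{2k+2}$);
\item for $k=2$, the other $n-1$ members of the vertex's own class are also at distance $2$, giving $3n-1$;
\item for $k=1$, the graph is $K_{2n,2n}$, so every vertex has $2n$ neighbours.
\end{itemize}
So the paper's construction only certifies $|\Nh_k(u)|\ge n$. It gives exactly $n$ only if the edges are read as directed arcs $u\to v$. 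The weaker bound suffices for its uses in \Cref{prop:axiom2} and \Cref{prop:axiom4}, but not for the equality as stated.

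Your design choices are precisely the repairs needed:
\begin{itemize}
\item Cycle length $2k$ makes the class at distance $k$ the unique antipodal one.
\item Cliques put same-class vertices at distance $1<k$. Your remark that independent sets fail at $k=2$ is exactly the paper's $k=2$ defect.
\item The separate $K_{n+1}$ case is genuinely needed, since your $k\ge2$ construction collapses to $C_2[K_n]=K_{2n}$ at $k=1$.
\item Your lift/projection argument supplies the distance formula that the paper leaves unverified.
\end{itemize}

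Two cosmetic points. First, write the adjacency as $i-j\equiv\pm1\pmod{2k}$, since $|i-j|$ is not defined on $\mathbb{Z}_{2k}$. Second, state $k\ge1$ explicitly. For $k=0$ the distance-$0$ set is $\{u\}$, so if $0\in\N$ (as the paper's separate use of $\N_{\ge1}$ suggests), the statement holds for a nonempty graph only when $n=1$.
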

\textbf{Proof. } Given $n,k\in \N$, we set $V = \{1,...,2n(k+1)\}$, and $E = \{(u,v) | u - v \equiv 1 \text{ mod } 2(k+1)\}$, in Figure \ref{fig:graph_donut}, we can observe how such a graph may look like. By definition, it is immediate that $|\{v \in G_{n,k} | d_{G_{n,k}}(u,v) = k\}| = n$. $\qed$
\newline
\begin{figure}
    \centering
    \includegraphics[width=0.5\linewidth]{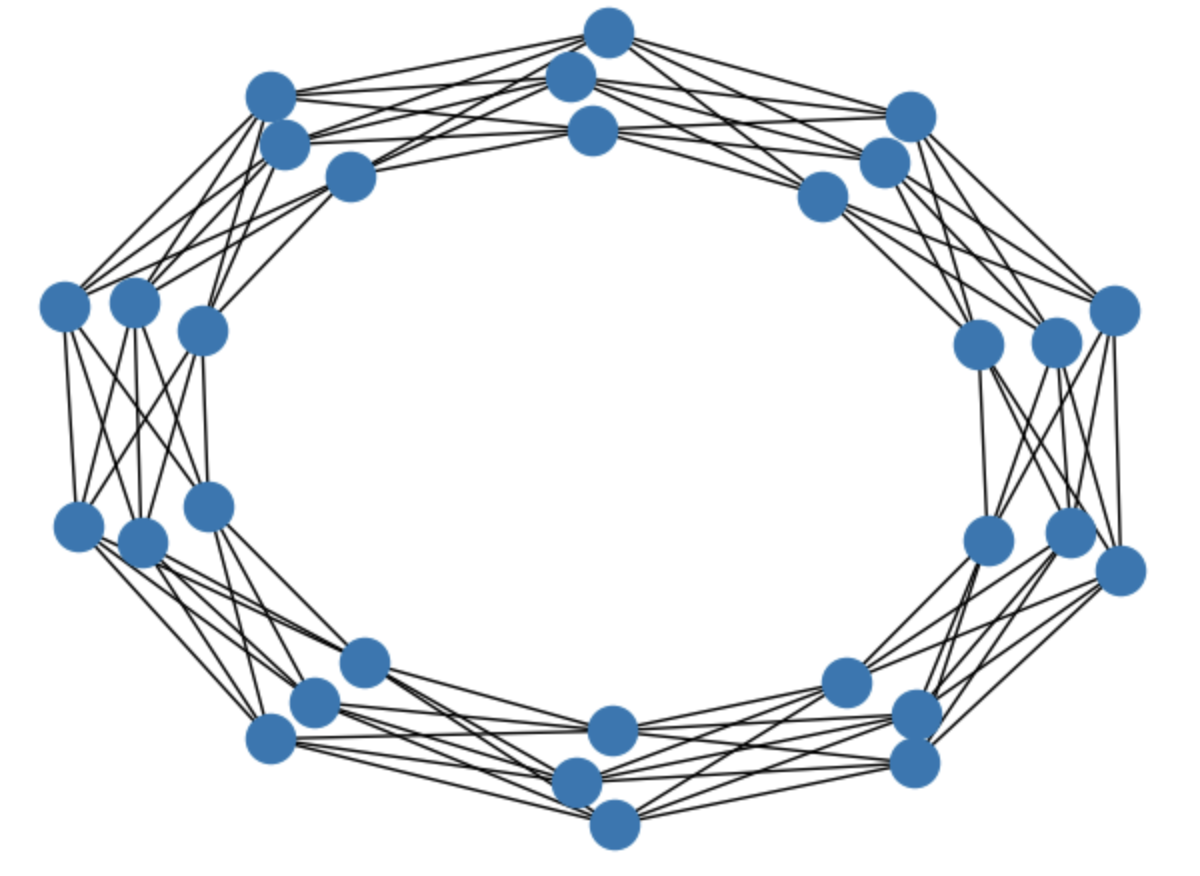}
    \caption{Example of the graph used in the Proposition \ref{prop:graphs} for $n = 3, k=4$}
    \label{fig:graph_donut}
\end{figure}
\subsection{Necessity of the axioms}

\renewcommand{\thesection}{3}
\setcounter{theorem}{1}
\begin{proposition}[Tightness is necessary]
Fix $k > 1$ and let $X \sim \mathcal{N}(0,1)$. Let $\{G_n\}$ be a sequence of graphs in which every node $u$ satisfies $|\Nh_k(u)| \ge n$, and define $Y_u = \max_{v \in \Nh_k(u)} X_v$. The construction satisfies Axioms~1, 3, and 4, but the family $\{Y_u\}$ is not tight: for every $M \in \R$, $\Prob(Y_u > M) \to 1$ as $n \to \infty$.
\end{proposition}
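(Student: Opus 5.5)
The plan is to check each axiom directly from the construction, using Proposition~\ref{prop:graphs} to supply concrete graphs $G_n := G_{n,k}$ in which every node has exactly $n$ nodes at distance $k$. Then finish with an elementary computation of the law of a maximum of i.i.d.\ Gaussians for the failure of tightness.

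\textbf{Axioms 1 and 3.} Axiom~1 is immediate: $Y_u = g(G_u^k)$ with $g$ the map that reads off the features on $\Nh_k(u)$ (determined by $E_u^k$) and takes their maximum. The maximum of finitely many coordinates is measurable.

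For Axiom~3, fix $k' < k$. The boundary $\Nh_k(u)$ is disjoint from $V_u^{k'}$. Since the features are i.i.d.\ and the topology is deterministic, $Y_u$ is independent of $\{X_v : v \in V_u^{k'}\} \cup E_u^{k'}$. Hence $\mathrm{I}(Y_u; G_u^{k'}) = 0$, equivalently $\Hent(Y_u \mid G_u^{k'}) = \Hent(Y_u)$.

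Conversely, $Y_u$ is a deterministic function of $G_u^k$ with a non-degenerate continuous law. Therefore $\Hent(Y_u \mid G_u^k)$ drops strictly below $\Hent(Y_u)$: in the differential setting it is $-\infty$, so $\mathrm{I}(Y_u; G_u^k) > 0$.

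\textbf{Axiom 4.} By the previous step, for every $k'<k$,
\[
\Hent(Y_u \mid G_u^{k'}) = \Hent(Y_u) = \Hent\big(\max\{X_1,\dots,X_{N_u}\}\big), \qquad N_u = |\Nh_k(u)|.
\]
So the conditional entropy depends on $u$ and $G$ only through $N_u$. On $G_{n,k}$ every node has $N_u = n$, so the quantity is the same for all nodes of a given graph.

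This is the step I expect to be the main obstacle. The differential entropy of a Gaussian maximum genuinely varies with $n$, so Axiom~4 holds literally only within graphs, or within families, of a fixed boundary size. I would state the verification in that form: fix $n$ and compare any graphs whose nodes all have boundary size $n$. I would also flag that across the whole sequence $\{G_n\}$ the invariance is understood per member. If a literal cross-$n$ statement is required, the argument would need to be adjusted, for example by noting that Proposition~\ref{prop:converse} shows exactly when this can hold.

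\textbf{Non-tightness.} For any $M \in \R$ and any node $u$ of $G_n$, independence gives
\[
\Prob(Y_u \le M) = \Phi(M)^{N_u} \le \Phi(M)^n \longrightarrow 0 \quad (n\to\infty),
\]
since $\Phi(M) < 1$. Hence $\Prob(Y_u > M) \to 1$.

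Taking $\delta = 1/2$, no finite $M$ satisfies $\Prob(\|Y_u\| > M) < \delta$ uniformly over the family. So Axiom~2 fails.
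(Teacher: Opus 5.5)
Your arguments for Axioms~1 and~3 and for non-tightness match the paper's (its $1-(1-\Prob(X>M))^n$ is your $\Phi(M)^{N_u}\le\Phi(M)^n$). The Axiom~4 obstacle you flag is genuine, and it cannot be closed. With Gaussian features, the statement is false under the literal quantifiers of Axiom~4.

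The paper's proof gets past this point only by asserting that the law of $Y_u$ changes by a pure shift, borrowing the exact identity of Proposition~\ref{prop:gumbel1}. That identity is for Gumbel features. The Gaussian maximum $M_n$ reaches the Gumbel law only after the normalization $(M_n-b_n)/a_n$, with $a_n\sim(2\ln n)^{-1/2}\to 0$. So the law contracts as it shifts: $\mathrm{Var}(M_n)\asymp 1/\log n$ and $\Hent(M_n)\le\tfrac12\log(2\pi e\,\mathrm{Var}(M_n))\to-\infty$. Already $\Hent(\max\{X_1,X_2\})\le\tfrac12\log(2\pi e(1-1/\pi))<\Hent(X_1)$.

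Your per-member reading does not rescue the proposition either, because Axioms~2 and~4 are quantified over the same family of pairs $(G,u)$. If all boundary sizes in a family are at most $N$, the family is tight, since $\Prob(|Y_u|>M)\le N(1-\Phi(M))+\Phi(-M)$. Hence any family violating Axiom~2 contains nodes with arbitrarily large $|\Nh_k(u)|$, and therefore with $\Hent(Y_u)$ unbounded below. Axiom~4 fails on such a family, exactly as the paper's own Proposition~\ref{prop:converse} predicts for non-constant boundary sizes.

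Two smaller points. For a general sequence as in the statement, nodes of one $G_n$ may have different boundary sizes, so even per-graph invariance needs constant boundary size; this is why you must specialize to $G_{n,k}$. Also, the graph of Proposition~\ref{prop:graphs} is a blow-up of the cycle $C_{2k+2}$ with $2n$ boundary nodes for $k\ge 3$, not $n$. That is harmless, since vertex-transitivity is all you use.

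The repair is to change the feature law rather than the argument. Take $X\sim\mathrm{Gumbel}(\mu,\beta)$. Then Proposition~\ref{prop:gumbel1} gives $Y_u\sim\mathrm{Gumbel}(\mu+\beta\ln|\Nh_k(u)|,\beta)$, a pure translate. By translation invariance of differential entropy, $\Hent(Y_u\mid G_u^{k'})=\Hent(Y_u)$ is the same for every $G$ and $u$. Axiom~4 then holds literally, for any sequence $\{G_n\}$ and without vertex-transitivity. Meanwhile $\Prob(Y_u\le M)=\exp\bigl(-|\Nh_k(u)|\,e^{-(M-\mu)/\beta}\bigr)\le\exp\bigl(-n\,e^{-(M-\mu)/\beta}\bigr)\to 0$, so tightness still fails. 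Your Axiom~1 and~3 arguments carry over verbatim. So instead of leaving Axiom~4 as a caveat, state that the Gaussian construction cannot satisfy Axiom~4 while violating Axiom~2, and prove the Gumbel version instead.
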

\renewcommand{\thesection}{\Alph{section}}
\textbf{Proof.} Axiom 1 is satisfied by definition. Axiom 3 is also satisfied as the predictions only rely on the samples at distance exactly $k$ and are independent from the rest. By Proposition \ref{prop:gumbel1} (applied to the Gumbel limit of the standard normal maximum), $Y_u$ is stochastically dominated below by a variable whose location parameter grows without bound as $n \to \infty$. This verifies Axiom~4 as all entropies will be equal since the only change in distribution is a shift in the expected value. But $\lim_{n \rightarrow \infty} P(Y_u > M) = 1$ because $X$ being unbounded above implies $P(X > M) > 0$ for all $M$, so the maximum of $n$ i.i.d.\ copies exceeds $M$ with probability $1 - (1 - P(X>M))^n \to 1$. $\qed$

In the case that a bounded distribution had been chosen, Axiom~4 cannot be satisfied due to the following proposition.

\begin{proposition}
    \label{prop:max-finite}
    Given $X$ a random variable with CDF $F$ such that its support is contained in $[a,b]$ for some $a,b \in \R$, its second order moment exists. Let's consider the random variable $Y_n = \max\{X_1,...,X_n\}$ where $X_i$ are i.i.d copies of $X$. Then for all $n_1$ there exists $n_2$ such that $\Hent(Y_{n_1}) \neq \Hent(Y_{n_2})$
\end{proposition}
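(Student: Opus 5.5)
The plan is to show that $Y_n$ concentrates at the essential supremum of $X$, so that its entropy is forced to degenerate as $n\to\infty$. Since $\Hent(Y_{n_1})$ is a fixed number, any sufficiently large $n_2$ will then give $\Hent(Y_{n_2})\neq \Hent(Y_{n_1})$. I read $\Hent$ as differential entropy for continuous $X$ and as Shannon entropy for discrete $X$. I exclude the degenerate case $X\equiv c$, where every $Y_n\equiv c$ and the statement cannot hold.

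\textbf{Step 1 (concentration).} Let $b^\ast=\sup\{x: F(x)<1\}\le b$. The CDF of $Y_n$ is $F^n$, so for every $\varepsilon>0$ we have $\Prob(Y_n\le b^\ast-\varepsilon)=F(b^\ast-\varepsilon)^n\to 0$, because $F(b^\ast-\varepsilon)<1$. Hence $Y_n\to b^\ast$ in probability. The variables are uniformly bounded in $[a,b]$, so dominated convergence gives convergence in $L^2$. In particular $\mathrm{Var}(Y_n)\to 0$; the finite second moment in the hypothesis is exactly what this step uses.

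\textbf{Step 2a (continuous case).} Suppose $X$ has a density. Then $Y_n$ has density $nF^{n-1}f$. The maximum-entropy bound for a fixed variance gives
\[
\Hent(Y_n)\le \tfrac12\log\bigl(2\pi e\,\mathrm{Var}(Y_n)\bigr).
\]
By Step 1 the right-hand side tends to $-\infty$, so $\Hent(Y_n)\to-\infty$. If $\Hent(Y_{n_1})>-\infty$, choose $n_2$ with $\tfrac12\log(2\pi e\,\mathrm{Var}(Y_{n_2}))<\Hent(Y_{n_1})$. If $\Hent(Y_{n_1})=-\infty$, I would argue separately that some $\Hent(Y_{n_2})$ is finite; for instance $n_2=1$ works when $\Hent(X)$ is finite.

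\textbf{Step 2b (discrete case).} Suppose $X$ has an atom at $b^\ast$ with mass $p$. Then $\Prob(Y_n=b^\ast)=1-(1-p)^n\to1$. By Fano-type continuity of Shannon entropy on a fixed finite support, $\Hent(Y_n)\to 0$. Meanwhile $\Hent(Y_{n_1})>0$ for every $n_1$, because nondegeneracy puts positive mass below $b^\ast$ at every $n$. So again a large $n_2$ works.

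\textbf{Main obstacle.} The hard part is the residual cases that the simple variance or atom argument does not cover. The first is a discrete $X$ with countably many atoms accumulating at $b^\ast$ but no atom at $b^\ast$ itself. There the Shannon entropy of $Y_n$ need not tend to $0$, since the mass spreads over ever more atoms. The second is a mixed distribution. For these I would first try to show that $n\mapsto \Hent(Y_n)$ is not constant, rather than that it has a limit: compare $\Hent(Y_n)$ and $\Hent(Y_{2n})$ directly via $F^{2n}=(F^n)^2$, and show that the map $G\mapsto G^2$ on CDFs strictly changes entropy unless $G$ is degenerate. Failing that, I would restrict the proposition to absolutely continuous or finitely supported $X$, which are the cases actually relevant to the footnote of Proposition~\ref{prop:axiom2}.
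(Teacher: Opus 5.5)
Your Steps 1 and 2a are the paper's proof. The paper shows $Y_n\to b$ in probability from $F(b-\varepsilon)^n\to 0$, deduces $\mathrm{Var}(Y_n)\to 0$, and applies the Gaussian maximum-entropy bound to send $\Hent(Y_n)$ to $-\infty$. You are more careful than the paper in three places. You justify $\mathrm{Var}(Y_n)\to0$ by bounded convergence, whereas the paper's ``convergence in probability implies convergence of all moments'' is false without the uniform integrability that boundedness supplies. You exclude the degenerate case. You notice that the argument silently needs a density and $\Hent(Y_{n_1})>-\infty$. The paper addresses none of this.

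The cases you leave open cannot be closed, because the statement as written is false there, and your fallback lemma is also false.
\begin{enumerate}
\item[(a)] The case $\Hent(Y_{n_1})=-\infty$ in Step 2a can persist for every $n$. Let $X$ have density $f(x)=\bigl((b-x)\log^2(b-x)\bigr)^{-1}$ on $(b-e^{-1},b)$. Then $\int f\log^+ f=\infty$. Because the singularity sits at the top endpoint, $nF^{n-1}f\ge nF(x_0)^{n-1}f$ on $[x_0,b)$ for any fixed $x_0$ in the support. Hence $\Hent(Y_n)=-\infty$ for every $n$.
\item[(b)] Take atoms at $b-1/k$, $k\ge 2$, with masses $p_k\propto 1/(k\log^2 k)$; this is your first residual case. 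By the mean value theorem, $\Prob(Y_n=b-1/k)\in[c_np_k,\,np_k]$ for $k\ge 3$, with $c_n=np_2^{n-1}>0$. Since $\sum_k p_k\log(1/p_k)=\infty$, this gives $\Hent(Y_n)=+\infty$ for every $n$.
\item[(c)] Your lemma that $G\mapsto G^2$ strictly changes entropy for nondegenerate $G$ fails. For $X\sim\mathrm{Bernoulli}(p)$ with $p=(3-\sqrt5)/2$ one has $(1-p)^2=p$, so $Y_2\sim\mathrm{Bernoulli}(1-p)$ and $\Hent(Y_2)=\Hent(Y_1)$. The proposition still holds for this $X$ via large $n_2$, but comparing $n$ with $2n$ cannot be the mechanism.
\end{enumerate}
So the right resolution is the restriction in your last sentence, strengthened in the continuous case to $\Hent(X)>-\infty$. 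Then $nF^{n-1}f\le nf$ gives $\Hent(Y_n)\ge -\log n-n\int f\log^+ f>-\infty$ for every $n$, so your separate case never occurs. Together with Step 2b for nondegenerate, finitely supported $X$, your argument is then a complete proof.
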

\textbf{Proof.} To prove this we will use the following known bound for entropy: given any continuous random variable $Y$ with fixed variance $\text{var}(Y) = \sigma^2$, then $\Hent(Y) \leq \frac{1}{2}\log(2\pi e\sigma^2)$. With this in mind, if we now prove that $\lim_{n \rightarrow \infty} \text{var}(M_n) = 0$ we will get our desired result, because given any $Y_{n_1}$ we can find $n_2$ such that $\Hent(Y_{n_2}) < \log(2\pi e \sigma_{n_2}^2) < \Hent(Y_{n_1})$. So let's prove that $Y_n$ converges in probability to the constant $b$, as convergence in probability implies convergence of all moments, so if $Y_n$ converges to a constant, its variance will converge to zero. Indeed,
\begin{equation}
    \begin{split}
        \lim_{n \rightarrow \infty} P(|Y_n - b| > \varepsilon) & = \lim_{n \rightarrow \infty} P(b - Y_n > \varepsilon) = \lim_{n \rightarrow \infty} P(Y_n < b-\varepsilon) \\
         & = \lim_{n \rightarrow \infty} F(b-\varepsilon)^n = 0
    \end{split}
\end{equation}
Where $F$ is the cumulative density function that satisfies $F(x) < 1$ for all $x < b$, otherwise the upper bound would be lesser. This concludes our proof as $M_n$ converges in probability to a constant, which implies that the variance converges to 0, which allows us to always obtain a variable with less entropy. $\qed$

Essentially, using the example introduced in Proposition \ref{prop:graphs}, given an arbitrary graph $G$ and node $u \in G$ we can find another graph with nodes where the entropy of the predictions will be radically different. Thus arbitrarily biasing the benchmark.

\renewcommand{\thesection}{3}
\setcounter{theorem}{2}
\begin{proposition}[Strict $k$-Range is necessary]
Suppose a task satisfies Axioms~1, 2, 4 with target range $k$, but there exists $i < k$ with $\Hent(Y_u \mid G_u^i) < \Hent(Y_u \mid G_u^{i-1})$. Let $f^\star_i$ be the optimal range-$i$ predictor under any continuous error measure $\mathrm{err}$. Considering a finite set of known range-$k$ functions, and $f$ the one where the error measure is minimum, whenever $\mathrm{err}(f) > \mathrm{err}(f^\star_i)$, an improvement attained at range $i$ from $f^\star_{i-1}$ to $f^\star_i$ is reportable as an improvement at range $k$.
\end{proposition}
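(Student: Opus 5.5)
Our approach is to make precise what ``reportable as an improvement at range $k$'' means, and then show that the hypotheses force a strict, range-$i$-explained gap. We read the statement as follows. Benchmarking compares a finite family $\mathcal{F}_k$ of known range-$k$ predictors against the ``optimal'' reference, and $f = \arg\min_{g \in \mathcal{F}_k} \mathrm{err}(g)$. An improvement is reportable at range $k$ when some model whose effective range is only $i$ attains an error that beats the best range-$k$ reference. The plan has three steps.

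\textbf{Step 1: strict gap between $f^\star_{i-1}$ and $f^\star_i$.} Since $\Hent(Y_u \mid G_u^i) < \Hent(Y_u \mid G_u^{i-1})$, and $G_u^{i-1}$ is a measurable function of $G_u^i$, the chain rule gives
\[
\mathrm{I}(Y_u ; G_u^i \mid G_u^{i-1}) > 0 .
\]
So $Y_u$ is not conditionally independent of $G_u^i$ given $G_u^{i-1}$. For a proper error measure (e.g.\ MSE, or any continuous loss whose Bayes predictor is determined by the conditional law), we deduce that the conditional laws of $Y_u$ given $G_u^i$ and given $G_u^{i-1}$ differ on a set of positive probability. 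Hence $\mathrm{err}(f^\star_i) < \mathrm{err}(f^\star_{i-1})$. Axiom~2 (tightness) and continuity of $\mathrm{err}$ are what guarantee these optimal errors are finite and attained, or at least approached, by measurable predictors. This makes $f^\star_i$ and $f^\star_{i-1}$ well defined, and the gap is a genuine positive number.

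\textbf{Step 2: range-$i$ models beat the range-$k$ reference.} We use the hypothesis $\mathrm{err}(f) > \mathrm{err}(f^\star_i)$. By continuity of $\mathrm{err}$, any range-$i$ predictor $g$ close enough to $f^\star_i$ satisfies $\mathrm{err}(g) < \mathrm{err}(f)$. Such a $g$ is realisable, for instance by a learned approximation of $f^\star_i$. Every range-$i$ predictor is trivially a range-$k$ predictor, because $G_u^i \subseteq G_u^k$. So $g$ registers on the benchmark as a new best range-$k$ model, improving on all of $\mathcal{F}_k$.

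By Step 1, the passage from $f^\star_{i-1}$ to $f^\star_i$ is strictly improving. Its whole gain comes from information in $G_u^i$, i.e.\ from hops $\le i < k$. Axiom~4 makes this independent of the particular graph, since conditional entropies at sub-$k$ ranges agree across graphs, so the artifact cannot be dismissed as topology-specific. Meanwhile Axiom~1 ensures the task is nominally range $k$.

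\textbf{Main obstacle.} The hard part is Step 1's passage from a strict entropy drop to a strict decrease of an \emph{arbitrary} continuous error. Entropy and error are not monotonically linked in general. We would first prove it for MSE, via the variance decomposition $\mathbb{E}[\mathrm{Var}(\mathbb{E}[Y\mid G^i]\mid G^{i-1})] > 0$. We would then argue that for general continuous proper losses the Bayes act changes on a positive-measure set, and restrict $\mathrm{err}$ to such losses if needed. Failing that, we would simply note that if $\mathrm{err}(f^\star_i)=\mathrm{err}(f^\star_{i-1})$, the statement remains true with ``improvement'' weakly interpreted, because Step 2 only uses $\mathrm{err}(f)>\mathrm{err}(f^\star_i)$.
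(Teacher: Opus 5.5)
Your Step~2 is sound, and it is more explicit than the paper's own argument. Since $G_u^i \subseteq G_u^k$, every range-$i$ predictor is also a range-$k$ predictor, so under the hypothesis $\mathrm{err}(f) > \mathrm{err}(f^\star_i)$ a (near-)optimal range-$i$ model tops the range-$k$ leaderboard.

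The gap is in Step~1, the passage from $\mathrm{I}(Y_u; G_u^i \mid G_u^{i-1}) > 0$ to $\mathrm{err}(f^\star_i) < \mathrm{err}(f^\star_{i-1})$. Your proposed repair via MSE does not work. A change in the conditional \emph{law} need not move the conditional \emph{mean}, so $\mathbb{E}[\mathrm{Var}(\mathbb{E}[Y\mid G^i]\mid G^{i-1})]$ can vanish. Here is a concrete counterexample:
\begin{itemize}
\item Let $Z_1, Z_2$ be the normalised Gaussian sums $f(\{x_v\})$ over $\Nh_i(u)$ and $\Nh_k(u)$, using standard \TRIP{} features on graphs with non-empty boundaries.
\item Set $Y_u = Z_1 Z_2$. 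This is a \GRIP{} task with $K=\{i,k\}$ and read-out $h(z_1,z_2)=z_1z_2$.
\item By $f$-stability, $Z_1, Z_2$ are i.i.d.\ $\mathcal{N}(0,1)$ on every such graph, so Axioms~1, 2 and 4 hold. The conditional entropies are $\Hent(Z_1Z_2)$ for $k'<i$ and $\Hent(Z_2)+\mathbb{E}\log|Z_1|$ for $i\le k'<k$.
\item The entropy drop at $i$ is strict, because $\mathbb{E}[Y_u^2\mid Z_1]=Z_1^2$ is non-constant.
\item Yet $\mathbb{E}[Y_u\mid G_u^i]=Z_1\,\mathbb{E}[Z_2]=0=\mathbb{E}[Y_u\mid G_u^{i-1}]$. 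Under MSE, and by symmetry also under absolute error, $f^\star_{i-1}=f^\star_i\equiv 0$, so there is no improvement at all.
\end{itemize}
Your side claim that Axiom~2 makes the optimal errors finite also fails. Cauchy targets, which the paper explicitly allows, form a tight family but give infinite MSE for every predictor.

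So the strict gap cannot be derived for an arbitrary continuous error measure. It does hold if $\mathrm{err}$ is the expected negative log-likelihood of a predictive density. Then the optimal range-$j$ error is exactly $\Hent(Y_u\mid G_u^j)$, and the gap equals the assumed entropy drop $\mathrm{I}(Y_u;G_u^i\mid G_u^{i-1})>0$. For point-prediction losses you must instead add $\mathrm{err}(f^\star_i)<\mathrm{err}(f^\star_{i-1})$ as a hypothesis. Your ``weak interpretation'' fallback does not rescue the statement: it discards the entropy hypothesis and leaves only Step~2. The paper's proof takes the same route as yours and simply asserts $\varepsilon_i<\varepsilon_{i-1}$ from the entropy drop without justification. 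The defect is therefore in the generality of the statement, not an idea the paper supplies that you missed.
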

\renewcommand{\thesection}{\Alph{section}}
\textbf{Proof. } The proof is immediate, as $\Hent(Y_u \mid G_u^{i}) < \Hent(Y_u \mid G_u^{i-1})$ implies that a model with range $i-1$ cannot achieve $0$ error as there is information beyond that range. Let's denote $\varepsilon_{i-1}$ the infimum of the error achievable by a model that only accounts for information on $i-1$ hops, and $\varepsilon_{i}$ the infimum of a model accounting for information on $i$ hops. Then $\varepsilon_i < \varepsilon_{i-1}$. Consequently, if the task is claimed to be of range $k$, then a model of range $i$ can improve on a model of range $i-1$ while still getting reported as an improvement on a range-$k$ task. $\qed$

\renewcommand{\thesection}{3}
\setcounter{theorem}{3}
\begin{proposition}[Topology-Invariance is necessary]
Fix $k > 1$. Sample $x_u \sim X$ i.i.d.\ from any distribution with finite variance, and let $y_u$ be the average of features at distance exactly $k$ from $u$. The construction satisfies Axioms~1, 2, 3, but for every $\varepsilon$ there exist graphs $G$ on which $\Hent(Y_u) < \varepsilon$.
\end{proposition}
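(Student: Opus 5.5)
We verify the three axioms one at a time and then exhibit the low-entropy graphs explicitly, using the family $G_{n,k}$ of Proposition~\ref{prop:graphs}.

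\textbf{Axiom~1.} Predictability is immediate. The map $g(G_u^k)=\frac{1}{|\Nh_k(u)|}\sum_{v\in\Nh_k(u)}x_v$ is a measurable function of the features in $G_u^k$ and of its edge set, since the edge set determines $\Nh_k(u)$.

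\textbf{Axiom~3.} For $k'<k$, the node set $V_u^{k'}$ is disjoint from $\Nh_k(u)$. The features are i.i.d., so $Y_u$ is independent of $\{X_v : v\in V_u^{k'}\}$. The edges $E_u^{k'}$ are deterministic, so they carry no information either. Hence $\mathrm{I}(Y_u;G_u^{k'})=0$. Conversely, $Y_u$ is a non-constant function of $G_u^k$ whenever $X$ is non-degenerate and $\Nh_k(u)\neq\emptyset$, so $\mathrm{I}(Y_u;G_u^k)>0$.

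\textbf{Axiom~2.} Let $\mu=\mathbb{E}X$ and $\sigma^2=\mathrm{Var}(X)<\infty$, and write $n=|\Nh_k(u)|$. Then $\mathbb{E}Y_u=\mu$ and $\mathrm{Var}(Y_u)=\sigma^2/n\le\sigma^2$. Chebyshev's inequality gives
\[
\Prob(|Y_u|>M)\le\frac{\sigma^2}{(M-|\mu|)^2}
\]
for every $M>|\mu|$. This bound is uniform over all graphs and all nodes, which is exactly uniform tightness.

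\textbf{Small entropy.} Given $\varepsilon$, take $G=G_{n,k}$ with $n$ large, so that every node has $|\Nh_k(u)|=n$. Then $\mathrm{Var}(Y_u)=\sigma^2/n\to0$. By the maximum-entropy bound used in the proof of Proposition~\ref{prop:max-finite},
\[
\Hent(Y_u)\le\tfrac12\log\!\left(2\pi e\,\sigma^2/n\right)\to-\infty,
\]
since a fixed variance is maximised in differential entropy by the Gaussian. So $\Hent(Y_u)<\varepsilon$ for $n$ large enough. The central limit theorem is only intuition here; the variance bound suffices.

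This also shows that Axiom~4 fails, as the statement intends. Comparing with a graph where $|\Nh_k(u)|=1$, we have $\Hent(Y_u)=\Hent(X)$, which is a fixed finite value. Because $\Hent(Y_u\mid G_u^{k'})=\Hent(Y_u)$ by Axiom~3, the conditional entropies differ across the two graphs.

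\textbf{Main obstacle.} The delicate step is the entropy interpretation. The bound applies to differential entropy and needs $Y_u$ to have a density. If $X$ is discrete, $Y_u$ is discrete and its Shannon entropy actually grows with $n$, so the claim as stated would fail. I would therefore assume $X$ absolutely continuous, as the paper implicitly does. Under that assumption the average of i.i.d.\ continuous variables is continuous, and the Gaussian maximum-entropy inequality applies directly.
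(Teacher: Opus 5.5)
Your proposal is correct and follows the paper's argument in substance. Both show that $\mathrm{Var}(Y_u)\to 0$ as $|\Nh_k(u)|$ grows, and both then apply the Gaussian maximum-entropy bound $\Hent(Y_u)\le\tfrac12\log(2\pi e\,\mathrm{Var}(Y_u))$. Your exact formula $\mathrm{Var}(Y_u)=\sigma^2/|\Nh_k(u)|$ is a cleaner replacement for the paper's appeal to the CLT. Your explicit checks of Axioms~1--3, including the uniform Chebyshev bound for tightness, supply what the paper's proof leaves unstated.

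Your absolute-continuity caveat is a genuine correction to the statement as written: for non-degenerate discrete $X$ one has $\Hent(Y_u)\ge\Hent(X)>0$ on every graph. Also, you only need $|\Nh_k(u)|\to\infty$, not $|\Nh_k(u)|=n$ exactly. This matters because the blow-up of the cycle $C_{2(k+1)}$ in Proposition~\ref{prop:graphs} actually has $2n$ nodes at distance $k$ (and $3n-1$ when $k=2$).
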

\renewcommand{\thesection}{\Alph{section}}
\textbf{Proof. } So $y_u$ is defined as an average of i.i.d random variables with finite mean and finite variance. As a direct consequence of the Central Limit Theorem, $y_u$ will converge to the constant random variable with value $\mu$ as the size of the average grows, so its variance converges to $0$. Using again the upper bound of the entropy used in the proof of \ref{prop:max-finite}, we can deduce the entropy will tend to $-\infty$. As such, for any $\varepsilon$ we can find a graph with sufficiently large sets of nodes at distance exactly $k$ such that $\Hent(Y_u) < \varepsilon$. $\qed$

\subsection{TRIP and GRIP}
\renewcommand{\thesection}{5}
\setcounter{theorem}{1}
\begin{proposition}[Closure properties]
Let $X$ be $f$-stable and $\tilde X$ be $\tilde f$-stable.
\begin{enumerate}
    \item For any measurable $g \colon \R \to \R$, $X$ is $(g \circ f)$-stable.
    \item The pair $(X, \tilde X)$ is $F$-stable with $F(\{X_i\}, \{\tilde X_j\}) = (f(\{X_i\}), \tilde f(\{\tilde X_j\}))$.
\end{enumerate}
\end{proposition}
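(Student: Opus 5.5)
Both parts follow directly from Definition~\ref{def:fstable}, so the plan is simply to exhibit the required limiting variable in each case.

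For (i), let $Y$ be the variable witnessing $f$-stability of $X$. That is, $f(\{X_1,\dots,X_n\}) \overset{d}{=} Y$ for every $n\ge 1$ and every i.i.d.\ sample. I would take $g(Y)$ as the witness for $g\circ f$. Equality in distribution is preserved under measurable maps: for any Borel set $B$,
\[
\Prob\big(g(f(\{X_1,\dots,X_n\}))\in B\big)=\Prob\big(f(\{X_1,\dots,X_n\})\in g^{-1}(B)\big)=\Prob\big(Y\in g^{-1}(B)\big)=\Prob(g(Y)\in B),
\]
and $g^{-1}(B)$ is Borel because $g$ is measurable. Since $n$ was arbitrary, $X$ is $(g\circ f)$-stable.

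For (ii), the only subtlety is interpretive. Definition~\ref{def:fstable} is stated for real variables and $f$ acting on multisets of reals, so I would read the pair version componentwise, as in the appendix restatement. An i.i.d.\ sample of $(X,\tilde X)$ of size $n$ is mapped by $F$ to $(f(\{X_i\}),\tilde f(\{\tilde X_j\}))$, and I would take $X$ and $\tilde X$ independent, as in the GRIP construction of Definition~\ref{def:grip}. Let $Y,\tilde Y$ witness the stability of $X$ and $\tilde X$, and choose them independent. Then for each $n$, the components $f(\{X_1,\dots,X_n\})$ and $\tilde f(\{\tilde X_1,\dots,\tilde X_n\})$ are functions of independent families, hence independent. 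Their joint law is therefore the product of the marginals, which equals $\mathcal{L}(Y)\otimes\mathcal{L}(\tilde Y)=\mathcal{L}(Y,\tilde Y)$. Equality of laws on rectangles $B\times\tilde B$ extends to all Borel sets of $\R^2$ by the $\pi$-$\lambda$ theorem, so $(Y,\tilde Y)$ witnesses $F$-stability.

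The main obstacle is the independence assumption in (ii). If $X$ and $\tilde X$ are dependent, the joint law of the aggregates can depend on $n$ even when both marginals do not, and the claim may fail in that case. So I would state explicitly that the pair is sampled with independent coordinates, as GRIP does, and I would also allow the two multisets to have different sizes $n,m$; the same product argument covers that case, which is the one GRIP needs.
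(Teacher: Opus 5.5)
Your proof is correct and follows the paper's route. For (i) the paper also uses $g(Y)$ as the witness, but it writes $(g\circ f)(\{X_1,\dots,X_n\}) = g(Y)$ where only equality in distribution holds, which your preimage computation makes rigorous; for (ii) it just says the claim is immediate by definition. The independence you add in (ii) is a hypothesis the paper leaves implicit (GRIP supplies it), and it cannot be dropped. Take $\tilde X = X \sim \mathrm{Cauchy}(0,1)$, $f$ the arithmetic mean and $\tilde f$ the harmonic mean $n/\sum_i x_i^{-1}$. Both aggregates are standard Cauchy for every $n$ and coincide at $n=1$, but differ almost surely at $n=2$, so the joint law of $F$ depends on $n$.
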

\renewcommand{\thesection}{\Alph{section}}
\textbf{Proof.} For $X$ to be $g \circ f$-stable, we need to verify that $(g \circ f) (\{X_1,...,X_n\}) \overset{d}{=} Z$ for some fixed random variable $Z$. Indeed, $(g \circ f) (\{X_1,...,X_n\}) = g(Y)$ for some $Y$ because $X$ is $f$-stable. Then, as $g$ is measurable, $g(Y)$ is a random variable $Z$. The second point is immediate by definition. $\qed$

\renewcommand{\thesection}{5}
\setcounter{theorem}{3}
\begin{theorem}[\TRIP{} satisfies all four axioms]
For any graph $G$ and any $f$-stable pair $(X, f)$, the \TRIP{} task at range $k$ satisfies Axioms~1--4. In particular, the marginal $Y_u$ and the entropy $\Hent(Y_u)$ are graph-independent constants determined only by $(X, f)$.
\end{theorem}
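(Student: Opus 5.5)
The plan is to verify the four axioms in order, using two facts throughout. First, by $f$-stability the conditional law of $Y_u$ given the boundary size $n=|\Nh_k(u)|\ge 1$ is the fixed law of the random variable $Y$ from Definition \ref{def:fstable}. Second, the features in $\Nh_k(u)$ are disjoint from, and independent of, those in $V_u^{k'}$ for every $k'<k$.

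\textbf{Axiom~1.} Take $g(G_u^k) = f(\{x_v : v \in G_u^k,\ d_G(u,v)=k\})$. This is determined by $G_u^k$, since the edge set $E_u^k$ fixes the distances up to $k$ from $u$. Measurability follows if we require, as is implicit in the definition, that $f$ restricted to $n$-element multisets is a measurable map $\R^n\to\R$.

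\textbf{Axiom~2.} For every $G$ and $u$ we have $Y_u \overset{d}{=} Y$, and any single real random variable is tight. For $\delta>0$, choose $M$ with $\Prob(|Y|>M)<\delta$; this one $M$ then works uniformly over all graphs and nodes.

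\textbf{Axioms~3 and 4.} For $k'<k$, every $v\in\Nh_k(u)$ satisfies $d_G(u,v)=k>k'$, so $\Nh_k(u)\cap V_u^{k'}=\emptyset$. The features are i.i.d. and the graph is fixed (non-random), so $Y_u$ is a function of variables independent of $G_u^{k'}$. Hence $\mathrm{I}(Y_u;G_u^{k'})=0$ and $\Hent(Y_u\mid G_u^{k'})=\Hent(Y_u)=\Hent(Y)$. This value depends only on $(X,f)$ and not on $G$, $u$ or $n$, which gives Axiom~4 and the final ``in particular'' claim; the same independence argument is what gives Proposition \ref{prop:converse}.

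For the positive part of Axiom~3, $Y_u=g(G_u^k)$ is a deterministic function of $G_u^k$. Thus $\Hent(Y_u\mid G_u^k)$ is $0$ for discrete $Y$, and for continuous $Y$ it degenerates to $-\infty$. In either case $\mathrm{I}(Y_u;G_u^k)=\Hent(Y_u)-\Hent(Y_u\mid G_u^k)>0$, and it is in fact infinite in the continuous case. This requires $Y$ to be non-degenerate, which holds for all three examples.

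\textbf{Main obstacle.} The step I expect to be delicate is exactly these edge cases rather than the core argument. Two need handling.
\begin{enumerate}
\item When $\Nh_k(u)=\emptyset$ there is no boundary, so $Y_u=f(\emptyset)$ is a constant and the strict inequality fails. I would either restrict the statement to nodes with nonempty $k$-boundary, or set $f(\emptyset)$ to an independent draw of $Y$, as done in practice in Section \ref{sb-sct:curvature}.
\item For continuous targets the conditional differential entropy given a deterministic function is $-\infty$. I would phrase the positivity through mutual information, $\mathrm{I}(Y_u;G_u^k)=\mathrm{I}(Y_u;Y_u)>0$ for any non-constant $Y_u$, which avoids manipulating infinite differential entropies.
\end{enumerate}
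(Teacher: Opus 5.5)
Your proposal is correct and follows the paper's proof essentially step for step: $f$-stability gives $Y_u \overset{d}{=} Y$, which yields Axioms~2 and~4. The disjointness of $\Nh_k(u)$ from $V_u^{k'}$ together with i.i.d.\ features gives $\mathrm{I}(Y_u;G_u^{k'})=0$, and $Y_u$ being a measurable function of $G_u^k$ gives the strict part of Axiom~3. You also rightly flag the empty-boundary case, which the paper's proof silently assumes away even though its statement says ``for any graph $G$.''

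Of your two remedies for that case, only the first works, namely restricting to nodes with $\Nh_k(u)\neq\emptyset$. Replacing $f(\emptyset)$ by an independent draw of $Y$ makes $Y_u$ independent of $G_u^k$, which breaks Axiom~1 and the strict inequality in Axiom~3. Also, Section~\ref{sb-sct:curvature} actually sets such targets to $0$, not to a random draw.
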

\renewcommand{\thesection}{\Alph{section}}
\textbf{Proof.} Axiom~1 is immediate from the definition of \TRIP{}. By $f$-stability, $Y_u \overset{d}{=} Y$ for all $G$ and $u$, where $Y = f(\{X_i\}_{i=1}^n)$ is the same random variable for every $n \ge 1$; this yields Axiom~2 (the singleton family $\{\mathcal{L}(Y)\}$ is automatically tight) and Axiom~4 (the conditional entropy below the target range only sees i.i.d.\ features whose distribution does not depend on $(G,u)$). For Axiom~3: $\Nh_k(u)$ is disjoint from $G_u^{k-1}$ and the $X_v$ are i.i.d.\ across nodes, so $Y_u$ is independent of every $G_u^{k'}$ with $k' < k$, giving $\mathrm{I}(Y_u; G_u^{k'}) = 0$. Conversely, $Y_u$ is a measurable function of features in $\Nh_k(u) \subseteq G_u^k$, so $\mathrm{I}(Y_u; G_u^k) > 0$ whenever $f$ is non-constant. The marginal-constant claim follows from $\mathcal{L}(Y_u) = \mathcal{L}(Y)$. $\qed$
\renewcommand{\thesection}{5}
\setcounter{theorem}{5}
\begin{theorem}[Per-range MLE oracle]
Fix a \GRIP{} task with parameters $\{(X_i, f_i, K_i)\}_i$ and read-out $h$, and let $k^\star \ge 0$ be a target observation range. Let $\hat X_i$ be the MLE realisation of $X_i$. The Maximum-Likelihood predictor of $Y_u$ given $G_u^{k^\star}$ is $h(\tilde y_u)$, where for each $(i,k)$,
\[
\tilde y_u^{(i,k)} =
\begin{cases}
f_i\big(\{(x_v)_i : v \in \Nh_k(u)\}\big) & \text{if } k \le k^\star,\\
f_i\big(\{\hat X_i\}\big) & \text{if } k > k^\star.
\end{cases}
\]
The expected error of this predictor is a graph-independent function of $(\{(X_i, f_i, K_i)\}_i, h, k^\star)$ alone, and provides a lower bound for any model with effective range $\le k^\star$.
\end{theorem}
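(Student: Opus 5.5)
The plan is to split $\hat y_u$ into an observed block and an unobserved block, and then show three things in turn: the unobserved block is independent of $G_u^{k^\star}$, the MLE acts coordinatewise on it, and the invariance principle for MLEs \cite[\S 7.2]{casella2024statistical} carries the result through $h$.

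First, write $\hat y_u = (\hat y_u^{\le}, \hat y_u^{>})$. Here $\hat y_u^{\le}$ collects the coordinates $(i,k)$ with $k \le k^\star$, and $\hat y_u^{>}$ collects those with $k > k^\star$. Each coordinate of $\hat y_u^{\le}$ is a measurable function of features in $\Nh_k(u) \subseteq G_u^{k^\star}$. So conditionally on $G_u^{k^\star}$ this block is deterministic, and the only likelihood-maximizing value is the observed one, $f_i(\{(x_v)_i : v\in\Nh_k(u)\})$.

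Second, for $k > k^\star$ the boundary $\Nh_k(u)$ is disjoint from $V_u^{k^\star}$. Since features are independent across nodes and coordinates, $\hat y_u^{>}$ is independent of $G_u^{k^\star}$, exactly as in the Axiom~3 step of \Cref{thm:trip}. By $f_i$-stability, each coordinate $\hat y_u^{(i,k)}$ is distributed as $f_i(\{X_i\})$, the $n=1$ case of \Cref{def:fstable}. Different coordinates $(i,k)$ and $(i,k')$ use disjoint boundaries, and different $i$ use independent feature coordinates. Hence $\hat y_u^{>}$ is a product of independent laws, each equal to the law of $f_i(\{X_i\})$, with no dependence on $G$; the closure property of \Cref{prop:closure}(ii) lets us treat the stacked vector as one stable object. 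The joint likelihood factorizes, so it is maximized coordinatewise. By the invariance principle, the MLE of $f_i(\{X_i\})$ is $f_i(\{\hat X_i\})$. Invariance then applied to $h$ gives the MLE of $Y_u = h(\hat y_u)$ as $h(\tilde y_u)$.

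Third, the expected error is computed from the law of $(\hat y_u^{\le}, \hat y_u^{>})$. Each coordinate has law $f_i(\{X_i\})$ regardless of boundary sizes, and the coordinates are independent. The error therefore depends only on $(\{(X_i,f_i,K_i)\}_i, h, k^\star)$ and not on the graph. One caveat: if a boundary $\Nh_k(u)$ is empty, the construction needs a convention, and I would assume nonempty boundaries.

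For the lower-bound claim, let a model have effective range $\le k^\star$. Its output is a measurable function of $G_u^{k^\star}$. The idea is to condition on $G_u^{k^\star}$ and compare the model pointwise with the oracle.

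I expect this step to be the main obstacle. The MLE is not in general the Bayes-optimal predictor for an arbitrary error measure: the mode need not minimize MSE, and the invariance principle is a statement about reparametrizing a parameter, not about predicting a random variable. I would try to make the claim rigorous by working with the conditional law of $Y_u$ given $G_u^{k^\star}$, which is fully explicit from the decomposition above, and reading "MLE" as the conditional mode. The bound would then hold for the error measure the mode is optimal for (0–1 / maximum a posteriori (MAP) loss). For MSE I would add a remark that the same conditional-law argument gives the sharp bound via the conditional mean. This replaces $f_i(\{\hat X_i\})$ by the corresponding expectation, and the two coincide in the symmetric examples (Gaussian, Cauchy location) when $h$ is linear. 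Either way, graph-independence of the bound follows from the independence and stability facts in the previous paragraphs.
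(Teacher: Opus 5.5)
Your first three paragraphs reproduce the paper's proof almost step for step. The observed block is fixed by $G_u^{k^\star}$, the $k>k^\star$ block is independent of it, and the invariance principle of \cite[\S 7.2]{casella2024statistical} is pushed through $f_i$ and then $h$. Your graph-independence argument via the fixed product law of $\hat y_u$ is sharper than the paper's appeal to Axioms~2 and~4. The paper gives no separate argument for the lower-bound clause.

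Your diagnosis of that clause is correct, and it applies to the paper's proof too. \textbf{But your repair does not close the gap, because your second paragraph still relies on the step your diagnosis rejects.} Read as a conditional mode, the ``MLE'' is not carried through nonlinear maps. The density of $h(V)$ picks up a Jacobian factor, so its mode is not $h$ of the mode of $V$. This is harmless for $f_i$ in the three examples, where $f_i(\{x\})=x$, but not for $h$. In addition, for skewed laws such as the Gumbel, the mode of a sum of independent copies is not the sum of the modes.

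The paper's own cube-root \GRIP{} is a counterexample. Take Gaussian coordinates, observed partial sum $a$, and $m\ge 1$ unobserved coordinates. Conditionally, $Y_u=(a+W)^{1/3}$ with $W\sim\mathcal{N}(0,m)$, whose density is $p(y)=3y^2\phi_m(y^3-a)$. Then $p'(a^{1/3})=6a^{1/3}\phi_m(0)\neq 0$ for $a\neq 0$, and $p(0)=0$ for $a=0$. So the plug-in $h(\tilde y_u)=a^{1/3}$ is never the conditional mode.

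Your MSE fallback fails in the same example. The conditional mean is $\mathbb{E}[h(\hat y_u^{\le},\hat y_u^{>})\mid \hat y_u^{\le}]$, not $h$ with expectations plugged in. Since $(a+w)^{1/3}+(a-w)^{1/3}<2a^{1/3}$ for all $w\neq 0$ and $a>0$, the conditional mean lies strictly below $a^{1/3}$ (and strictly above it for $a<0$, by oddness). Hence a range-$k^\star$ predictor strictly beats $h(\tilde y_u)$ in MSE whenever $a\neq 0$ with positive probability, e.g.\ $1\le k^\star\le 8$ in the paper's nine-range setup. The claimed lower bound is therefore false there. Also, Cauchy coordinates have no mean and every predictor has infinite MSE, so the plug-in and the conditional mean do not ``coincide'' in the Cauchy case.

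What your decomposition does prove, with no invariance principle at all, is a Bayes-risk statement. The conditional law of $Y_u$ given $G_u^{k^\star}$ is the law of $h(\hat y_u^{\le}, Z)$, where $Z$ is a vector of independent copies of the $f_i(\{X_i\})$. Fix any loss $\ell$ and let $c^\star(\hat y_u^{\le})$ minimise $c\mapsto\mathbb{E}[\ell(Y_u,c)\mid \hat y_u^{\le}]$. Then:
\begin{itemize}
\item $c^\star$ is $G_u^{k^\star}$-measurable;
\item its risk is graph-independent, given nonempty boundaries;
\item by the tower property, it lower-bounds the risk of every $G_u^{k^\star}$-measurable predictor.
\end{itemize}

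The formula $h(\tilde y_u)$ equals $c^\star$ only under extra hypotheses, which must be added to the statement. One sufficient set is squared loss, $h$ affine, and $f_i(\{\hat X_i\})=\mathbb{E}[f_i(\{X_i\})]$. The last condition holds for the Gaussian but not for the Gumbel, whose mode is $\mu$ and mean is $\mu+\gamma_{\mathrm{E}}\beta$.

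Finally, state explicitly that ``effective range $\le k^\star$'' means $G_u^{k^\star}$-measurability. Under the averaged Jacobian range \eqref{eq:bamberger-range}, a model with measured range $\le k^\star$ can still use nodes beyond $k^\star$, as in the paper's own range-$1$/range-$9$ example, and then no such bound holds.
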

\renewcommand{\thesection}{\Alph{section}}
\textbf{Proof.} Since for the ranges conditioned on $G_u^{k^\star}$ the Maximum Likelihood estimator is the actual known values. The values at ranges greater than $k^\star$ are independent of the rest of the values; as such we obtain their MLE estimator through a direct application of the invariance principle for maximum likelihood estimators described in Pages 319--323 of \cite{casella2024statistical}, as we are just composing a function. The same invariance principle is applied to go from $\tilde{y}_u$ to $h(\tilde{y}_u)$. Graph-independence of the expected error follows from Axioms~2 and 4, which guarantee that $Y_u$ and all conditional distributions below the target range are graph-independent. $\qed$

\section{Long-rangedness and homophily}
\label{app:homo}

Throughout this work we assume i.i.d sampled features. Thus enforcing low feature homophily. In this section we look into the interplay between feature homophily and our notion of long-rangedness through a simple example.

Let's consider the following graph:

$$node_1 --- (node_1, node_4, node_5, ..., node_n) --- node_{n+1}$$

Where $node_1$ and $node_{n+1}$ are connected to the nodes $2,...,n$.

Let's consider a generic range-2 interaction, that is, each node is a function of nodes at distance 2. And we name $X_i$ the features attached to node i. Also, let’s assume that for each node the correlation, i.e covariance between the two variables divided by product of variances, with another node is greater the closer it is in the graph. That is, nodes at distance 1 have greater correlation than those at distance 2 and so on. Since non-zero correlation implies non-zero mutual information, we will now work in terms of mutual information. Now we split this into two cases according to whether the following chain-rule inequality is strict or not, $I(X_1; X_2,...,X_n) \geq I(X_1;X_2,...,X_{n-1})$. If $I(X_1; X_2,...,X_n) > I(X_1;X_2,...,X_n-1)$, then the more nodes we add, the n-1 nodes in the middle would contain more and more information on nodes 1 and n+1. This implies that how well $node_1$ could infer $node_n+1$’s features with range-1 information would depend on graph size, thus introducing structural bias. If $I(X_1; X_2,...,X_n) = I(X_1;X_2,...,X_{n-1})$, it implies that all $X_2,...,X_{n-1}$ contain the same information on $X_1$. This would introduce a structural bias, as node’s 1 and $n+1$ would only need to extract information from exactly ONE node, thus ignoring the actual structure of the graph. This very simple example showcases how adding correlations to the setting can cause: structural biases and the range of the interactions to decrease. 

This example suggests that homophily may be intrinsically related to short-range structure, as correlation (homophily) in the data may collapse long-ranged information on smaller effective ranges. Nevertheless, the example is illustrative rather than a general proof, and we believe there remains a not-yet-fully-understood interplay between homophily and range that merits further investigation.

\section{Data Generation}
Generation is deterministic given (topology collection, $k$, $(X,f)$, seed). \href{https://www.kaggle.com/datasets/ferranh/trip-grip-sampled-data}{https://www.kaggle.com/datasets/ferranh/trip-grip-sampled-data} contains all the data used for sections 6.1 and 6.2 and a README.md with instructions on where each file should be placed in order to run the experiments.
\section{Inter-graph comparisons ablations}
\label{app:graph-comparison}
To compare the results across graphs there is one thing that must be taken into account. In the $(X,f)$ pairs such that $X$ is $f$-stable, $f$ contains terms which are just the features of the other nodes, and also the size $n$ of the set of nodes at distance exactly $k$. In graphs where $n$ is constant, such as regular graphs, the TRIP problem is easier to fit by a GNN, as the $n$ term does not change across different nodes. To make the comparison between two graphs fair it is necessary to compute the distribution of these values and compute its entropy. To ensure that the amount of bits necessary to capture them is the same between the graphs that we want to compare. In the following tables we report the entropies of all the $n$ values of the graphs we have compared in the paper. That is: the graphs generated using the expected degree algorithm with their SDRF processed version, as well as the charge and synthetic topologies of ECHO \cite{miglior2025can}. We observe that in the ECHO datasets the entropy difference is, at most 0.7 bits. In the SDRF case there are some larger differences in the cases of 6 and 7, which were appropriately discussed in the main manuscript.

\begin{table}[!ht]
    \centering
    \begin{tabular}{ccccccccc}
    \hline
         & ~ & ~ & ~ & k & ~ & ~ & ~ \\ \hline
        ~ & 1 & 2 & 3 & 4 & 5 & 6 & 7 & 8 \\ \hline
        synth & 2.04 & 3.02 & 3.44 & 3.68 & 3.84 & 3.95 & 4.02 & 4.06 \\ 
        charge & 1.54 & 2.52 & 2.87 & 3.17 & 3.3 & 3.33 & 3.33 & 3.27 \\ \hline 
    \end{tabular}
    \caption{This table shows the entropy of the distribution across all nodes of the ECHO synth and charge topologies of the amount of nodes at distance $k$. }
\end{table}

\begin{table}[!ht]
    \centering
    \begin{tabular}{cccccccc}
    \hline
        ~ & ~ & ~ & k & ~ & ~ & ~ \\ \hline
        ~ & 1 & 2 & 3 & 4 & 5 & 6 & 7 \\ \hline
        base & 3.83 & 6.68 & 8.46 & 7.56 & 8.29 & 5.46 & 1.78 \\ 
        SDRF & 4.39 & 7.43 & 8.29 & 8.24 & 7.27 & 2.09 & 0.04 \\ \hline
    \end{tabular}
    \caption{This table shows the entropy of the distribution across all nodes of the randomly generated graphs and their SDRF processed versions of the amount of nodes at distance $k$.}
\end{table}

\section{Limitations and Future Work}
\label{app:limitations}
To provide the theoretical guarantees described in this work it is necessary to use synthetic data. For this reason, future work directions mostly involve closing the gap between this synthetic framework and real-world settings. For example, further work on the $f$-stable notion of distributions may reveal ways to build kernels to simulate real-world data. Another potential line of future research is to relax Axiom~1 to make $g$ a non-deterministic function to study the effect of noise in long-range interactions modeling.

As mentioned in Appendix \ref{app:graph-comparison}, it is necessary to measure the entropy of certain distributions per graph in order to compare how hard it is to model long-range interactions under those settings. We believe that the mentioned entropy is a good proxy for the complexity of real-world graphs, as the regular topologies will be easier to model than those with varied connectivity patterns. This is in line with the work presented in \cite{bechler2024graph}.

\section{Hyperparameter configurations}
\label{app:hyper-param}
\subsection{Synthetic configurations}
For the TRIP experiments on cycles we used hidden dimension of 16, a learning rate of 0.001 and 2000 epochs. For the GRIP experiments on cycles we used the same learning rate and number of epochs but increased the hidden dimension to 64. Running all the TRIP and GRIP experiments shown in \Cref{sec:tripgrip} had a runtime of roughly 10 hours on an Apple M5 chip with 16GB of RAM.
\subsection{Curvature analysis}
To generate the graphs an expected degree algorithm was used. In the following manner:

\begin{algorithm}[H]
\caption{Graph Generation via Expected Degree Model}
\KwIn{Graph size $n$, degree support $\mathcal{S} = \{d_{\min}, \dots, d_{\max}\}$, target mean $\mu$, std dev $\sigma$}
\KwOut{Connected graph $G$}

\BlankLine
\tcp{Step 1: Fit Boltzmann distribution over degree support}
Solve for center $c$ such that $\mathbb{E}_{p_c}[d] = \mu$, where
$p_c(d) \propto \exp\!\left(-\tfrac{1}{2}\left(\tfrac{d - c}{\sigma}\right)^2\right)$\;

\BlankLine
\tcp{Step 2: Sample a degree sequence}
Draw $d_1, d_2, \dots, d_m \overset{\text{i.i.d.}}{\sim} p_c$ over $\mathcal{S}$,
where $m \geq n$ accounts for nodes that will be lost when extracting the largest connected component\;

\BlankLine
\tcp{Step 3: Sample a graph from the degree sequence}
Construct graph $G'$ using the expected degree model: connect each pair of nodes $(i, j)$ independently with probability
$p_{ij} = \frac{d_i \, d_j}{\sum_k d_k}$\;

\BlankLine
\tcp{Step 4: Extract the largest connected component}
Identify all connected components $\mathcal{C}_1, \mathcal{C}_2, \dots$ of $G'$\;
Set $G \leftarrow G'\!\left[\,\arg\max_{l} |\mathcal{C}_l|\,\right]$ \tcp*{subgraph induced by largest component}

\BlankLine
\Return{$G$}\;
\end{algorithm}

For the curvature experiments we have used a learning rate of 0.001, hidden dimension of 32 and 2000 epochs to train a standard GCN. The SDRF algorithm was run with $1000$ loops, $0.5$ removal bound and $1.0$ as tau. All of these experiments were run on an Apple M5 chip. The runtimes of the various parts were the following: applying SDRF had a runtime of approximately 2 hours and training the models had a runtime of roughly 8 hours.
\subsection{ECHO}
To make the comparison with the real ECHO benchmark fair, we have performed analogous hyperparameter sweeps at each range for each model. All the search intervals and settings are the same as those of \cite{miglior2025can} except for the number of layers, hidden dimension, learning rate and weight decay. Runtimes per epoch reported were on a single NVIDIA A100-SXM-80GB GPU. In the tables below we report the sweeps as well as the final configurations. To be mindful of computing resources we stopped running sweeps on a model once its performance was equal to that of the model predicting always $0$. Because the analysis done in \ref{app:graph-comparison} allows us to ascertain that the problems on the longer ranges will be as, if not more, difficult. Consequently, if a model fails to learn any significant signal at range $k$, it will also fail at ranges higher than $k$. The run times per epoch are reported through
\begin{table}[h]
\centering
\small
\caption{Hyperparameters and their best values across tasks PH-DGN.}\vspace{1mm}
\label{tab:integrated_hyperparams_phdgn}
\resizebox{\textwidth}{!}{%
\begin{tabular}{llllllllll}
\toprule
\textbf{Hyperparameter} & \textbf{Search interval} & synth-2 & synth-4 & synth-6 & synth-8 & charge-2 & charge-4 & charge-6 & charge-8  \\
\midrule
Number of layers & $[k-k+2]$ & 4 & 5 & 8 & 10 & 2 & 6 & 7 & 10 \\
Hidden dimension & $[16 - 128]$ & 123 & 65 & 90 & 108 & 37 & 128 & 109 & 101 \\
Learning rate & $[10^{-3} - 10^{-2}]$ & 0.0019829 & 0.0036591 & 0.0035931 & 0.0032598 & 0.0025752 & 0.00046393 & 0.001766 & 0.0016224 \\
Weight decay & $[10^{-8} - 10^{-13}]$ & 0.00017623 & 0.000649 & 0.00043574 & 0.00017166 & 0.000012821 & 0.00002681 & 0.000032552 & 0.000010521 \\
\midrule
$\epsilon$ & $[0.001 - 1.0]$ & 0.6 & 0.62677 & 0.63991 & 0.50392 & 0.96703 & 0.18725 & 0.48771 & 0.62536 \\
$\alpha$ & $[0.01 - 1.0]$ & 0.62968 & 0.79134 & 0.96059 & 0.20735 & 0.7506 & 0.019947 & 0.24087 & 0.34108\\
$\beta$ & $[0.01 - 1.0]$ & 0.62319 & 0.89197 & 0.012745 & 0.16402 & 0.85072 & 0.59061 & 0.79638 & 0.72338 \\
$p$ conv mode & $[\text{NaiveAggr}, \text{GCN}]$ & NaiveAggr & GCN & GCN & GCN & NaiveAggr & NaiveAggr & NaiveAggr & NaiveAggr \\
$q$ conv mode & $[\text{NaiveAggr}, \text{GCN}]$ & NaiveAggr & NaiveAggr & NaiveAggr & NaiveAggr & GCN & NaiveAggr & NaiveAggr & NaiveAggr \\
Doubled dimension & $[\text{True}, \text{False}]$ & True & False & False & False & True & True & True & False \\
Final state & $[p, q, pq]$ & \(q\) & \(p\) & \(pq\) & \(pq\) & \(pq\) & \(pq\) & \(p\) & \(q\) \\
Dampening mode & \makecell[l]{$[$param,\\param+,\\MLP4ReLU,\\DGNReLU,$]$} & param & param & param & MLP4ReLU & param & param & DGNReLU & MLP4ReLU
 \\
External mode & $[\text{MLP4Sin}, \text{DGNtanh}]$ & MLP4Sin & MLP4Sin & MLP4Sin & MLP4Sin & MLP4Sin & MLP4Sin & MLP4Sin & MLP4Sin \\
\midrule
Runtime/Epoch (s) & - & 1.97 & 1.88 & 1.99 & 2.20 & 6.73 & 11.42 & 7.30 & 10.03 \\
\bottomrule
\end{tabular}
}
\end{table}

\begin{table}[h]
\centering
\small
\caption{Hyperparameters and their best values across tasks for SWAN.}\vspace{1mm}
\vspace{1mm}
\label{tab:integrated_hyperparams_swan}
\resizebox{\textwidth}{!}{%
\begin{tabular}{llllllllll}
\toprule
\textbf{Hyperparameter} & \textbf{Search interval} & synth-2 & synth-4 & synth-6 & synth-8 & charge-2 & charge-4 & charge-6 & charge-8 \\
\midrule
Number of layers & \([k-k+2] \) & 4 & 4 & 8 & - & 2 & 6 & 8 & -\\
Hidden dimension & \([16 - 128] \) & 57 & 45 & 88 & - & 26 & 91 & 124 & - \\
Learning rate & $[10^{-3} - 10^{-2}]$ & 0.0050446 & 0.0058967 & 0.0028226 & - & 0.0023166 & 0.0011137 & 0.0061644 & - \\
Weight decay & $[10^{-13} - 10^{-3}]$ & 0.00013226 & 0.00034166 & 0.00099821 & - & 0.0000019995 & 0.00048812 & 0.00079115 & -\\
\midrule
$\epsilon$ & \([0.001 - 1.0]\) & 0.95111 & 0.94049 & 0.64641 & - & 0.75604 & 0.82622 & 0.60167 & -\\
$\gamma$ & \([0.001 - 0.5]\) & 0.27672 & 0.085026 & 0.1858 & - & 0.39086 & 0.38164 & 0.029949 & - \\
$\beta$ & \([-1.0 - 1.0]\) & -0.64706 & 0.81494 & -0.35339 & - & -0.018684 & -0.98335 & 0.87145 & - 
 \\[2mm]
Graph convolution & \makecell[l]{$[$AntiSymNaiveAggr (ASNA),\\BoundedGCNConv (BGC),\\BoundedNaiveAggr (BNA)$]$} & BNA & BGC & ASNA & - & ASNA & BGC & BNA & -\\\vspace{2mm}
Attention & $[$True, False$]$ & True & True & True & - & True & True & True & -\\
\midrule
Runtime/Epoch (s) & - & 1.95 & 1.86 & 0.89 & - & 6.74 & 3.78 & 1.64 & - \\
\bottomrule
\end{tabular}
}
\end{table}

\begin{table}[h]
\centering
\small
\caption{Hyperparameters and their best values across tasks for A-DGN.}
\vspace{1mm}
\label{tab:integrated_hyperparams_adgn}
\resizebox{\textwidth}{!}{%
\begin{tabular}{llllllllll}
\toprule
\textbf{Hyperparameter} & \textbf{Search interval} & synth-2 & synth-4 & synth-6 & synth-8 & charge-2 & charge-4 & charge-6 & charge-8 \\
\midrule
Number of layers & $[k - k+2]$ & 4 & 4 & - & - & 3 & 6 & - & - \\
Hidden dimension & $[16 - 128]$ & 111 & 96 & - & - & 103 & 32 & - & - \\
Learning rate & $[10^{-3} - 10^{-2}]$ & 0.007139 & 0.0052292 & - & - & 0.0041737 & 0.0098753 & - & - \\
Weight decay & $[10^{-13} - 10^{-3}]$ & 0.000010932 & 0.00063737 & - & - & 6.3409e-7 & 0.00082609 & - & - \\
 \\
\midrule
$\epsilon$ & $[0.001 - 0.5]$ & 0.47645 & 0.088879 & - & - & 0.49636 & 0.35951 & - & -
 \\
$\gamma$ & $[0.001 - 0.5]$ & 0.31699 & 0.27401 & - & - & 0.1865 & 0.29602 & - & -
 \\
Graph convolution & $[\text{NaiveAggr}, \text{GCN}]$ & GCN & GCN & - & - & GCN & GCN & - & -
 \\
\midrule
Runtime/Epoch (s) & - & 1.68 & 0.26 & - & - & 6.29 & 2.13 & - & - \\
\bottomrule
\end{tabular}
}
\end{table}

\begin{table}[h]
\centering
\small
\caption{Hyperparameters and their best values across tasks for GraphCON.}\vspace{1mm}
\label{tab:integrated_hyperparams_graphcon}
\resizebox{\textwidth}{!}{%
\begin{tabular}{llllllllll}
\toprule
\textbf{Hyperparameter} & \textbf{Search interval} & synth-2 & synth-4 & synth-6 & synth-8 & charge-2 & charge-4 & charge-6 & charge-8 \\
\midrule
Number of layers & $[k - k+2]$ & 4 & 5 & - & - & 4 & 5 & - & - \\
Hidden dimension & $[16 - 128]$ & 108 & 80 & - & - & 85 & 99 & - & - \\
Learning rate & $[10^{-3} - 10^{-2}]$ & 0.0056647 & 0.0071248 & - & - & 0.0068252 & 0.0086757 & - & - \\
Weight decay & $[10^{-13} - 10^{-3}]$ & 0.00047368 & 0.00018787 & - & - & 0.00058396 & 0.000039939 & - & - \\
\midrule
$\epsilon$ & $[0.001 - 1.0]$ & 0.84399 & 0.24835 & - & - & 0.86125 & 0.12498 & - & - \\
\midrule
Runtime/Epoch (s) & - & 1.80 & 0.10 & - & - & 6.25 & 2.98 & - & - \\
\bottomrule
\end{tabular}
}
\end{table}

\begin{table}[h]
\centering
\small
\caption{Hyperparameters and their best values across tasks for GCNII.\label{tab:integrated_hyperparams_gcn2}}
\vspace{1mm}
\resizebox{\textwidth}{!}{%
\begin{tabular}{llllllllll}
\toprule
\textbf{Hyperparameter} & \textbf{Search interval} & synth-2 & synth-4 & synth-6 & synth-8 & charge-2 & charge-4 & charge-6 & charge-8 \\
\midrule
Number of layers & $[k - k+2]$ & 4 & 6 & 7 & - & 4 & 5 & 8 & -
 \\
Hidden dimension & $[16 - 128]$ & 32 & 63 & 43 & - & 168 & 149 & 16 & - \\
Learning rate & $[10^{-3} - 10^{-2}]$ & 0.0049859 & 0.0021435 & 0.0082665 & - & 0.0018015 & 0.0012101 & 0.004821 & -\\
Weight decay & $[10^{-13} - 10^{-3}]$ & 0.00033804 & 0.00056113 & 0.00056882 & - & 0.00006297 & 0.000708 & 0.00082622 & - \\
\midrule
$\alpha$ & $[0.0 - 0.9]$ & 0.3413 & 0.1743 & 0.098132 & - & 0.56735 & 0.1486 & 0.8484 & -\\
\midrule
Runtime/Epoch (s) & - & 1.71 & 1.81 & 1.17 & - & 6.76 & 5.50 & 1.46 & - \\
\bottomrule
\end{tabular}
}
\end{table}

\begin{table}[h]
\centering
\small
\caption{Hyperparameters and their best values across tasks for GIN.}
\vspace{1mm}
\label{tab:integrated_hyperparams_gin}
\resizebox{\textwidth}{!}{%
\begin{tabular}{llllllllll}
\toprule
\textbf{Hyperparameter} & \textbf{Search interval} & synth-2 & synth-4 & synth-6 & synth-8 & charge-2 & charge-4 & charge-6 & charge-8 \\
\midrule
Number of layers & $[k - k+2]$ & 3 & 6 & 8 & - & 3 & 6 & 8 & -\\
Hidden dimension & $[16 - 128]$ & 54 & 22 & 29 & - & 80 & 17 & 26 & -\\
Learning rate & $[10^{-3} - 10^{-2}]$ & 0.0051168 & 0.0015522 & 0.0057223 & - & 0.0072599 & 0.0025787 & 0.0033032 & -\\
Weight decay & $[10^{-13} - 10^{-3}]$ & 0.00056302 & 0.00042337 & 0.00068561 & - & 0.00020653 & 0.00037468 & 0.0000010491 & -\\
\midrule
Runtime/Epoch (s) & - & 1.69 & 1.63 & 0.56 & 0.78 & 6.52 & 6.74 & 2.05 & - \\
\bottomrule
\end{tabular}
}
\end{table}

\begin{table}[h]
\centering
\small
\caption{Hyperparameters and their best values across tasks for GCN.}
\vspace{1mm}
\label{tab:integrated_hyperparams_gcn}
\resizebox{\textwidth}{!}{%
\begin{tabular}{llllllllll}
\toprule
\textbf{Hyperparameter} & \textbf{Search interval} & synth-2 & synth-4 & synth-6 & synth-8 & charge-2 & charge-4 & charge-6 & charge-8 \\
\midrule
Number of layers & $[k - k+2]$ & 4 & 6 & - & - & 4 & 6 & - & - \\
Hidden dimension & $[16 - 128]$ & 20 & 24 & - & - & 114 & 19 & - & - \\
Learning rate & $[10^{-3} - 10^{-2}]$ & 0.0092326 & 0.0013939 & - & - & 0.0027285 & 0.0011325 & - & - \\
Weight decay & $[10^{-13} - 10^{-3}]$ & 0.000018562 & 0.000089966 & - & - & 0.000025323 & 0.00038465 & - & - \\
\midrule
Runtime/Epoch (s) & - & 1.58 & 1.67 & - & - & 6.53 & 6.40 & - & - \\
\bottomrule
\end{tabular}
}
\end{table}
% \clearpage
\begin{table}[h]
\centering
\small
\caption{{Hyperparameters and their best values across tasks for GRIT.}}\vspace{1mm}
\label{tab:integrated_hyperparams_grit}
\resizebox{\textwidth}{!}{%
\begin{tabular}{llllllllll}
\toprule
\textbf{Hyperparameter} & \textbf{Search interval} & synth-2 & synth-4 & synth-6 & synth-8 & charge-2 & charge-4 & charge-6 & charge-8 \\
\midrule
Number of layers & $[k - k+2]$ & 4 & 5 & - & - &  3 & 5 & - & - \\
Hidden dimension & $[16, 32, 64, 96, 128]$ & 128 & 64 & - & - & 128 & 96 & - & - \\
Learning rate & $[10^{-4} - 10^{-2}]$ & 0.0009 & 0.0004285 & - & - & 0.00097063 & 0.00017332 & - & - \\
Weight decay & $[10^{-13} - 10^{-3}]$ & 0.000062203 & 0.00066628 & - & - & 0.00099632 & 0.00097794 & - & - \\
\midrule
Attention dropout & $[0 - 0.5]$ & 0.22791 & 0.12143 & - & - & 0.0022844 & 0.16741 & - & - \\
Number of attention heads &  $[2]$ & 2 & 2 & - & - & 2 & 2 & - & - \\
\midrule
Runtime/Epoch (s) & - & 1.2 & 212.0 & - & - & 10.33 & 871.00 & - & - \\
\bottomrule
\end{tabular}
}
\end{table}
\clearpage

\section{Code}
\label{app:code}
The code that contains the core functionalities involving TRIP and GRIP generation can be found in \href{https://github.com/ferranhernandezc/graph-grip}{https://github.com/ferranhernandezc/graph-grip}. Using this code the experiments on Sections \ref{sec:tripgrip} and \ref{sb-sct:curvature} can be reproduced. To reproduce the other experimental results it is necessary to install the mentioned code, and then use \href{https://github.com/ferranhernandezc/ECHO-grip}{https://github.com/ferranhernandezc/ECHO-grip}. The repositories contain further instructions.

\section{Disclosure of LLM usage}
\label{app:llm}
LLMs were used throughout the development of this project as a supporting tool for coding, which consisted of: reviewing human written code, writing of some unit tests and coding scripts to generate plots and run experiments.

%% file: checklist.tex
%%% END INSTRUCTIONS %%%

\begin{enumerate}

\item {\bf Claims}
    \item[] Question: Do the main claims made in the abstract and introduction accurately reflect the paper's contributions and scope?
    \item[] Answer: \answerYes{} % Replace by \answerYes{}, \answerNo{}, or \answerNA{}.
    \item[] Justification: The claims made in the abstract correspond to sections 3 through 6.
    \item[] Guidelines:
    \begin{itemize}
        \item The answer \answerNA{} means that the abstract and introduction do not include the claims made in the paper.
        \item The abstract and/or introduction should clearly state the claims made, including the contributions made in the paper and important assumptions and limitations. A \answerNo{} or \answerNA{} answer to this question will not be perceived well by the reviewers. 
        \item The claims made should match theoretical and experimental results, and reflect how much the results can be expected to generalize to other settings. 
        \item It is fine to include aspirational goals as motivation as long as it is clear that these goals are not attained by the paper. 
    \end{itemize}

\item {\bf Limitations}
    \item[] Question: Does the paper discuss the limitations of the work performed by the authors?
    \item[] Answer: \answerYes{} % Replace by \answerYes{}, \answerNo{}, or \answerNA{}.
    \item[] Justification: Appendix \ref{app:limitations} discusses the limitations of the proposed work.
    \item[] Guidelines:
    \begin{itemize}
        \item The answer \answerNA{} means that the paper has no limitation while the answer \answerNo{} means that the paper has limitations, but those are not discussed in the paper. 
        \item The authors are encouraged to create a separate ``Limitations'' section in their paper.
        \item The paper should point out any strong assumptions and how robust the results are to violations of these assumptions (e.g., independence assumptions, noiseless settings, model well-specification, asymptotic approximations only holding locally). The authors should reflect on how these assumptions might be violated in practice and what the implications would be.
        \item The authors should reflect on the scope of the claims made, e.g., if the approach was only tested on a few datasets or with a few runs. In general, empirical results often depend on implicit assumptions, which should be articulated.
        \item The authors should reflect on the factors that influence the performance of the approach. For example, a facial recognition algorithm may perform poorly when image resolution is low or images are taken in low lighting. Or a speech-to-text system might not be used reliably to provide closed captions for online lectures because it fails to handle technical jargon.
        \item The authors should discuss the computational efficiency of the proposed algorithms and how they scale with dataset size.
        \item If applicable, the authors should discuss possible limitations of their approach to address problems of privacy and fairness.
        \item While the authors might fear that complete honesty about limitations might be used by reviewers as grounds for rejection, a worse outcome might be that reviewers discover limitations that aren't acknowledged in the paper. The authors should use their best judgment and recognize that individual actions in favor of transparency play an important role in developing norms that preserve the integrity of the community. Reviewers will be specifically instructed to not penalize honesty concerning limitations.
    \end{itemize}

\item {\bf Theory assumptions and proofs}
    \item[] Question: For each theoretical result, does the paper provide the full set of assumptions and a complete (and correct) proof?
    \item[] Answer: \answerYes{} % Replace by \answerYes{}, \answerNo{}, or \answerNA{}.
    \item[] Justification: In Appendix \ref{app:proofs} full proofs and hypotheses are given.
    \item[] Guidelines:
    \begin{itemize}
        \item The answer \answerNA{} means that the paper does not include theoretical results. 
        \item All the theorems, formulas, and proofs in the paper should be numbered and cross-referenced.
        \item All assumptions should be clearly stated or referenced in the statement of any theorems.
        \item The proofs can either appear in the main paper or the supplemental material, but if they appear in the supplemental material, the authors are encouraged to provide a short proof sketch to provide intuition. 
        \item Inversely, any informal proof provided in the core of the paper should be complemented by formal proofs provided in appendix or supplemental material.
        \item Theorems and Lemmas that the proof relies upon should be properly referenced. 
    \end{itemize}

    \item {\bf Experimental result reproducibility}
    \item[] Question: Does the paper fully disclose all the information needed to reproduce the main experimental results of the paper to the extent that it affects the main claims and/or conclusions of the paper (regardless of whether the code and data are provided or not)?
    \item[] Answer: \answerYes{} % Replace by \answerYes{}, \answerNo{}, or \answerNA{}.
    \item[] Justification: All the code  is linked in Appendix \ref{app:code}
    \item[] Guidelines:
    \begin{itemize}
        \item The answer \answerNA{} means that the paper does not include experiments.
        \item If the paper includes experiments, a \answerNo{} answer to this question will not be perceived well by the reviewers: Making the paper reproducible is important, regardless of whether the code and data are provided or not.
        \item If the contribution is a dataset and\slash or model, the authors should describe the steps taken to make their results reproducible or verifiable. 
        \item Depending on the contribution, reproducibility can be accomplished in various ways. For example, if the contribution is a novel architecture, describing the architecture fully might suffice, or if the contribution is a specific model and empirical evaluation, it may be necessary to either make it possible for others to replicate the model with the same dataset, or provide access to the model. In general. releasing code and data is often one good way to accomplish this, but reproducibility can also be provided via detailed instructions for how to replicate the results, access to a hosted model (e.g., in the case of a large language model), releasing of a model checkpoint, or other means that are appropriate to the research performed.
        \item While NeurIPS does not require releasing code, the conference does require all submissions to provide some reasonable avenue for reproducibility, which may depend on the nature of the contribution. For example
        \begin{enumerate}
            \item If the contribution is primarily a new algorithm, the paper should make it clear how to reproduce that algorithm.
            \item If the contribution is primarily a new model architecture, the paper should describe the architecture clearly and fully.
            \item If the contribution is a new model (e.g., a large language model), then there should either be a way to access this model for reproducing the results or a way to reproduce the model (e.g., with an open-source dataset or instructions for how to construct the dataset).
            \item We recognize that reproducibility may be tricky in some cases, in which case authors are welcome to describe the particular way they provide for reproducibility. In the case of closed-source models, it may be that access to the model is limited in some way (e.g., to registered users), but it should be possible for other researchers to have some path to reproducing or verifying the results.
        \end{enumerate}
    \end{itemize}

\item {\bf Open access to data and code}
    \item[] Question: Does the paper provide open access to the data and code, with sufficient instructions to faithfully reproduce the main experimental results, as described in supplemental material?
    \item[] Answer: \answerYes{} % Replace by \answerYes{}, \answerNo{}, or \answerNA{}.
    \item[] Justification: Links to the anonymized code can be found in Appendix \ref{app:code}. In the repositories there are instructions on how to obtain the results described on the paper.
    \item[] Guidelines:
    \begin{itemize}
        \item The answer \answerNA{} means that paper does not include experiments requiring code.
        \item Please see the NeurIPS code and data submission guidelines (\url{https://neurips.cc/public/guides/CodeSubmissionPolicy}) for more details.
        \item While we encourage the release of code and data, we understand that this might not be possible, so \answerNo{} is an acceptable answer. Papers cannot be rejected simply for not including code, unless this is central to the contribution (e.g., for a new open-source benchmark).
        \item The instructions should contain the exact command and environment needed to run to reproduce the results. See the NeurIPS code and data submission guidelines (\url{https://neurips.cc/public/guides/CodeSubmissionPolicy}) for more details.
        \item The authors should provide instructions on data access and preparation, including how to access the raw data, preprocessed data, intermediate data, and generated data, etc.
        \item The authors should provide scripts to reproduce all experimental results for the new proposed method and baselines. If only a subset of experiments are reproducible, they should state which ones are omitted from the script and why.
        \item At submission time, to preserve anonymity, the authors should release anonymized versions (if applicable).
        \item Providing as much information as possible in supplemental material (appended to the paper) is recommended, but including URLs to data and code is permitted.
    \end{itemize}

\item {\bf Experimental setting/details}
    \item[] Question: Does the paper specify all the training and test details (e.g., data splits, hyperparameters, how they were chosen, type of optimizer) necessary to understand the results?
    \item[] Answer: \answerYes{} % Replace by \answerYes{}, \answerNo{}, or \answerNA{}.
    \item[] Justification: All the hyperparameter details and experimental setup is detailed either in the main paper or in Appendix \ref{app:hyper-param}.
    \item[] Guidelines:
    \begin{itemize}
        \item The answer \answerNA{} means that the paper does not include experiments.
        \item The experimental setting should be presented in the core of the paper to a level of detail that is necessary to appreciate the results and make sense of them.
        \item The full details can be provided either with the code, in appendix, or as supplemental material.
    \end{itemize}

\item {\bf Experiment statistical significance}
    \item[] Question: Does the paper report error bars suitably and correctly defined or other appropriate information about the statistical significance of the experiments?
    \item[] Answer: \answerYes{} % Replace by \answerYes{}, \answerNo{}, or \answerNA{}.
    \item[] Justification: All appropiate results are reported with standard deviations using cross-validation over various splits.
    \item[] Guidelines:
    \begin{itemize}
        \item The answer \answerNA{} means that the paper does not include experiments.
        \item The authors should answer \answerYes{} if the results are accompanied by error bars, confidence intervals, or statistical significance tests, at least for the experiments that support the main claims of the paper.
        \item The factors of variability that the error bars are capturing should be clearly stated (for example, train/test split, initialization, random drawing of some parameter, or overall run with given experimental conditions).
        \item The method for calculating the error bars should be explained (closed form formula, call to a library function, bootstrap, etc.)
        \item The assumptions made should be given (e.g., Normally distributed errors).
        \item It should be clear whether the error bar is the standard deviation or the standard error of the mean.
        \item It is OK to report 1-sigma error bars, but one should state it. The authors should preferably report a 2-sigma error bar than state that they have a 96\% CI, if the hypothesis of Normality of errors is not verified.
        \item For asymmetric distributions, the authors should be careful not to show in tables or figures symmetric error bars that would yield results that are out of range (e.g., negative error rates).
        \item If error bars are reported in tables or plots, the authors should explain in the text how they were calculated and reference the corresponding figures or tables in the text.
    \end{itemize}

\item {\bf Experiments compute resources}
    \item[] Question: For each experiment, does the paper provide sufficient information on the computer resources (type of compute workers, memory, time of execution) needed to reproduce the experiments?
    \item[] Answer: \answerYes{} % Replace by \answerYes{}, \answerNo{}, or \answerNA{}.
    \item[] Justification: All the details are given in Appendix \ref{app:hyper-param}.
    \item[] Guidelines:
    \begin{itemize}
        \item The answer \answerNA{} means that the paper does not include experiments.
        \item The paper should indicate the type of compute workers CPU or GPU, internal cluster, or cloud provider, including relevant memory and storage.
        \item The paper should provide the amount of compute required for each of the individual experimental runs as well as estimate the total compute. 
        \item The paper should disclose whether the full research project required more compute than the experiments reported in the paper (e.g., preliminary or failed experiments that didn't make it into the paper). 
    \end{itemize}
    
\item {\bf Code of ethics}
    \item[] Question: Does the research conducted in the paper conform, in every respect, with the NeurIPS Code of Ethics \url{https://neurips.cc/public/EthicsGuidelines}?
    \item[] Answer: \answerYes{} % Replace by \answerYes{}, \answerNo{}, or \answerNA{}.
    \item[] Justification: The authors have read and abided by the Ethics Guidelines.
    \item[] Guidelines:
    \begin{itemize}
        \item The answer \answerNA{} means that the authors have not reviewed the NeurIPS Code of Ethics.
        \item If the authors answer \answerNo, they should explain the special circumstances that require a deviation from the Code of Ethics.
        \item The authors should make sure to preserve anonymity (e.g., if there is a special consideration due to laws or regulations in their jurisdiction).
    \end{itemize}

\item {\bf Broader impacts}
    \item[] Question: Does the paper discuss both potential positive societal impacts and negative societal impacts of the work performed?
    \item[] Answer: \answerNA{} % Replace by \answerYes{}, \answerNo{}, or \answerNA{}.
    \item[] Justification: The current work focuses on general and theoretic properties of Graph Neural Networks. There is no direct path to any potential societal impact on the work performed.
    \item[] Guidelines:
    \begin{itemize}
        \item The answer \answerNA{} means that there is no societal impact of the work performed.
        \item If the authors answer \answerNA{} or \answerNo, they should explain why their work has no societal impact or why the paper does not address societal impact.
        \item Examples of negative societal impacts include potential malicious or unintended uses (e.g., disinformation, generating fake profiles, surveillance), fairness considerations (e.g., deployment of technologies that could make decisions that unfairly impact specific groups), privacy considerations, and security considerations.
        \item The conference expects that many papers will be foundational research and not tied to particular applications, let alone deployments. However, if there is a direct path to any negative applications, the authors should point it out. For example, it is legitimate to point out that an improvement in the quality of generative models could be used to generate Deepfakes for disinformation. On the other hand, it is not needed to point out that a generic algorithm for optimizing neural networks could enable people to train models that generate Deepfakes faster.
        \item The authors should consider possible harms that could arise when the technology is being used as intended and functioning correctly, harms that could arise when the technology is being used as intended but gives incorrect results, and harms following from (intentional or unintentional) misuse of the technology.
        \item If there are negative societal impacts, the authors could also discuss possible mitigation strategies (e.g., gated release of models, providing defenses in addition to attacks, mechanisms for monitoring misuse, mechanisms to monitor how a system learns from feedback over time, improving the efficiency and accessibility of ML).
    \end{itemize}
    
\item {\bf Safeguards}
    \item[] Question: Does the paper describe safeguards that have been put in place for responsible release of data or models that have a high risk for misuse (e.g., pre-trained language models, image generators, or scraped datasets)?
    \item[] Answer: \answerNA{} % Replace by \answerYes{}, \answerNo{}, or \answerNA{}.
    \item[] Justification: The paper poses no such risk as it does not contain any models or data that have high risk for misuse.
    \item[] Guidelines:
    \begin{itemize}
        \item The answer \answerNA{} means that the paper poses no such risks.
        \item Released models that have a high risk for misuse or dual-use should be released with necessary safeguards to allow for controlled use of the model, for example by requiring that users adhere to usage guidelines or restrictions to access the model or implementing safety filters. 
        \item Datasets that have been scraped from the Internet could pose safety risks. The authors should describe how they avoided releasing unsafe images.
        \item We recognize that providing effective safeguards is challenging, and many papers do not require this, but we encourage authors to take this into account and make a best faith effort.
    \end{itemize}

\item {\bf Licenses for existing assets}
    \item[] Question: Are the creators or original owners of assets (e.g., code, data, models), used in the paper, properly credited and are the license and terms of use explicitly mentioned and properly respected?
    \item[] Answer: \answerYes{} % Replace by \answerYes{}, \answerNo{}, or \answerNA{}.
    \item[] Justification: The datasets used are all properly cited in this work.
    \item[] Guidelines:
    \begin{itemize}
        \item The answer \answerNA{} means that the paper does not use existing assets.
        \item The authors should cite the original paper that produced the code package or dataset.
        \item The authors should state which version of the asset is used and, if possible, include a URL.
        \item The name of the license (e.g., CC-BY 4.0) should be included for each asset.
        \item For scraped data from a particular source (e.g., website), the copyright and terms of service of that source should be provided.
        \item If assets are released, the license, copyright information, and terms of use in the package should be provided. For popular datasets, \url{paperswithcode.com/datasets} has curated licenses for some datasets. Their licensing guide can help determine the license of a dataset.
        \item For existing datasets that are re-packaged, both the original license and the license of the derived asset (if it has changed) should be provided.
        \item If this information is not available online, the authors are encouraged to reach out to the asset's creators.
    \end{itemize}

\item {\bf New assets}
    \item[] Question: Are new assets introduced in the paper well documented and is the documentation provided alongside the assets?
    \item[] Answer: \answerYes{} % Replace by \answerYes{}, \answerNo{}, or \answerNA{}.
    \item[] Justification: All the code released contains sufficient documentation to be used.
    \item[] Guidelines:
    \begin{itemize}
        \item The answer \answerNA{} means that the paper does not release new assets.
        \item Researchers should communicate the details of the dataset\slash code\slash model as part of their submissions via structured templates. This includes details about training, license, limitations, etc. 
        \item The paper should discuss whether and how consent was obtained from people whose asset is used.
        \item At submission time, remember to anonymize your assets (if applicable). You can either create an anonymized URL or include an anonymized zip file.
    \end{itemize}

\item {\bf Crowdsourcing and research with human subjects}
    \item[] Question: For crowdsourcing experiments and research with human subjects, does the paper include the full text of instructions given to participants and screenshots, if applicable, as well as details about compensation (if any)? 
    \item[] Answer: \answerNA{} % Replace by \answerYes{}, \answerNo{}, or \answerNA{}.
    \item[] Justification: No crowdsourcing or human subjects is involved.
    \item[] Guidelines:
    \begin{itemize}
        \item The answer \answerNA{} means that the paper does not involve crowdsourcing nor research with human subjects.
        \item Including this information in the supplemental material is fine, but if the main contribution of the paper involves human subjects, then as much detail as possible should be included in the main paper. 
        \item According to the NeurIPS Code of Ethics, workers involved in data collection, curation, or other labor should be paid at least the minimum wage in the country of the data collector. 
    \end{itemize}

\item {\bf Institutional review board (IRB) approvals or equivalent for research with human subjects}
    \item[] Question: Does the paper describe potential risks incurred by study participants, whether such risks were disclosed to the subjects, and whether Institutional Review Board (IRB) approvals (or an equivalent approval/review based on the requirements of your country or institution) were obtained?
    \item[] Answer: \answerNA{} % Replace by \answerYes{}, \answerNo{}, or \answerNA{}.
    \item[] Justification: No crowdsourcing or human subjects is involved.
    \item[] Guidelines:
    \begin{itemize}
        \item The answer \answerNA{} means that the paper does not involve crowdsourcing nor research with human subjects.
        \item Depending on the country in which research is conducted, IRB approval (or equivalent) may be required for any human subjects research. If you obtained IRB approval, you should clearly state this in the paper. 
        \item We recognize that the procedures for this may vary significantly between institutions and locations, and we expect authors to adhere to the NeurIPS Code of Ethics and the guidelines for their institution. 
        \item For initial submissions, do not include any information that would break anonymity (if applicable), such as the institution conducting the review.
    \end{itemize}

\item {\bf Declaration of LLM usage}
    \item[] Question: Does the paper describe the usage of LLMs if it is an important, original, or non-standard component of the core methods in this research? Note that if the LLM is used only for writing, editing, or formatting purposes and does \emph{not} impact the core methodology, scientific rigor, or originality of the research, declaration is not required.
    %this research? 
    \item[] Answer: \answerYes{} % Replace by \answerYes{}, \answerNo{}, or \answerNA{}.
    \item[] Justification: The use of LLMs is discussed in Appendix \ref{app:llm}
    \item[] Guidelines:
    \begin{itemize}
        \item The answer \answerNA{} means that the core method development in this research does not involve LLMs as any important, original, or non-standard components.
        \item Please refer to our LLM policy in the NeurIPS handbook for what should or should not be described.
    \end{itemize}

\end{enumerate}